\tikzstyle{automaton}=[
\tikzstyle{smallautomaton}=[
\tikzstyle{statename}=[
\tikzstyle{accset}=[
\tikzstyle{loop 30}=[in=15,out=45,loop,right]
\tikzstyle{loop 45}=[in=30,out=60,loop,right]
\tikzstyle{loop 135}=[in=120,out=150,loop,left]
\tikzstyle{loop -30}=[in=-45,out=-15,loop,right]
\tikzstyle{loop -45}=[in=-60,out=-30,loop,right]
\tikzstyle{loop -60}=[in=-75,out=-45,loop,right]
\tikzstyle{loop -120}=[in=-135,out=-105,loop,left]
\tikzstyle{loop -135}=[in=-150,out=-120,loop,left]
\definecolor{myYellow}{HTML}{FFCC00}
\definecolor{myOrange}{HTML}{FF9900}
\definecolor{myBlue}{HTML}{99CCFF}
\definecolor{yourBlue}{rgb}{0.6, 0.73, 0.89}
\tikzstyle{knoten}=[state,very thick,minimum size=1cm,font=\strut]
\tikzstyle{kanten}=[->,very thick,>=stealth']
\tikzstyle{myarrow} = [->, semithick, >=stealth, shorten >=1]
\tikzstyle{mybackarrow} = [<-, semithick, >=stealth, shorten <=1]
\tikzstyle{long myarrow} = [myarrow, shorten <= -2] %
\tikzstyle{light arrow} = [long myarrow, black!40]
\tikzstyle{fade out arrow} = [myarrow, path fading=south]
\tikzstyle{basic state} = [fill=black!10,  draw=black!10]
\tikzstyle{state} = [basic state, circle, inner sep=1pt, minimum size=7mm]
\tikzstyle{state'} = [basic state, circle, inner sep=1pt, minimum size=5.5mm, node font=\small, fill=black!5]
\tikzstyle{MCstate} = [basic state, rectangle, minimum size=7mm, rounded corners=3pt]
\tikzstyle{tiny state} = [basic state, circle, minimum size=12pt, inner sep=2pt]
\tikzstyle{label} = [font=\small]
\tikzstyle{mynode} = [rectangle, rounded corners=2, minimum height=6mm,inner ysep=0pt, minimum width=14mm]
\def\orcidID#1{\smash{\href{http://orcid.org/#1}{\protect\raisebox{-1.25pt}{\protect\includegraphics{ORCID_Color.eps}}}}}
\renewcommand{\subsubsection}[1]{\paragraph{#1.}}
\newcommand{\CPTable}{\mathrm{Pr}}
\newcommand{\distBN}{\ensuremath{\mathit{dist}_\BN}}
\newcommand{\BNsem}[1]{\ensuremath{\distBN(#1)}}
\newcommand{\dissectedBN}[3]{\textit{Dissect}(#1,#2,#3)}
\newcommand{\CutsetMC}[2]{\textit{CMC}(#1,#2)}
\newcommand{\Indep}{\ensuremath{\mathit{Indep}}}
\newcommand{\dsep}{\ensuremath{\mathit{d\text{-}sep}}}
\newcommand{\Extend}{\ensuremath{\mathit{Extend}}}
\newcommand{\Next}{\ensuremath{\mathit{Next}}}
\newcommand{\nextdisBN}[3]{\ensuremath{\Next(#1,#2,#3)}} %
\newcommand{\LongRunFreq}{\mathrm{lrf}}
\newcommand{\nocycle}{\ensuremath{{\not\circlearrowright}}}
\newcommand{\cB}{\ensuremath{\mathcal{B}}\xspace}
\newcommand{\cC}{\ensuremath{\mathcal{C}}\xspace}
\newcommand{\cD}{\ensuremath{\mathcal{D}}\xspace}
\newcommand{\cE}{\ensuremath{\mathcal{E}}\xspace}
\newcommand{\cG}{\ensuremath{\mathcal{G}}\xspace}
\newcommand{\cI}{\ensuremath{\mathcal{I}}\xspace}
\newcommand{\cM}{\ensuremath{\mathcal{M}}\xspace}
\newcommand{\cP}{\ensuremath{\mathcal{P}}\xspace}
\newcommand{\cS}{\ensuremath{\mathcal{S}}\xspace}
\newcommand{\cU}{\ensuremath{\mathcal{U}}\xspace}
\newcommand{\cV}{\ensuremath{\mathcal{V}}\xspace}
\newcommand{\cW}{\ensuremath{\mathcal{W}}\xspace}
\newcommand{\cX}{\ensuremath{\mathcal{X}}\xspace}
\newcommand{\cY}{\ensuremath{\mathcal{Y}}\xspace}
\newcommand{\cZ}{\ensuremath{\mathcal{Z}}\xspace}
\newcommand{\boldssf}[1]{{\normalfont\textbf{#1}}\xspace}
\newcommand{\bP}{\boldssf{P}}
\newcommand{\oset}[3][-.2ex]{%
  \mathrel{\mathop{#3}\limits^{
    \vbox to#1{\kern-2\ex@
    \hbox{$\scriptstyle#2$}\vss}}}}
\newcommand{\Pre}{\ensuremath{\mathit{Pre}}}
\newcommand{\Post}{\ensuremath{\mathit{Post}}}
\newcommand{\PostS}{\ensuremath{\Post^*}}
\newcommand{\Init}{\ensuremath{\mathit{Init}}}
\newcommand{\initCl}{\ensuremath{\mathit{Close}}}
\newcommand{\Dirac}{\mathit{Dirac}}
\newcommand{\Asg}{\ensuremath{\mathit{Asg}}}
\newcommand{\Dist}{\mathit{Dist}} %
\newcommand{\dist}{\ensuremath{\mu}\xspace}  %
\newcommand{\combi}{\ensuremath{\otimes}} %
\newcommand{\semshape}[1]{\text{\textnormal{{\scshape #1}}}\xspace}
\newcommand{\CPT}{\semshape{Cpt}}
\newcommand{\CPTI}{{\CPT{}\text{-}\cI}\xspace}
\newcommand{\wCPT}{\semshape{wCpt}}
\newcommand{\wCPTI}{{\wCPT{}\text{-}\cI}\xspace}
\newcommand{\BN}{\semshape{BN}}
\newcommand{\scGBN}{\semshape{scGBN}}
\newcommand{\Cut}{\semshape{MC}}
\newcommand{\CutC}{{\Cut{\text{-}}\cC}\xspace}
\newcommand{\Lim}{\semshape{Lim}}
\newcommand{\LimC}{{\Lim{\text{-}}\cC}\xspace}
\newcommand{\LimAvg}{\semshape{LimAvg}}
\newcommand{\LimAvgC}{{\LimAvg{\text{-}}\cC}\xspace}
\newcommand{\init}{\iota}
\newcommand{\boolvalstyle}[1]{\text{{\normalfont\sffamily \resizebox{0.8\width}{!}{#1}}}\xspace}
\newcommand{\T}{\boolvalstyle{T}}
\newcommand{\F}{\boolvalstyle{F}}
\newcommand{\Nat}{\mathbb{N}}
\newcommand{\Bool}{\mathbb{B}}
\newcommand{\ssep}{\ {:}\ }
\newcommand{\semantics}[1]{\llbracket #1 \rrbracket}
\newcommand{\tuple}[1]{\langle #1 \rangle}
\newcommand{\bigtuple}[1]{\bigl\langle #1 \bigr\rangle}
\newcommand{\CUTsem}[2]{\ensuremath{\semantics{#1}_{ \Cut\text{-}#2} }\xspace}
\let\save@mathaccent\mathaccent
\newcommand*\if@single[3]{%
  \setbox0\hbox{${\mathaccent"0362{#1}}^H$}%
  \setbox2\hbox{${\mathaccent"0362{\kern0pt#1}}^H$}%
  \ifdim\ht0=\ht2 #3\else #2\fi
  }
\newcommand*\rel@kern[1]{\kern#1\dimexpr\macc@kerna}
\newcommand*\widebar[1]{\@ifnextchar^{{\wide@bar{#1}{0}}}{\wide@bar{#1}{1}}}
\newcommand*\wide@bar[2]{\if@single{#1}{\wide@bar@{#1}{#2}{1}}{\wide@bar@{#1}{#2}{2}}}
\newcommand*\wide@bar@[3]{%
  \begingroup
  \def\mathaccent##1##2{%
    \let\mathaccent\save@mathaccent
    \if#32 \let\macc@nucleus\first@char \fi
    \setbox\z@\hbox{$\macc@style{\macc@nucleus}_{}$}%
    \setbox\tw@\hbox{$\macc@style{\macc@nucleus}{}_{}$}%
    \dimen@\wd\tw@
    \advance\dimen@-\wd\z@
    \divide\dimen@ 3
    \@tempdima\wd\tw@
    \advance\@tempdima-\scriptspace
    \divide\@tempdima 10
    \advance\dimen@-\@tempdima
    \ifdim\dimen@>\z@ \dimen@0pt\fi
    \rel@kern{0.6}\kern-\dimen@
    \if#31
      \overline{\rel@kern{-0.6}\kern\dimen@\macc@nucleus\rel@kern{0.4}\kern\dimen@}%
      \advance\dimen@0.4\dimexpr\macc@kerna
      \let\final@kern#2%
      \ifdim\dimen@<\z@ \let\final@kern1\fi
      \if\final@kern1 \kern-\dimen@\fi
    \else
      \overline{\rel@kern{-0.6}\kern\dimen@#1}%
    \fi
  }%
  \macc@depth\@ne
  \let\math@bgroup\@empty \let\math@egroup\macc@set@skewchar
  \mathsurround\z@ \frozen@everymath{\mathgroup\macc@group\relax}%
  \macc@set@skewchar\relax
  \let\mathaccentV\macc@nested@a
  \if#31
    \macc@nested@a\relax111{#1}%
  \else
    \def\gobble@till@marker##1\endmarker{}%
    \futurelet\first@char\gobble@till@marker#1\endmarker
    \ifcat\noexpand\first@char A\else
      \def\first@char{}%
    \fi
    \macc@nested@a\relax111{\first@char}%
  \fi
  \endgroup
}
\newcommand{\mnot}[1]{\ensuremath{\widebar{#1}}}
\begin{document}
\title{On the Foundations of Cycles\\ in Bayesian Networks\thanks{This work was partially supported by the DFG in projects TRR 248 (CPEC, see {\footnotesize\url{https://perspicuous-computing.science}}, project ID 389792660) and EXC 2050/1 (CeTI, project ID 390696704, as part of Germany's Excellence Strategy), and the Key-Area Research and Development Program Grant 2018B010107004 of Guangdong Province.}
}
\author{
Christel Baier\inst{1} %
\and
Clemens Dubslaff\inst{1} %
\and
Holger Hermanns\inst{2,3} %
\and
Nikolai Käfer\inst{1} %
}
\authorrunning{C. Baier et al.}
\institute{TU Dresden, Dresden, Germany \and Saarland University, Saarbrücken, Germany \and Institute of Intelligent Software, Guangzhou, China
}
\maketitle              %
\begin{abstract}
	Bayesian networks (BNs) are a probabilistic graphical model widely used for representing expert knowledge and reasoning under uncertainty.
	Traditionally, they are based on directed acyclic graphs that capture dependencies between random variables. 
	However, directed cycles can naturally arise when 
	cross-dependencies between random variables exist, e.g., for modeling feedback loops.
	Existing methods to deal with such cross-dependencies usually rely on reductions to 
	BNs without cycles.
	These approaches are fragile to generalize, since their justifications are 
	intermingled with additional knowledge about the application context.
	In this paper, we present a foundational study regarding semantics for cyclic BNs
	that are generic and conservatively extend the cycle-free setting.
	First, we propose constraint-based semantics that specify requirements for full joint distributions over a BN to be consistent with the local conditional probabilities and independencies.
	Second, two kinds of limit semantics that formalize infinite unfolding approaches are introduced and shown to be computable by a Markov chain construction.
\end{abstract}

\section{Introduction}

A \emph{Bayesian network} (BN) is a probabilistic graphical model representing
a set of random variables and their conditional dependencies. 
BNs are ubiquitous across many fields where reasoning under uncertainties is 
of interest~\cite{JenNie07a}. 
Specifically, a BN is a directed acyclic graph with the random variables as nodes and edges manifesting conditional dependencies, quantified by \emph{conditional probability tables} (CPTs). 
The probability of any random variable can then be deduced by the CPT entries along all 
its predecessors. Here, these probabilities are independent of all variables that are no 
(direct or transitive) predecessors in the graph. Acyclicity is hence
crucial and commonly assumed to be rooted in some sort of causality~\cite{Pearl09}. 
A classical use of BNs is in expert systems~\cite{Pea90a} where BNs aggregate statistical 
data obtained by several independent studies. 
In the medical domain, e.g., they can capture the correlation of certain symptoms, diseases, and human factors~\cite{JenAndKja87a,LauSpi88a,RobOwe97a}.

\begin{figure}[t]
	\centering
	\begin{tikzpicture}
		\node (A) at (0,0) [state] {$X$};
		\node (cpt A) at (-2,0) {$\setlength\arraycolsep{5pt}
			\begin{array}{c|cc}
				Y  & X\!{=}\T & X\!{=}\F \\
			\hline
				\F & s_1      & 1-s_1    \\
				\T & s_2      & 1-s_2
			\end{array}$};
		\node (B) at (1.5,0) [state] {$Y$}
			edge [mybackarrow, bend right] (A)
			edge [myarrow, bend left] (A);
		\node (cpt B) at (3.5,0) {$\setlength\arraycolsep{5pt}
			\begin{array}{c|cc}
				X  & Y\!{=}\T & Y\!{=}\F \\
			\hline
				\F & t_1      & 1-t_1    \\
				\T & t_2      & 1-t_2
			\end{array}$};
	\end{tikzpicture}
	\caption[]{\label{fig:cyclicBN}A cyclic GBN with CPTs for $X$ and $Y$}
\end{figure}
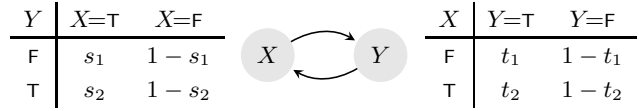

Imagine for instance an expert system for supporting diagnosis of
Covid-19, harvesting multiple clinical studies. One study might have
investigated the percentage of patients who have been diagnosed with
fever also having Covid-19, while another study in turn might have
investigated among the Covid-19 patients whether they have fever,
too. Clearly, both studies investigate the dependency between fever
and Covid-19, but under different conditions. Fever may weaken the immune system
and could increase the risk of a Covid-19 infection, while
Covid-19 itself has fever as a symptom.
In case there is uniform knowledge about ``which symptom was first''
in each of the constituent studies, then \emph{dynamic Bayesian
networks} (DBNs)~\cite{Mur02a} could be used as a model for the expert
system, breaking the interdependence of fever and Covid-19 through a
precedence relation. However, this implies either to rely only on studies
where these temporal dependencies are clearly identified or to introduce
an artificial notion of time that might lead to spurious results~\cite{Motzek2017}.
A naive encoding into the BN framework
always yields a graph structure
that contains cycles, as is the case in our small example
shown in Fig.\,\ref{fig:cyclicBN} where $X$ and $Y$ stand for the random variables
of diagnosing Covid-19 and fever, respectively. 

That cycles might be unavoidable has already been
observed in seminal papers such as~\cite{Pea90a,LauSpi88a}. %
But acyclicity is crucial for computing the joint probability distribution 
of a BN, and thereby is a prerequisite for, e.g., routine inference tasks.
Existing literature that considers cycles in BNs mainly recommends reducing
questions on the probability values to properties in acyclic BNs.
For instance, in \cite{JenAndKja87a} nodes are collapsed towards removing cycles,
while \cite{Pea90a} suggests to condition on each value combination on a cycle,
generating a decomposition into tree-like BNs and then averaging over the results
to replace cycles. Sometimes, application-specific methods that restructure
the cyclic BN towards an acyclic BN by introducing additional nodes~\cite{RobOwe97a,Han18a}
or by unrolling cycles up to a bounded depth~\cite{matthews2020cyclic,CasFerGar21a} 
have been reported to give satisfactory results. Other approaches either remove edges that have
less influence or reverse edges on cycles (see, e.g., \cite{JenNie07a}).
However, such approaches are highly application dependent and hinge on
knowledge about the context of the statistical data used to construct the BN.
Furthermore, as already pointed out by~\cite{TulNik05a}, they usually reduce 
the solution space of families of joint distributions to a single one,
or introduce solutions not consistent with the CPTs of the original cyclic BN.
While obviously many practitioners have stumbled on the problem how
to treat cycles in BNs and on the foundational question ``What is the meaning of a cyclic BN?'', 
there is very little work on the foundations of Bayesian reasoning with cycles.

In this paper, we approach this question by presenting general semantics for BNs
with cycles, together with algorithms to compute families of joint distributions for such BNs.
First, we investigate how the two main constituents of classical BNs, namely
consistency with the CPTs and independencies induced by the graph structure,
influence the joint distributions in the presence of cycles. 
This leads to \emph{constraints semantics} for cyclic BNs that comprise all those
joint distributions respecting the constraints, being either a single uniquely
defined one, none, or infinitely many distributions.
Second, we present semantics that formalize unfolding approaches and depend on the choice of a \emph{cutset}, a
set of random variables that break every cycle in a cyclic BN.
Intuitively, such cutsets form the seams along which feedback loops can be unraveled.
These semantics are defined in terms of the limit (or limit average) of a sequence of distributions at descending levels in the infinite unfolding of the BN.
We show that the same semantics can be defined using a Markov chain construction and subsequent long-run frequency analysis, which enables both precise computation of the semantics and
deep insights in the semantics' behavior.
Among others, an immediate result is that the family of distributions induced with respect to the limit semantics is always non-empty.
As we will argue, the limit semantics have 
obvious relations to a manifold of approaches that have appeared in the literature, 
yet they have not been spelled out and studied explicitly.

\subsection{Notation}

Let $\cV$ be a set of Boolean random variables%
\footnote{
	We use Boolean random variables for simplicity of representation,
	an extension of the proposed semantics over random variables with arbitrary finite state spaces is certainly possible.
}
over the domain $\Bool = \{\F, \T\}$.
We usually denote elements of $\cV$ by $X$, $Y\!$, or $Z$.
An \emph{assignment} over $\cV$ is a function $b\colon \cV \to \Bool$ which
we may specify through set notation, e.g., $b = \{X\!{=}\T, Y\!{=}\F\}$
for $b(X) = \T$ and $b(Y) = \F$,
or even more succinctly as $X\mnot{Y}$.
The set of all possible assignments over $\cV$ is denoted by $\Asg(\cV)$.
We write $b_{\cU}$ for the restriction of $b$ to a subset 
$\cU \subseteq \cV$, e.g., $b_{ \{X\} } = \{ X\!{=}\T \}$, and
may omit set braces, e.g.,\ $b_{X,Y} = b_{ \{X,Y\} }$.

A \emph{distribution} over a set $\cS$ is a function $\mu\colon \cS \to [0,1]$ where $\sum_{s \in \cS} \mu(s) = 1$. 
The set of all distributions over $\cS$ is denoted by $\Dist(\cS)$.
For $|\cS| = n$, $\mu$ will occasionally
be represented as a vector of size $n$ for some fixed order on $\cS$. %
In the following, we are mainly concerned with distributions over assignments, that is distributions 
$\mu \in \Dist(\Asg(\cV))$ for some set of random variables $\cV$.
Each such distribution~$\mu$ induces a probability measure (also called $\mu$) on $2^{\Asg(\cV)}$.
Thus, for a set of assignments $\phi \subseteq \Asg(\cV)$, we have
	$ \mu(\phi) = \sum_{b \in \phi} \mu(b)$. 
We are often interested in the probability of a \emph{partial assignment} $d \in \Asg(\cU)$ on a subset $\cU \subsetneq \cV$ of variables, 
which is given as the probability of the set of all full 
assignments $b \in \Asg(\cV)$ that agree with $d$ on $\cU$.
As a shorthand, we define
	\[ 
		\mu(d) 
		\ \coloneqq \
		\mu \bigl( \{ b \in \Asg(\cV) \ssep b_{\cU} = d \} \bigr)
		\ = \
		\sum_{\mathclap{\substack{b \in \Asg(\cV) \\ \text{s.t.\ } b_{ \cU} = d}}} \ 
		\mu(b).
	\]
The special case $\mu(X\!{=}\T)$ is called the \emph{marginal probability} of $X$.
The restriction of $\mu \in \Dist(\Asg(\cV))$ to $\cU$,  
denoted $\mu|_{\cU} \in \Dist(\Asg(\cU))$, is given by 
$\mu|_{\cU}(d) \coloneqq \mu(d)$.
For a set $\cW$ disjoint from $\cV$ and $\nu \in \Dist(\Asg(\cW))$, the \emph{product distribution} of $\mu$ and $\nu$ is given by 
$
	(\mu \combi \nu)(c) \coloneqq \mu(c_\cV) \cdot \nu(c_\cW)
$
for every $c \in \Asg(\cV \cup \cW)$.
$\mu $ is called a \emph{Dirac distribution} if 
$\mu(b) = 1$ for some assignment $b \in \Asg(\cV)$
and thus $\mu(c ) = 0$ for all other assignments $c \neq b$.
A Dirac distribution derived from a given assignment $b$ is denoted by $\Dirac(b)$.

\subsubsection{Graph Notations}
For a graph $\cG = \tuple{\cV, \cE}$ with nodes \cV and directed edges 
$\cE \subseteq \cV \times \cV$, we may represent an edge $(X,Y) \in \cE$
as $X \to Y$ if $\cE$ is clear from context.
$\Pre(X) \coloneqq \{Y\!\in \cV \ssep Y \to X\}$ denotes the set of 
\emph{parents} of a node $X\!\in \cV$, and 
$\PostS(X) \coloneqq \{Y\!\in \cV \ssep X \to \cdots \to Y\}$
is the set of nodes \emph{reachable} from $X$.
A node $X$ is called \emph{initial} if $\Pre(X) = \varnothing$,
and $\Init(\cG)$ is the set of all nodes initial in $\cG$.
A graph \cG is \emph{strongly connected} if each node in \cV is 
reachable from every other node.
A set of nodes \cD is a \emph{strongly connected component} (SCC) of \cG 
if all nodes in \cD can reach each other and \cD is not contained 
in another SCC, and a \emph{bottom SCC} (BSCC) if no node
in $\cV\setminus\cD$ can be reached from $\cD$.

\subsubsection{Markov Chains}\label{sec:MCs}

	A \emph{discrete-time Markov chain} (DTMC) is a tuple $\cM = \tuple{\cS, \bP}$ where \cS 
	is a finite set of states and $\bP\colon \cS \times \cS \to [0,1]$ a function such that 
	$\bP(s,\cdot) \in \Dist(\cS)$ for all states $s\in \cS$.
The underlying graph $\cG_\cM = \tuple{\cS, \cE}$ is defined by 
$\cE =\{ (s,t) \in \cS \times \cS \ssep \bP(s,t)>0\}$.
The transient distribution $\pi^\init_{n}\in \Dist(\cS)$ at step $n$ is defined through the 
probability $\pi^\init_{n}(s)$ to be in state $s$ after $n$ steps if starting with initial 
state distribution $\init$. It satisfies (in matrix-vector notation) $\pi^{\init}_n = \init \cdot \bP^n$. 
We are also interested in the long-run frequency of state occupancies when $n$ tends to infinity, 
defined as the Ces\`aro limit $\LongRunFreq^{\init}\colon \cS \to [0,1]$:
\begin{equation}\label{eq:lrf}
	\LongRunFreq^{\init}(s) 
	\ \coloneqq \, 
	\lim_{n \to \infty} \,
 	\frac{1}{n+1} \,
 	\sum^n_{i=0} \pi^{\init}_{n}(s) .
 	\tag{\scshape LRF}
\end{equation}
This limit always exists and corresponds to the long-run fraction of time spent in each state~\cite{KemSne1969}. 
The limit probability $\lim_{n \to \infty} \pi^{\init}_{n}$ is arguably more intuitive as a measure 
of the long-run behavior, but may not exist (due to periodicity). 
In case of existence, it agrees with the Ces\`aro limit $\LongRunFreq^{\init}$. 
If $\cG_\cM$ forms an SCC, the limit is independent of the choice of $\init$ and the 
superscript can be dropped. We denote this limit by $\LongRunFreq_\cM$.

\section{Generalized Bayesian Networks}

We introduce \emph{generalized Bayesian networks} (GBNs) as a BN model 
that does not impose acyclicity and comes with a distribution over initial nodes.

\begin{definition}[Generalized BN]\label{def:BN}
	A GBN $\cB$ is a tuple $\tuple{\cG, \cP, \init}$ where \vspace{-0.6em}
	\begin{itemize}
		\item $\cG = \tuple{\cV, \cE}$ is a directed graph with nodes \cV and an edge relation $\cE \subseteq \cV \times \cV$,
		\item $\cP$ is a function that maps all non-initial nodes 
			$X\!\in \cV{\setminus}\Init(\cG)$ paired with each of their parent assignments $b \in \Asg(\Pre(X))$ to a distribution \[\cP(X,b) \colon \Asg\bigl(\{X\}\bigl) \to [0,1],\] 
		\item $\init$ is a distribution over the assignments for the initial nodes 
			$\Init(\cG)$, i.e., $\init \in \Dist\bigl(\Asg(\Init(\cG))\bigl)$.
	\end{itemize}
\end{definition}%
The distributions $\cP(X,b)$ have the same role as the entries in a \emph{conditional probability table} 
(CPT) for $X$ in classical BNs:
they specify
the probability for $X\!{=}\T$ or $X\!{=}\F$ depending on the assignments of the predecessors of $X$.
To this end, for $X\!\in \cV{\setminus}\Init(\cG)$ and $b\in \Asg(\Pre(X))$, 
we also write  
$\CPTable(X\!{=}\T \mid b)$ for $\cP(X, b)(X\!{=}\T)$. 
In the literature, initial nodes are often assigned a marginal probability via a CPT as well,
assuming independence of all initial nodes. Differently, in our definition of GBNs,
it is possible to specify an arbitrary distribution $\init$ over all initial nodes.
If needed, \cP can be easily extended to initial nodes by setting 
$\cP(X,\varnothing) \coloneqq \init|_{\{X\}}$ for all $X\!\in \Init(\cG)$. 
Hence, classical BNs arise as a special instance of GBNs where the graph 
$\cG$ is acyclic and initial nodes are pairwise independent.
In that case, the CPTs given by $\cP$ are a compact representation 
of a single unique full joint distribution $\BNsem{\cB}$ over all random 
variables $X\!\in \cV$.   
For every assignment $b \in \Asg(\cV)$, we can compute $\BNsem{\cB}(b)$ by 
the so-called \emph{chain rule}:
\begin{equation}\label{BN}
	\BNsem{\cB}(b)
	\ \coloneqq\ 
	\init\bigl(b_{\Init(\cG)}\bigr) 
	\ \cdot \ 
	\prod_{\mathclap{ X \in \cV \setminus \Init(\cG) }} 
	\ \CPTable\bigl(b_X \mid b_{\Pre(X)}\bigr) .
	\tag{CR}
\end{equation}
In light of the semantics introduced later on, we define the \emph{standard BN-semantics} of an acyclic GBN \cB as the set $\semantics{\cB}_\BN \coloneqq \{\BNsem{\cB}\}$, and $\semantics{\cB}_\BN \coloneqq \varnothing$ if \cB contains cycles.

The distribution $\BNsem{\cB}$ satisfies two crucial properties:
First, it is consistent with the CPT entries given by \cP and the distribution $\init$,
and second, it observes the independencies encoded in the graph $\cG$.
In fact, those two properties are sufficient to uniquely characterize $\BNsem{\cB}$.
We briefly review the notion of independence and 
formally define CPT consistency later on in Section~\ref{sec:constraints}.

\subsubsection{Independence}

Any full joint probability distribution $\mu \in \Dist(\Asg(\cV))$ may induce
a number of conditional independencies among the random variables in $\cV$.
For $\cX$, $\cY$, and $\cZ$ disjoint subsets of $\cV$, the random variables in $\cX$ and $\cY$ are independent under $\mu$ given $\cZ$ if the conditional probability of each assignment over the nodes in $\cX$ 
given an assignment for $\cZ$ is unaffected by further conditioning on any assignment of $\cY$.
Formally, the set $\Indep(\mu)$ contains the triple $(\cX,\cY,\cZ)$
iff for all $a \in \Asg(\cX)$, $b \in \Asg(\cY)$, and $c \in \Asg(\cZ)$, we have
\[
    \mu(a \mid b, c)  =  \mu(a \mid c) 
    \quad \text{ or } \quad
    \mu(b,c) = 0.
\]
We also write $(\cX \perp \cY \mid \cZ)$ for $(\cX,\cY,\cZ) \in \Indep(\mu)$ and may omit the set brackets of $\cX$, $\cY$, and $\cZ$.

\subsubsection{d-separation}
For classical BNs, the graph topology encodes independencies that are necessarily satisfied by any full joint distribution regardless of the CPT entries. 
Given two random variables $X$ and $Y$ as well as a set of observed variables $\cZ$, then $X$ and $Y$ are conditionally independent given \cZ %
if the corresponding nodes in the graph are \emph{d-separated} given \cZ \cite{geiger1990d}.
To establish $d$-separation, all simple undirected paths\footnote{A path is simple if no node occurs twice in the path. ``Undirected'' in this context means that edges in either direction can occur along the path.} between $X$ and $Y$ need to be \emph{blocked} given $\cZ$.
Let $\mathbb{W}$ denote such a simple path 
$W_0, W_1, \dots, W_k$ with $W_0 = X$, $W_k = Y\!$, and either $W_i \to W_{i+1}$ or 
$W_i \leftarrow W_{i+1}$ for all $i<k$.
Then $\mathbb{W}$ is blocked given \cZ if and only if there exists an index $i$, $0<i<k$, such that one of the following two conditions holds:
(1) $W_i$ is in \cZ and is situated in a \emph{chain} or a \emph{fork} in $\mathbb{W}$, i.e.,
\begin{itemize}
	\item $W_{i-1} \to W_i \to W_{i+1}$ (forward chain)
	\item $W_{i-1} \leftarrow W_i \leftarrow W_{i+1}$ (backward chain) \quad and \quad $W_i \in \cZ$,
	\item $W_{i-1} \leftarrow W_i \to W_{i+1}$ (fork)
\end{itemize}
(2) $W_i$ is in a \emph{collider} and neither $W_i$ nor any descendant of $W_i$ is in $\cZ$, i.e.,
\begin{itemize}
	\item $W_{i-1} \to W_i \leftarrow W_{i+1}$ (collider) \quad and \quad 
	$\PostS(W_i) \cap \cZ = \varnothing$.
\end{itemize}
Two sets of nodes \cX and \cY are $d$-separated given a third set \cZ if for each $X\!\in \cX$ and $Y\!\in \cY$, $X$ and $Y$ are $d$-separated given $\cZ$.
Notably, the $d$-separation criterion is applicable also in presence of cycles \cite{spirtes1994conditional}.
For a graph $\cG = \tuple{\cV,\cE}$ of a GBN, we define the set $\dsep(\cG)$ as
\[ 
	\dsep(\cG) 
	\coloneqq 
	\bigl\{ 
		(\cX, \cY, \cZ) \in (2^\cV)^3 
		\ssep 
		\text{\cX and \cY are $d$-separated given \cZ}
	\bigr\}.
\]

For acyclic Bayesian networks it is well known that the independencies evident
from the standard BN semantics' distribution include the independencies derived from the graph.
That is, for acyclic GBNs $\cB_\nocycle = \tuple{\cG, \cP, \init}$ where all initial nodes are pairwise independent under $\init$, we have
\[
	\dsep(\cG) \ \subseteq \ \Indep\bigl( \BNsem{\cB_\nocycle} \bigr).
\]

For an arbitrary initial distribution, the above relation does not necessarily hold.
However, we can still find a set of independencies that are necessarily observed by the standard BN semantics and thus act as a similar lower bound.
We do so by assuming the worst case, namely that each initial node depends on every other initial node under $\init$.
Formally, given a graph $\cG = \tuple{\cV, \cE}$, we define a closure operation $\initCl(\cdot)$ as follows and compute the set $\dsep\bigl(\initCl(\cG)\bigr)$:
\[
	\initCl(\cG) 
	\ \coloneqq \ 
	\bigtuple{\cV, \cE \cup \{(A,B) \text{ for } A,B \in \Init(\cG), A \not= B\}}.
\]

\begin{lemma}\label{lemma:d-sep_init}
	Let $\cB_\nocycle = \tuple{\cG,\cP,\init}$ be an acyclic GBN.
	Then
	\[
		\dsep\bigl( \initCl(\cG) \bigr) 
		\ \subseteq \ 
		\Indep\bigl( \distBN(\cB_\nocycle) \bigr) .
	\]
\end{lemma}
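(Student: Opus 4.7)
The plan is to reduce the statement to the classical d-separation soundness result already quoted for acyclic GBNs with pairwise independent initial nodes, by rewriting $\cB_\nocycle$ as an equivalent acyclic GBN over a suitably oriented DAG that ``dominates'' $\initCl(\cG)$ with respect to d-separation.

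First I would fix an arbitrary linear order $A_1, \dots, A_k$ on the initial nodes $\Init(\cG)$ and construct the graph $\cG'' = \tuple{\cV, \cE \cup \{(A_i, A_j) \ssep 1 \le i < j \le k\}}$. Since in $\cG$ no edge points into an initial node, the only potential cycles in $\cG''$ must use exclusively the new edges among initial nodes, but those strictly increase in index; hence $\cG''$ is acyclic and has $A_1$ as its unique initial node.

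Next I would equip $\cG''$ with CPTs so that the induced distribution equals $\BNsem{\cB_\nocycle}$. For each $X \in \cV \setminus \Init(\cG)$ reuse $\cP(X, \cdot)$ unchanged. For each $A_i$ with $i > 1$, define
\[
  \cP''\bigl(A_i, \{A_1{=}a_1, \dots, A_{i-1}{=}a_{i-1}\}\bigr)(A_i{=}a_i)
  \ \coloneqq\
  \init\bigl(A_i{=}a_i \mid A_1{=}a_1, \dots, A_{i-1}{=}a_{i-1}\bigr),
\]
setting it arbitrarily whenever the conditioning event has $\init$-probability zero. Let $\init''$ be the marginal of $\init$ on $\{A_1\}$. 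Applying the probabilistic chain rule to $\init$ and plugging the result into \eqref{BN} for the resulting GBN $\cB''$ gives $\BNsem{\cB''}=\BNsem{\cB_\nocycle}$. Because $\cB''$ is acyclic and its lone initial node $A_1$ is trivially ``pairwise independent'', the cited classical result yields $\dsep(\cG'') \subseteq \Indep(\BNsem{\cB''}) = \Indep(\BNsem{\cB_\nocycle})$.

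It then remains to show $\dsep(\initCl(\cG)) \subseteq \dsep(\cG'')$. Every edge of $\cG''$ is also an edge of $\initCl(\cG)$ with identical orientation, so every (simple, undirected) path in $\cG''$ is such a path in $\initCl(\cG)$. Chain and fork blocking depend only on the orientations of the two edges incident to the blocking node and are therefore preserved verbatim. For collider blocking at a node $W_i$, one needs $\PostS(W_i) \cap \cZ = \varnothing$: since $\cG''$ is a subgraph of $\initCl(\cG)$, descendants in $\cG''$ form a subset of descendants in $\initCl(\cG)$, so this condition transfers in the required direction. Thus any path in $\cG''$ that is blocked in $\initCl(\cG)$ is blocked in $\cG''$, giving the inclusion of d-separations and closing the chain $\dsep(\initCl(\cG)) \subseteq \dsep(\cG'') \subseteq \Indep(\BNsem{\cB_\nocycle})$.

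The genuinely non-routine step is recognizing that the ``worst-case'' cyclic interaction among initial nodes captured by $\initCl(\cG)$ can be faithfully linearized by any total order on $\Init(\cG)$ without strengthening d-separation; once this is seen, the remaining verifications (acyclicity of $\cG''$, the chain-rule factorization, and the path-by-path preservation of blocking) are straightforward. The only delicate bookkeeping lies in checking that the collider criterion survives the passage from $\initCl(\cG)$ to $\cG''$, which works precisely because removing edges can only shrink the set of descendants.
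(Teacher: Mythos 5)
Your proof is correct and follows essentially the same route as the paper's: both fold $\init$ into CPTs over a DAG on the initial nodes, invoke the classical d-separation soundness for the resulting ordinary acyclic BN, and transfer back via the observation that the oriented graph is a subgraph of $\initCl(\cG)$ (so blocked paths stay blocked, including the collider case since descendants only shrink). The sole cosmetic difference is that the paper works with an arbitrary I-map DAG for $\init$ while you instantiate it with the complete DAG induced by a total order on $\Init(\cG)$ — a canonical I-map that always exists — and you are slightly more explicit about the zero-probability conditioning events and the collider bookkeeping.
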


As intuitively expected, the presence of cycles in \cG generally reduces the number of graph independencies, though note that also in strongly connected graphs independencies may exist.
For example, if \cG is a four-node cycle with nodes $W\!$, $X$, $Y\!$, and $Z$, then
$\dsep(\cG) = \bigl\{ (W \perp Y \mid X, Z), (X \perp Z \mid W, Y) \bigr\}$.

\section{Constraints Semantics}\label{sec:constraints}

For classical acyclic BNs there is exactly one distribution that agrees with all CPTs and satisfies the independencies encoded in the graph. 
This distribution can easily be constructed by means of the chain rule~\eqref{BN}.
For cyclic GBNs, applying the chain rule towards a full joint distribution is not 
possible in general, as the result is usually not a valid probability distribution.
Still, we can look for distributions consistent with a GBN's CPTs and the independencies derived from its graph.
Depending on the GBN, we will see that there may be none, exactly one, or even infinitely many distributions fulfilling these constraints.

\subsection{CPT-consistency}\label{sec:cpt}
We first provide a formal definition of CPT consistency in terms of linear constraints on full joint distributions.

\begin{definition}[Strong and weak CPT-consistency]\label{def:cpt-consistency}
	Let $\cB$ be a GBN with nodes $\cV$ and $X\!\in \cV$.
	Then $\mu$ is called \emph{strongly CPT-consistent for $X$ in \cB} (or simply \emph{CPT-consistent}) if for all $c \in \Asg(\Pre(X))$
	\begin{equation}\label{CPT}\tag{\CPT} 
		\mu(X\!{=}\T, c) 
		\ = \ 
		\mu(c) \cdot \CPTable( X\!{=}\T \mid c) .
	\end{equation}
	We say that $\mu$ is \emph{weakly CPT-consistent for $X$ in \cB} if
	\begin{equation}\label{WCPT}\tag{\wCPT} 
		\mu(X\!{=}\T) 
		\ = \ 
		\sum_{\mathclap{ c \in \Asg(\Pre(X)) }} \ \mu(c) \cdot \CPTable( X\!{=}\T \mid c ) .
	\end{equation}
\end{definition}

Intuitively, the constraint \eqref{CPT} is satisfied for $\dist$ if the
conditional probability $\dist( X\!{=}\T \mid c )$ equals the entry in the CPT for $X$ under assignment $c$, i.e., 
$\dist( X\!{=}\T \mid c ) = \CPTable(X\!{=}\T \mid c)$. %
In the weak case \eqref{WCPT}, only the resulting marginal probability of $X$
needs to agree with the CPTs.

\begin{definition}[\CPT and \wCPT semantics]\label{def:cpt-semantics}
	For a GBN $\cB = \tuple{\cG, \cP, \init}$, the \emph{CPT-semantics} $\semantics{\cB}_\CPT$ is 
	the set of all distributions $\mu \in \Dist(\Asg(\cV))$ where 
	$\mu|_{\Init(\cG)} = \init$ and
	$\mu$ is CPT-consistent for every node $X\!\in \cV{\setminus}\Init(\cG)$.
	The \emph{weak CPT-semantics} $\semantics{\cB}_\wCPT$ is defined analogously.
\end{definition}

Clearly, we have $\semantics{\cB}_\CPT \subseteq \semantics{\cB}_\wCPT$ for all $\cB$.
The next example shows that 
depending on the CPT values, the set $\semantics{\cB}_\CPT$ may be empty, a singleton, or of infinite cardinality.
\begin{example}\label{ex:cpt-semantics}
	To find CPT-consistent distributions for the GBN from Fig.\,\ref{fig:cyclicBN}, 
	we construct a system of linear equations whose solutions form distributions 
	$\mu \in \Dist\bigl(\Asg(\{X,Y\})\bigr)$, represented as vectors in the space $[0,1]^4$: 
	\[ \setlength\arraycolsep{5pt}
	\left( \begin{array}{cccc}
		s_1 & 0       & s_1{-}1 & 0        \\
		0   & s_2     & 0       & s_2{-}1  \\
		t_1 & t_1{-}1 & 0       & 0        \\
		0   & 0       & t_2     & t_2{-}1  \\
		1   & 1       & 1       & 1       
	\end{array}\right)
	\ \cdot \ 
	\left( \begin{array}{c}
		\mu_{\mnot{X} \mnot{Y}}\\
		\mu_{\mnot{X} Y}\\
		\mu_{X \mnot{Y}}\\
		\mu_{X Y}      
	\end{array}\right) 
	\ = \
	\left( \begin{array}{c}
		0\\0\\0\\0\\1      
	\end{array}\right)
	\]
	where%
	, e.g., $\mu_{X \mnot{Y}}$ abbreviates $\mu(X\!{=}\T, Y\!{=}\F)$. 
	The first line of the matrix states the \eqref{CPT} constraint for node $X$ and the parent assignment $c = \{Y\!{=}\F\}$:
	\begin{align*}
		0 \ &= \ s_1 \cdot \mu_{\mnot{X} \mnot{Y}} + 0 \cdot \mu_{\mnot{X} Y} + (s_1{-}1) \cdot \mu_{X \mnot{Y}} + 0 \cdot \mu_{X Y} \\
		\mu_{X \mnot{Y}} \ &= \ (\mu_{X \mnot{Y}} + \mu_{\mnot{X} \mnot{Y}}) \cdot s_1 \\
		\mu_{X \mnot{Y}} \ &= \ \mu_{\mnot{Y}} \cdot \CPTable(X\!{=}\T \mid Y\!{=}\F) \\
		\mu(X\!{=}\T, c) &= \mu(c) \cdot \CPTable(X\!{=}\T \mid c).
	\end{align*}
	Analogously, the following three rows encode the CPT constraints for $X$, $Y\!$, and their remaining parent assignments.
	The last row ensures that solutions are indeed probability distributions satisfying $\sum_c \mu(c) = 1$.

	The number of solutions for the system now depends on the CPT entries $s_1$, $s_2$, $t_1$, and $t_2$.
	For $s_1 = t_2 = 0$ and $s_2 = t_1 = 1$, no solution exists as the 
	first four equations require $\mu(b) = 0$ for all $b \in \Asg(\{X,Y\})$, while the 
	last equation ensures 
	$\mu_{\mnot{X} \mnot{Y}} + \mu_{\mnot{X} Y} + \mu_{X \mnot{Y}} + \mu_{X Y} = 1$.
	For $s_1 = t_1 = 0$ and $s_2 = t_2 = 1$, all distributions with 
	$\mu_{X Y} = 1 - \mu_{\mnot{X} \mnot{Y}}$ and 
	$\mu_{X \mnot{Y}} = \mu_{\mnot{X} Y} = 0$ are solutions.
	Finally, e.g., for $s_1 = t_1 = \nicefrac{3}{4}$ and $s_2 = t_2 = \nicefrac{1}{2}$, 
	there is exactly one solution with
	$\mu_{\mnot{X} \mnot{Y}} = \nicefrac{1}{10}$ 
	and $\mu(b) = \nicefrac{3}{10}$ for the other three assignments.
\end{example}

\subsection{Independence-consistency}

We extend \CPT semantics with a set of independencies that need to be observed by all induced distributions.
\begin{definition}[\normalfont{\CPTI} semantics]
	For a GBN $\cB = \tuple{\cG, \cP, \init}$ and a set of independencies $\cI$, the \emph{CPT-\cI semantics} $\semantics{\cB}_{\CPT\text{-}\cI}$ is defined as the set of all CPT-consistent distributions $\mu$ for which
	$\cI \subseteq \Indep(\mu)$ holds.
\end{definition}
Technically, the distributions in $\semantics{\cB}_\CPTI$
have to fulfill the following polynomial constraints in addition to the CPT-consistency constraints:
\begin{equation}\tag{\CPTI}
	\dist( b ) \cdot \dist( b_\cW ) 
	\ = \
	\dist( b_{\{X\} \cup \cW} ) \cdot \dist( b_{\cU \cup \cW} )
  	\label{CPTI}
\end{equation}
for each independence $(X\! \perp \cU \mid \cW) \in \cI$ with variable $X{\in} \cV$ 
and sets of variables $\cU, \cW \subseteq \cV$, and for each assignment $b \in \Asg(\{X\} \cup \cU \cup \cW)$.
Note that in case $\dist(b_{\cW}) >0$, \eqref{CPTI} is equivalent to the constraint
$\dist( b_X \mid b_{\cU \cup \cW} ) = \dist( b_X \mid b_{\cW} )$.

We can now formally state the alternative characterization of the standard BN semantics as the unique CPT-consistent distribution that satisfies the $d$-separation independencies of the graph.
For each classical BN $\cB$ with acyclic graph $\cG$ and $\cI = \dsep(\cG)$, we have $\semantics{\cB}_\BN = \{\distBN(\cB)\} = \semantics{\cB}_\CPTI$.
Thus, the \CPTI semantics provides a conservative extension of the standard BN semantics to GBNs with cycles.
However, in practice, its use is limited since there might be no distribution 
that satisfies all constraints.
In fact, the case where $\semantics{\cB}_\CPTI = \varnothing$ is to be expected for most cyclic GBNs, given that the resulting constraint systems tend to be heavily over-determined.

The next section introduces semantics that follow a more constructive approach.
We will see later on in Section~\ref{sec:limit_semantics_properties} that the families of distributions induced by these semantics are always non-empty and usually singletons.

\section{Limit and Limit Average Semantics}

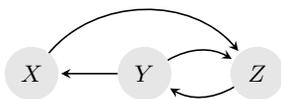
\begin{figure}[t]
	\centering
	\begin{tikzpicture}
		\node (X) at (-1.5, 0) [state] {$X$};
		\node (Y) at ( 0.0, 0) [state] {$Y$};
		\node (Z) at ( 1.5, 0) [state] {$Z$};
		
		\draw[myarrow, bend left=50] (X) edge (Z);
		\draw[myarrow, bend left] (Y) edge (Z);
		\draw[myarrow, bend left] (Z) edge (Y);
		\draw[myarrow] (Y) edge (X);
		
	\end{tikzpicture}
	\caption[]{\label{fig:scGBN} The graph of a strongly connected GBN}
\end{figure}

\begin{figure}[t]
\centering
\begin{subfigure}{.5\textwidth}
	\centering
	\begin{tikzpicture}
		\node (X0) at (-1.5, 0) [state] {$X_0$};
		\node (Y0) at ( 0.0, 0) [state] {$Y_0$};
		\node (Z0) at ( 1.5, 0) [state] {$Z_0$};

		\node (X1) at (-1.5, -1.5) [state] {$X_1$};
		\node (Y1) at ( 0.0, -1.5) [state] {$Y_1$};
		\node (Z1) at ( 1.5, -1.5) [state] {$Z_1$};
				
		\draw[myarrow] (X0) -- (Z1);
		\draw[myarrow] (Y0) -- (Z1);
		\draw[myarrow] (Z0) -- (Y1);
		\draw[myarrow] (Y0) -- (X1);
		
		\node (X2) at (-1.5, -3) [state] {$X_2$};
		\node (Y2) at ( 0.0, -3) [state] {$Y_2$};
		\node (Z2) at ( 1.5, -3) [state] {$Z_2$};
		
		\draw[myarrow] (X1) -- (Z2);
		\draw[myarrow] (Y1) -- (Z2);
		\draw[myarrow] (Z1) -- (Y2);
		\draw[myarrow] (Y1) -- (X2);
		
		\node (X3) at (-1.5, -4.5) [state, opacity=0.2] {};
		\node (Y3) at ( 0.0, -4.5) [state, opacity=0.2] {};
		\node (Z3) at ( 1.5, -4.5) [state, opacity=0.2] {};
		\node (dots) at (0.0, -4.05) {\vdots};
		
		\draw[fade out arrow] (X2) -- (Z3);
		\draw[fade out arrow] (Y2) -- (Z3);
		\draw[fade out arrow] (Z2) -- (Y3);
		\draw[fade out arrow] (Y2) -- (X3);		
	\end{tikzpicture}
	\caption[]{\label{fig:unfolding1}Unfolding along all nodes}
\end{subfigure}%
\begin{subfigure}{.5\textwidth}
	\centering
	\begin{tikzpicture}
		\node (Z0) at ( 1.5, 0) [state] {$Z_0$};

		\node (X1) at (-1.5, -1.5) [state] {$X_1$};
		\node (Y1) at ( 0.0, -1.5) [state] {$Y_1$};
		\node (Z1) at ( 1.5, -1.5) [state] {$Z_1$};
				
		\draw[myarrow] (Z0) -- (Y1);
		
		\node (X2) at (-1.5, -3) [state] {$X_2$};
		\node (Y2) at ( 0.0, -3) [state] {$Y_2$};
		\node (Z2) at ( 1.5, -3) [state] {$Z_2$};
		
		\draw[myarrow, bend left=50] (X1) edge (Z1);
		\draw[myarrow] (Y1) -- (Z1);
		\draw[myarrow] (Z1) -- (Y2);
		\draw[myarrow] (Y1) -- (X1);
		
		\node (X3) at (-1.5, -4.5) [state, opacity=0.2] {};
		\node (Y3) at ( 0.0, -4.5) [state, opacity=0.2] {};
		\node (Z3) at ( 1.5, -4.5) [state, opacity=0.2] {};
		\node (dots) at (0.0, -4.05) {\vdots};
		
		\draw[myarrow, bend left=50] (X2) edge (Z2);
		\draw[myarrow] (Y2) -- (Z2);
		\draw[fade out arrow] (Z2) -- (Y3);
		\draw[myarrow] (Y2) -- (X2);
		
		\draw[myarrow, bend left=50, opacity=0.1] (X3) edge (Z3);
		\draw[myarrow, opacity=0.1] (Y3) -- (Z3);
		\draw[myarrow, opacity=0.1] (Y3) -- (X3);
	\end{tikzpicture}
	\caption[]{\label{fig:unfolding2}Unfolding along the $Z$ nodes}
\end{subfigure}
\caption{Two infinite unfoldings of the graph in Fig.\,\ref{fig:scGBN}}
\label{fig:unfold}
\end{figure}
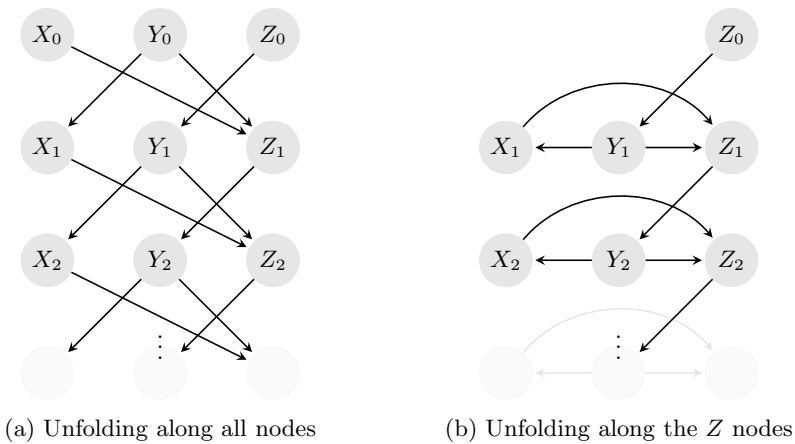

We first develop the basic ideas underling the semantics by following an example, before giving a formal treatment in Section~\ref{sec:limit_semantics_formal}.

\subsection{Intuition}

Consider the GBN \cB whose graph \cG is depicted in Fig.\,\ref{fig:scGBN}.
One way to get rid of the cycles is to construct an infinite unfolding of \cB as shown in Fig.\,\ref{fig:unfolding1}.
In this new graph $\cG_\infty$, each level contains a full copy of the original nodes and corresponds to some $n \in \Nat$.
For any edge $X \to Y$ in the original graph, we add edges $X_n \to Y_{n+1}$ to $\cG_\infty$, such that each edge descends one level deeper.
Clearly any graph constructed in this way is acyclic, but this fact alone does not aid in finding a matching distribution since we dearly bought it by giving up finiteness.
However, we can consider what happens when we plug in some initial distribution~$\mu_0$ over the nodes $X_0$, $Y_0$, and $Z_0$.
Looking only at the first two levels, we then get a fully specified acyclic BN by using the CPTs given by \cP for $X_1$, $Y_1$, and $Z_1$.
For this sub-BN, the standard BN semantics yields a full joint distribution over the six nodes from $X_0$ to $Z_1$, which also induces a distribution $\mu_1$ over the three nodes at level $1$.
This procedure can then be repeated to construct a distribution $\mu_2$ over the nodes $X_2$, $Y_2$, and $Z_2$, and, more generally, to get a distribution $\mu_{n+1}$ given a distribution $\mu_n$.
Recall that each of those distributions can be viewed as vector of size $2^3$.
Considering the sequence $\mu_0, \mu_1, \mu_2, \dots$, the question naturally arises whether 
a limit exists, i.e., a distribution/vector $\mu$ such that \vspace{-0.5em}
\[ 
	\mu \ \ = \ \ \lim_{\mathclap{ n \to \infty }} \ \mu_n.
\]

\begin{example}\label{ex:no_limit}
	Consider the GBN from Fig.\,\ref{fig:cyclicBN} with CPT entries $s_1 = t_2 = 1$ and $s_2 = t_1 = 0$, which intuitively describe the contradictory dependencies ``$X$ iff not $Y$'' and ``$Y$ iff $X$''.
	For any initial distribution $\mu_0 = \tuple{e\ f\ g\ h}$, the construction informally described above yields the following sequence of distributions $\mu_n$:
	\[
		\mu_0 = \left(\begin{array}{c}e\\ f\\ g\\ h\end{array}\right)\!, \
		\mu_1 = \left(\begin{array}{c}f\\ h\\ e\\ g\end{array}\right)\!, \
		\mu_2 = \left(\begin{array}{c}h\\ g\\ f\\ e\end{array}\right)\!, \
		\mu_3 = \left(\begin{array}{c}g\\ e\\ h\\ f\end{array}\right)\!, \
		\mu_4 = \left(\begin{array}{c}e\\ f\\ g\\ h\end{array}\right)\!, \ \dots
	\]
As $\mu_4 = \mu_0$, the sequence starts to cycle infinitely between the first four distributions. The series converges for $e=f=g=h=\nicefrac{1}{4}$ (in which case the sequence is constant), but does not converge for any other initial distribution.
\end{example}

The example shows that the existence of the limit depends on the given initial distribution.  
In case no limit exists because some distributions keep repeating without ever converging, it is possible to determine the \emph{limit average} (or \emph{Cesàro limit}) of the sequence: \vspace{-0.8em}
\[
	\tilde{\mu}
	\ \ = \ \ 
	\lim_{\mathclap{ n \to \infty }} \ \ \frac{1}{n+1}\ \sum_{i=0}^{n} \mu_i.
\]
The limit average has three nice properties: 
First, if the regular limit $\mu$ exists, then the limit average $\tilde{\mu}$ exists as well and is identical to $\mu$.
Second, in our use case, $\tilde{\mu}$ in fact always exists for any initial distribution $\mu_0$.
And third, as we will see in Section~\ref{sec:cutsetMC}, the limit average corresponds to the long-run frequency of certain Markov chains, which allows us both to explicitly compute and to derive important properties of the limit distributions.
\begin{example}
	Continuing Ex.\ \ref{ex:no_limit}, the limit average of the sequence $\mu_0, \mu_1, \mu_2, \dots$ is the uniform distribution $\tilde{\mu} = \tuple{\nicefrac{1}{4}\ \nicefrac{1}{4}\ \nicefrac{1}{4}\ \nicefrac{1}{4} }$, regardless of the choice of~$\mu_0$. %
\end{example}
 
Before we formally define the infinite unfolding of GBNs and the resulting limit semantics, there is one more observation to be made.
To ensure that the unfolded graph $\cG_\infty$ is acyclic, we redirected every edge of the GBN \cB to point one level deeper, resulting in the graph displayed in Fig.\,\ref{fig:unfolding1}.
As can be seen in Fig.\,\ref{fig:unfolding2}, we also get an acyclic unfolded graph by only redirecting the edges originating in the $Z$ nodes to the next level and keeping all other edges on the same level.
The relevant property is to pick a set of nodes such that for each cycle in the original GBN $\cB$, at least one node in the cycle is contained in the set.
We call such sets the \emph{cutsets} of $\cB$.

\begin{definition}[Cutset]
	Let \cB be an GBN with graph $\cG = \tuple{\cV, \cE}$.
	A subset $\cC \subseteq \cV$ is a \emph{cutset} for \cB if every cycle in \cG contains at least one node from $\cC$.
\end{definition}
\begin{example}
	The GBN in Fig.\,\ref{fig:scGBN} has the following cutsets: $\{Y\}$, $\{Z\}$, $\{X, Y\}$, $\{X, Z\}$, $\{Y, Z\}$, and $\{X, Y, Z\}$.
	Note that $\{X\}$ does not form a cutset as no node from the cycle $Y\!\to Z \to Y$ is contained.
\end{example}

So far we implicitly used the set \cV of all nodes for the unfolding, which always trivially forms a cutset.
The following definitions will be parameterized with a cutset, 
as the choice of cutsets influences the resulting distributions as well as the time complexity.

\subsection{Formal Definition}\label{sec:limit_semantics_formal}

Let $\cV_n \coloneqq \{X_n \ssep X\! \in \cV\}$ denote the set of nodes on the $n^\text{th}$ level of the unfolding in $\cG_\infty$.
For $\cC \subseteq \cV$ a cutset of the GBN, the subset of cutset nodes on that level is given by $\cC_n \coloneqq \{X_n \!\in \cV_n \ssep X\!\in\cC\}$.
Then a distribution $\gamma_n \in \Dist(\Asg(\cC_n))$ for the cutset nodes in $\cC_n$ suffices to get a full distribution $\mu_{n+1} \in \Dist(\Asg(\cV_{n+1}))$ over all nodes on the next level, $n+1$:
We look at the graph fragment $\cG_{n+1}$ of $\cG_\infty$ given by the nodes $\cC_n \cup \cV_{n+1}$ and their respective edges. 
In this fragment, the cutset nodes are initial, so the cutset distribution $\gamma_n$ can be combined with the initial distribution~$\init$ to act as new initial distribution.
For the nodes in $\cV_{n+1}$, the corresponding CPTs as given by $\cP$ can be used, i.e., $\cP_n(X_n, \cdot) = \cP(X, \cdot)$ for $X_n \in \cV_n$.
Putting everything together, we obtain an acyclic GBN $\cB_{n+1} = \tuple{\cG_{n+1}, \cP_{n+1}, \init \combi \gamma_n}$.
However, GBNs constructed in this way for each level $n>0$ are all isomorphic and only differ in the given cutset distribution~$\gamma$.
For simplicity and in light of later use, we thus define a single representative GBN $\dissectedBN{\cB}{\cC}{\gamma}$ that represents a dissection of $\cB$ along a given cutset $\cC$, with $\init \combi \gamma$ as initial distribution.
\begin{definition}[Dissected GBN]\label{def:dissected_GBN}
	Let $\cB = \tuple{\cG, \cP, \init}$ be a GBN with graph $\cG = \tuple{\cV, \cE}$ and $\cC \subseteq \cV$ a cutset for \cB with distribution 
	$\gamma \in \Dist(\Asg(\cC))$. 
	Then, the \emph{\cC-dissected GBN} 
	$\dissectedBN{\cB}{\cC}{\gamma}$ %
	is the acyclic GBN $\tuple{\cG_\cC, \cP_\cC, \init \combi \gamma}$ with
	graph $\cG_\cC = \tuple{\cV \cup \cC', \cE_\cC}$ where
	\begin{itemize}\setlength{\itemsep}{-0.01em}
	\item $\cC' \coloneqq \{X' \ssep X\!\in \cC\}$ extends $\cV$ by fresh copies of all cutset nodes; 
	\item 
		incoming edges to nodes in \cC are redirected to their copies, i.e., 
		\[
			\cE_{\cC} \coloneqq \bigl\{ (X,Y') \ssep (X,Y) \in \cE, Y\!\in \cC \bigr\}
			               \cup \bigl\{ (X, Y) \ssep (X,Y) \in \cE, Y\!\notin \cC \bigr\};
		\]
	\item 
		the function $\cP_\cC$ uses the CPT entries given by \cP for the cutset nodes 
		as entries for their copies and the original entries for all other nodes, i.e., 
		we have $\cP_\cC(Y'\!,a) = \cP(Y,a)$ for each node $Y'\!\in \cC'$ and 
		parent assignment $a \in \Asg(\Pre(Y'))$, and 
		$\cP_\cC(X,b) = \cP(X,b)$ for $X\!\in \cV{\setminus}\cC$ and $b \in \Asg(\Pre(X))$.
	\end{itemize}
\end{definition}

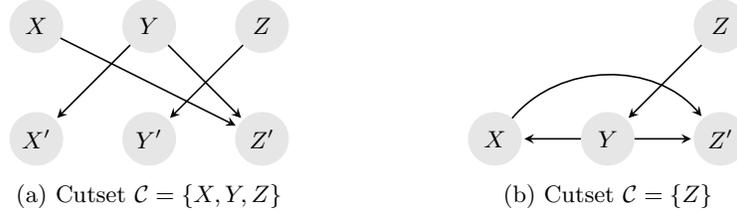
\begin{figure}[t]
\centering
\begin{subfigure}{.5\textwidth}
	\centering
	\begin{tikzpicture}
		\node (X0) at (-1.5, 0) [state] {$X$};
		\node (Y0) at ( 0.0, 0) [state] {$Y$};
		\node (Z0) at ( 1.5, 0) [state] {$Z$};

		\node (X1) at (-1.5, -1.5) [state] {$X'$};
		\node (Y1) at ( 0.0, -1.5) [state] {$Y'$};
		\node (Z1) at ( 1.5, -1.5) [state] {$Z'$};
				
		\draw[myarrow] (X0) -- (Z1);
		\draw[myarrow] (Y0) -- (Z1);
		\draw[myarrow] (Z0) -- (Y1);
		\draw[myarrow] (Y0) -- (X1);	
	\end{tikzpicture}
	\caption[]{\label{fig:dissectedBN1}Cutset $\cC = \{X,Y,Z\}$}
\end{subfigure}%
\begin{subfigure}{.5\textwidth}
	\centering
	\begin{tikzpicture}
		\node (Z0) at ( 1.5, 0) [state] {$Z$};

		\node (X1) at (-1.5, -1.5) [state] {$X$};
		\node (Y1) at ( 0.0, -1.5) [state] {$Y$};
		\node (Z1) at ( 1.5, -1.5) [state] {$Z'$};
				
		\draw[myarrow] (Z0) -- (Y1);
		\draw[myarrow, bend left=50] (X1) edge (Z1);
		\draw[myarrow] (Y1) -- (Z1);
		\draw[myarrow] (Y1) -- (X1);	
	\end{tikzpicture}
	\caption[]{\label{fig:dissectedBN2}Cutset $\cC = \{Z\}$}
\end{subfigure}
\caption{Dissections of the GBN in Fig.\,\ref{fig:scGBN} for two cutsets}
\label{fig:dissect}
\end{figure}
\noindent
Fig.~\ref{fig:dissect} shows two examples of dissections on the GBN of Fig.~\ref{fig:scGBN}.
As any dissected GBN is acyclic by construction, the standard BN semantics yields a full joint distribution over all nodes in $\cV \cup \cC'$.
We restrict this distribution to the nodes in $(\cV\setminus\cC) \cup \cC'$, as those are the ones on the ``next level'' of the unfolding, while re-identifying the cutset node copies with the original nodes to get a distribution over $\cV$.
Formally, we define the distribution $\Next(\cB, \cC, \gamma)$ for each assignment $b \in \Asg(\cV)$ as
\[
	\Next(\cB, \cC, \gamma)(b) 
	\ \coloneqq \ 
	\distBN\bigl( \dissectedBN{\cB}{\cC}{\gamma} \bigr)(b')
\]
where the assignment $b' \in \Asg\bigl((\cV{\setminus}\cC) \cup \cC'\bigr)$ is given by $b'(X) = b(X)$ 
for all $X\!\in \cV{\setminus}\cC$ and $b'(Y') = b(Y)$ for all $Y\!\in \cC$.
In the unfolded GBN, this allows us to get from a cutset distribution $\gamma_n$ to the next level distribution $\mu_{n+1} = \Next(\cB, \cC, \gamma_n)$.
The next cutset distribution $\gamma_{n+1}$ is then given by restricting the full distribution to the nodes in $\cC$, i.e., $\gamma_{n+1} = \Next(\cB, \cC, \gamma_n)|_\cC$.%
\footnote{Recall that we may view distributions as vectors which allows us to equate distributions over different but isomorphic domains.}
Vice versa, a cutset distribution $\gamma$ suffices to recover the full joint distribution over all nodes $\cV$.
Again using the standard BN semantics of the dissected GBN, we define the distribution $\Extend(\cB,\cC,\gamma) \in \Dist(\Asg(\cV))$ as
\[
	\Extend(\cB,\cC,\gamma) 
	\ \coloneqq \ 
	\distBN\bigl( \dissectedBN{\cB}{\cC}{\gamma} \bigr)\big|_\cV .
\]

With these definitions at hand, we can formally define the limit and limit average semantics described in the previous section.

\begin{definition}[Limit and limit average semantics]\label{def:limit_semantics}
	Let \cB be a GBN over nodes $\cV$ with cutset $\cC$.
	The \emph{limit semantics of \cB w.r.t.\ \cC} is the partial function 
	\[
		\mathit{Lim}(\cB,\cC, \cdot) 
		\ \colon \
		\Dist\bigl(\Asg(\cC)\bigr) 
		\rightharpoonup 
		\Dist\bigl(\Asg(\cV)\bigr)
	\]
	from initial cutset distributions $\gamma_0$ to full distributions $\mu = \Extend(\cB, \cC, \gamma)$ where
	\[
		\gamma
		\ = 
		\lim_{n \to \infty} 
		\gamma_n 
		\qquad
		\text{ and }
		\qquad
		\gamma_{n+1} = \Next(\cB, \cC, \gamma_n)|_\cC .
	\]
	The set $\semantics{\cB}_\LimC$ is given by the image of $\mathit{Lim}(\cB,\cC,\cdot)$, i.e.,
	\[
		\semantics{\cB}_\LimC \coloneqq \{\mathit{Lim}(\cB,\cC,\gamma_0) \ssep \gamma_0 \in \Dist(\Asg(\cC))\text{ s.t.\ $\mathit{Lim}(\cB,\cC,\gamma_0)$ is defined} \}.
	\]
	The \emph{limit average semantics of \cB w.r.t.\ \cC} is the partial function 
	\[
		\mathit{LimAvg}(\cB,\cC, \cdot)
		\ \colon \
		\Dist\bigl(\Asg(\cC)\bigr) 
		\rightharpoonup 
		\Dist\bigl(\Asg(\cV)\bigr)
	\]
	from $\gamma_0$ to distributions  $\mu = \Extend(\cB, \cC, \gamma)$ where
	\[
		\gamma
		\ = 
		\lim_{n \to \infty}
		\frac{1}{n+1} 
		\sum_{i=0}^{n}
		\gamma_n 
		\qquad
		\text{ and }
		\qquad
		\gamma_{n+1} = \Next(\cB, \cC, \gamma_n)|_\cC .
	\]
	The set $\semantics{\cB}_\LimAvgC$ is likewise given by the image of $\mathit{LimAvg}(\cB,\cC,\cdot)$.
\end{definition}

We know that the limit average coincides with the regular limit if the latter exists, so for every 
initial cutset distribution $\gamma_0$, we have $\mathit{Lim}(\cB,\cC,\gamma_0) = \mathit{LimAvg}(\cB,\cC,\gamma_0)$ if $\mathit{Lim}(\cB,\cC,\gamma_0)$ is defined.
Thus, $\semantics{\cB}_\LimC \subseteq \semantics{\cB}_\LimAvgC$.

\section{Markov Chain Semantics}\label{sec:cutsetMC}

While we gave some motivation for the limit and limit average semantics, their definitions do not reveal an explicit way to compute their member distributions.
In this section we introduce the \emph{(cutset) Markov chain semantics} which offers explicit construction of distributions and is shown to coincide with the limit average semantics.
It further paves the way for proving several properties of both limit semantics in Section~\ref{sec:limit_semantics_properties}.

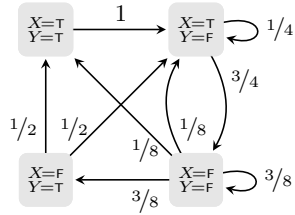
\begin{figure}[t]
	\centering
	\begin{tikzpicture}
		\node (AB)   at (0.0, 0.0) [MCstate] {$\substack{ X\!{=}\T \\ Y\!{=}\T }$}; %
		\node (AnB)  at (2.0, 0.0) [MCstate] {$\substack{ X\!{=}\T \\ Y\!{=}\F }$}; %
		\node (nAB)  at (0.0,-2.0) [MCstate] {$\substack{ X\!{=}\F \\ Y\!{=}\T }$}; %
		\node (nAnB) at (2.0,-2.0) [MCstate] {$\substack{ X\!{=}\F \\ Y\!{=}\F }$}; %
		\draw[long myarrow, loop right] (nAnB) edge node[label]{$\nicefrac{3}{8}$} (nAnB);
		\draw[long myarrow]             (nAnB) edge node[label, below, near start]{$\nicefrac{3}{8}$} (nAB);
		\draw[long myarrow, bend left]  (nAnB) edge node[label, right, near start]{$\nicefrac{1}{8}$} (AnB);
		\draw[long myarrow]             (nAnB) edge node[label, below, near start]{$\nicefrac{1}{8}$} (AB);

		\draw[long myarrow] (nAB) edge node[label, left, near start]{$\nicefrac{1}{2}$} (AB);
		\draw[long myarrow] (nAB) edge node[label, left, near start]{$\nicefrac{1}{2}$} (AnB);

		\draw[long myarrow, bend left]  (AnB) edge node[label, right, near start]{$\nicefrac{3}{4}$} (nAnB);
		\draw[long myarrow, loop right] (AnB) edge node[label]{$\nicefrac{1}{4}$} (AnB);
		
		\draw[long myarrow] (AB) edge node[label, above]{$1$} (AnB);
		\node (AB)   at (0.0, 0.0) [MCstate] {$\substack{ X\!{=}\T \\ Y\!{=}\T }$}; %
		\node (AnB)  at (2.0, 0.0) [MCstate] {$\substack{ X\!{=}\T \\ Y\!{=}\F }$}; %
		\node (nAB)  at (0.0,-2.0) [MCstate] {$\substack{ X\!{=}\F \\ Y\!{=}\T }$}; %
		\node (nAnB) at (2.0,-2.0) [MCstate] {$\substack{ X\!{=}\F \\ Y\!{=}\F }$}; %
	\end{tikzpicture}
	\caption{\label{fig:cutset_MC}A cutset Markov chain for a cutset $\cC = \{X,Y\}$}
\end{figure}

At the core of the cutset Markov chain semantics lies the eponymous \emph{cutset Markov chain} which captures how probability mass flows from one cutset assignment to the others.
To this end, the Dirac distributions corresponding to each assignment are used as initial distributions in the dissected GBN.
With the \Next\ function we then get a new distribution over all cutset assignments, and the probabilities assigned by this distribution are used as transition probabilities for the Markov chain.

\begin{definition}[Cutset Markov chain]\label{def:cutsetMC}
	Let \cB be a GBN with cutset \cC.
	The \emph{cutset Markov chain} $\CutsetMC{\cB}{\cC} = \tuple{\Asg(\cC), \bP}$ w.r.t.\ \cB and \cC 
	is a DTMC where the transition matrix $\bP\!$ is given for all cutset assignments $b,c \in \Asg(\cC)$ by
	\[ 
		\bP(b,c)
		\ \coloneqq \
		\Next\bigl(\cB, \cC, \Dirac(b)\bigr)(c) .
	\]
\end{definition}

\begin{example}\label{ex:cutset_MC}
	Fig.\,\ref{fig:cutset_MC} shows the cutset Markov chain for the GBN from Fig.\,\ref{fig:cyclicBN} 
	with CPT entries $s_1 = \nicefrac{1}{4}$, $s_2 = 1$, $t_1 = \nicefrac{1}{2}$, $t_2 = 0$, and cutset $\cC = \{X,Y\}$.
	Exemplarily, the edge at the bottom from assignment $b = \{X\!{=}\F, Y\!{=}\F\}$ to assignment $c = \{X\!{=}\F, Y\!{=}\T\}$ 
	with label $\nicefrac{3}{8}$ is derived as follows:
	\begin{eqnarray*}
		\bP(b,c) &\ \ =\ \ & \Next\bigl(\cB, \cC, \Dirac(b)\bigr)(c)
		 \ = \ \distBN\bigl( \dissectedBN{\cB}{\cC}{\Dirac(b)} \bigr)(c') \\
		 &=& \sum\nolimits_{\substack{a \in \Asg(\cV_\cC) \\ \text{\ s.t.\ } c' \subseteq a}} \ \distBN\bigl( \dissectedBN{\cB}{\cC}{\Dirac(b)} \bigr)(a) \\
		 &=& \sum\nolimits_{\substack{a \in \Asg(\cV_\cC) \\ \text{\ s.t.\ } c' \subseteq a}} \ \Dirac(b)(a_{X,Y}) \cdot  \CPTable(X'\!{=}\F \mid a_Y) \cdot \CPTable(Y'\!{=}\T \mid a_X) \\
		 &=& \CPTable(X'\!{=}\F \mid Y\!{=}\F) \cdot \CPTable(Y'\!{=}\T \mid X\!{=}\F)
		 \ = \ (1 - s_1) \cdot t_1 \ \ =\ \  \nicefrac{3}{8}.
	\end{eqnarray*}
	Note that in the second-to-last step, in the sum over all full assignments $a$ which agree with the partial assignment $c'$, only the assignment which also agrees with $b$ remains as for all other assignments we have $\Dirac(b)(a_{X,Y}) = 0$.
\end{example}

Given a cutset Markov chain with transition matrix $\bP$ 
and an initial cutset distribution $\gamma_0$, we can compute the uniquely defined long-run frequency distribution $\LongRunFreq^{\gamma_0}$ (see Section~\ref{sec:MCs}).
Then the Markov chain semantics is given by the extension of this distribution over the whole GBN.

\begin{definition}[Markov chain semantics]\label{def:cutset_semantics}
	Let \cB be a GBN over nodes~\cV with a cutset $\cC \subseteq \cV$ 
	and cutset Markov chain $\CutsetMC{\cB}{\cC} = \tuple{\Asg(\cC), \bP}$.
	Then the \emph{Markov chain semantics of \cB w.r.t.\ \cC} is the function 
	\[
		\mathit{MCS}(\cB,\cC, \cdot) 
		\ \colon \ 
		\Dist\bigl(\Asg(\cC)\bigr) \to \Dist\bigl(\Asg(\cV)\bigr)
	\]
	from cutset distributions $\gamma_0$ to full distributions 
	$\mu = \Extend(\cB,\cC,\LongRunFreq^{\gamma_0})$ where
   	\[
    	\LongRunFreq^{\gamma_0}
    	\ = \ 
    	\lim\limits_{n\to \infty}\  
    	\frac{1}{n{+}1}\ 
    	\sum_{i=0}^n \gamma_i
    	\qquad
    	\text{ and }
    	\qquad
    	\gamma_{i+1} =\gamma_i \cdot \bP.
   	\]
	The set $\semantics{\cB}_{ \Cut\text{-}\cC }$ is defined as
	the image of $\mathit{MCS}(\cB,\cC, \cdot)$.
\end{definition}

In the following lemma, we give four equivalent characterizations 
of the long-run frequency distributions of the cutset Markov chain.
\begin{lemma}\label{lemma:convex_combi_BSCCs}
	Let $\cB$ be a GBN with cutset $\cC$, cutset distribution $\gamma \in \Dist(\Asg(\cC))$, and $\cM = \tuple{\Asg(\cC), \bP}$ the cutset Markov chain $\CutsetMC{\cB}{\cC}$.
	Then the following statements are equivalent:
 	\begin{enumerate}[label=(\alph*),leftmargin=*] %
 	\item 
 		$\gamma =\gamma \cdot \bP$. %
	\item
		There exists $\gamma_0\in \Dist(\Asg(\cC))$ such that for $\gamma_{i+1} =\gamma_i \cdot \bP$, we have
		\[
			\gamma\ = \ \lim\limits_{n\to \infty}\  
			\frac{1}{n{+}1}\ \sum_{i=0}^n \gamma_i .
		\]
	\item %
		$\gamma$ belongs to the convex hull of the long-run
		frequency distributions $\LongRunFreq_\cD$ of the bottom SCCs $\cD$ of $\cM$.
 	\item 
 		$\gamma = \Next(\cB, \cC, \gamma)|_\cC$.
  \end{enumerate}
\end{lemma}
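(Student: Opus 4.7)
The plan is to split the four-way equivalence into two pieces: first establish (a)$\Leftrightarrow$(d) by a linearity argument on the \Next operator, and then reduce the triangle (a)$\Leftrightarrow$(b)$\Leftrightarrow$(c) to standard results about stationary distributions of finite DTMCs applied to the cutset Markov chain $\cM = \CutsetMC{\cB}{\cC}$.

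For (a)$\Leftrightarrow$(d), the key intermediate claim is that for every $\gamma \in \Dist(\Asg(\cC))$,
\[
	\Next(\cB,\cC,\gamma)|_\cC \ = \ \gamma \cdot \bP .
\]
To see this, I would unfold the definition: $\Next(\cB,\cC,\gamma)(c) = \distBN(\dissectedBN{\cB}{\cC}{\gamma})(c')$, where the acyclic dissected GBN has $\iota \combi \gamma$ as initial distribution. Applying the chain rule~\eqref{BN} to $\dissectedBN{\cB}{\cC}{\gamma}$ shows that $\distBN(\dissectedBN{\cB}{\cC}{\gamma})(b')$ is linear in the $\gamma$-factor of the initial distribution, because $\gamma(b'_{\cC'})$ appears as a single factor and every other factor depends only on $\iota$ and the CPTs. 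Writing $\gamma = \sum_{b \in \Asg(\cC)} \gamma(b) \cdot \Dirac(b)$ and using this linearity yields
\[
	\Next(\cB,\cC,\gamma)|_\cC(c)
	\ = \ \sum_{b \in \Asg(\cC)} \gamma(b) \cdot \Next(\cB,\cC,\Dirac(b))|_\cC(c)
	\ = \ \sum_{b \in \Asg(\cC)} \gamma(b) \cdot \bP(b,c) ,
\]
using Definition~\ref{def:cutsetMC} in the last step. Given this identity, (d) becomes $\gamma = \gamma \cdot \bP$, which is exactly (a).

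For (a)$\Leftrightarrow$(b)$\Leftrightarrow$(c), I would invoke the standard theory of finite DTMCs (e.g., \cite{KemSne1969}). Since $\cM$ is finite, the Cesàro limit $\LongRunFreq^{\gamma_0}$ exists for every $\gamma_0$ and is a stationary distribution of $\bP$, giving (b)$\Rightarrow$(a). Conversely, (a)$\Rightarrow$(b) by taking $\gamma_0 \coloneqq \gamma$, since then $\gamma_i = \gamma$ for all $i$ and the Cesàro limit trivially equals $\gamma$. For (a)$\Leftrightarrow$(c), I would use that the set of stationary distributions of a finite DTMC is exactly the convex hull of the distributions $\LongRunFreq_\cD$, each extended by zero outside of $\cD$, as $\cD$ ranges over the BSCCs of $\cM$: any stationary distribution decomposes as a convex combination via the probabilities of reaching each BSCC, and conversely every such convex combination is stationary because each $\LongRunFreq_\cD$ is.

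The step I expect to require the most care is the linearity argument underpinning (a)$\Leftrightarrow$(d): one must check that the chain-rule factorization of $\distBN(\dissectedBN{\cB}{\cC}{\gamma})$ really isolates $\gamma$ as a single multiplicative factor, which hinges on the construction in Definition~\ref{def:dissected_GBN} redirecting all incoming edges of cutset nodes to their copies in $\cC'$, so that the copies are initial in $\cG_\cC$ and their probabilities enter the chain rule only through the $\gamma$-component of $\iota \combi \gamma$. Once this is verified, the remaining equivalences reduce cleanly to textbook facts about finite Markov chains.
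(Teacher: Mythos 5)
Your proposal is correct and follows essentially the same route as the paper's proof: the equivalence (a)$\Leftrightarrow$(d) via linearity of $\Next$ in $\gamma$ (decomposing $\gamma$ into Dirac distributions so that $\Next(\cB,\cC,\cdot)|_\cC$ becomes multiplication by $\bP$), and the equivalences among (a), (b), (c) via the standard facts that Ces\`aro limits of finite DTMCs exist and coincide with convex combinations of the BSCC long-run frequencies weighted by reachability probabilities, with $\gamma_0 \coloneqq \gamma$ witnessing (a)$\Rightarrow$(b). The only differences are cosmetic --- you close the triangle as (a)$\Leftrightarrow$(b) and (a)$\Leftrightarrow$(c) rather than the paper's cycle (a)$\Rightarrow$(b)$\Rightarrow$(c)$\Rightarrow$(a), and you are somewhat more explicit than the paper about why the chain rule isolates $\gamma$ as a single factor in the dissected GBN.
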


Following Lemma~\ref{lemma:convex_combi_BSCCs}, we can equivalently define the cutset Markov chain semantics as the set of extensions of all stationary distributions for $\bP$:
\[ 
	\semantics{\cB}_\CutC 
	\ \coloneqq \
	\big\{ 
		\Extend(\cB,\cC,\gamma)
		\ssep 
		\gamma \in \Dist\bigl(\Asg(\cC)\bigl)
		\text{ s.t. }
		\gamma = \gamma \cdot \bP
	\big\} . 
\]

\begin{example}\label{ex:cutset_semantics}
	Continuing Ex.~\ref{ex:cutset_MC}, there is a unique stationary distribution $\gamma$ with $\gamma = \gamma \cdot \bP$ for the cutset Markov chain in Fig.\,\ref{fig:cutset_MC}: 
	$\gamma = \tuple{\nicefrac{48}{121}\ \nicefrac{18}{121}\ \nicefrac{40}{121}\ \nicefrac{15}{121}}$.
	As in this case the cutset $\cC = \{X,Y\}$ equals the set of all nodes \cV, we have $\Extend(\cB,\cC,\gamma) = \gamma$ and thus $\semantics{\cB}_{ \Cut\text{-}\{X,Y\} } = \{\gamma\}$.
\end{example}

As shown by Lemma~\ref{lemma:convex_combi_BSCCs}, the behavior of the $\Next$ function is captured by multiplication with the transition matrix \bP.
Both the distributions in the limit average semantics and the long-run frequency distributions of the cutset Markov chain are defined in terms of a Cesàro limit, the former over the sequence of distributions obtained by repeated application of \Next, the latter by repeated multiplication with \bP.
Thus, both semantics are equivalent.

\begin{theorem}%
	\label{thm:cutset_semantics_limAvg}
	Let $\cB$ be a GBN.
	Then for any cutset $\cC$ of $\cB$ and initial distribution $\gamma_0 \in \Dist(\Asg(\cC))$, we have
	\[ 
		\mathit{MCS}(\cB,\cC,\gamma_0) \ = \ \mathit{LimAvg}(\cB,\cC,\gamma_0).
	\]
\end{theorem}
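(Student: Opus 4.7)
The plan is to reduce the theorem to the single algebraic identity
\[
  \Next(\cB, \cC, \gamma)|_\cC \;=\; \gamma \cdot \bP
\]
for every $\gamma \in \Dist(\Asg(\cC))$, where $\bP$ is the transition matrix of $\CutsetMC{\cB}{\cC}$. Once this identity is established, both sides of the claim arise from applying $\Extend(\cB,\cC,\cdot)$ to the Ces\`aro limit of one and the same sequence of cutset distributions, and the theorem follows immediately.

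To prove the identity, I would unfold the construction of $\Next$. The value $\Next(\cB, \cC, \gamma)(b')$ is obtained via the chain rule \eqref{BN} applied to the acyclic dissected GBN $\dissectedBN{\cB}{\cC}{\gamma}$. In $\cG_\cC$ the nodes of $\cC$ have lost all incoming edges (these are redirected to the copies $\cC'$), so the initial nodes are $\Init(\cG) \cup \cC$ with initial distribution $\init \combi \gamma$. In the chain rule expansion for $\distBN(\dissectedBN{\cB}{\cC}{\gamma})$, the factor $\gamma(b_\cC)$ therefore appears purely multiplicatively, while the remaining factors depend on $b$ and $\cB$ but not on $\gamma$. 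This shows that the map $\gamma \mapsto \Next(\cB, \cC, \gamma)$ is affine. Writing $\gamma = \sum_{b \in \Asg(\cC)} \gamma(b) \cdot \Dirac(b)$ and restricting the output to the cutset, affinity together with Definition~\ref{def:cutsetMC} yields
\[
  \Next(\cB, \cC, \gamma)|_\cC(c)
  \;=\; \sum_{b \in \Asg(\cC)} \gamma(b) \cdot \Next(\cB, \cC, \Dirac(b))|_\cC(c)
  \;=\; \sum_{b} \gamma(b)\,\bP(b,c)
  \;=\; (\gamma \cdot \bP)(c).
\]

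The theorem then follows by comparing the two sequences driving the Ces\`aro limits. By Definition~\ref{def:limit_semantics}, $\mathit{LimAvg}(\cB,\cC,\gamma_0) = \Extend(\cB,\cC,\gamma)$ where $\gamma$ is the Ces\`aro limit of $(\delta_i)_{i\geq 0}$ with $\delta_0 = \gamma_0$ and $\delta_{i+1} = \Next(\cB,\cC,\delta_i)|_\cC$; by Definition~\ref{def:cutset_semantics}, $\mathit{MCS}(\cB,\cC,\gamma_0) = \Extend(\cB,\cC,\LongRunFreq^{\gamma_0})$ where $\LongRunFreq^{\gamma_0}$ is the Ces\`aro limit of $(\gamma_i)_{i\geq 0}$ with $\gamma_{i+1} = \gamma_i \cdot \bP$. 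Inductively applying the identity gives $\delta_i = \gamma_i$ for all $i$, so the two Ces\`aro sequences agree term-by-term. Their common limit exists by the Markov chain fact recalled in Section~\ref{sec:MCs} (which additionally shows that $\mathit{LimAvg}(\cB,\cC,\gamma_0)$ is always defined), and applying $\Extend(\cB,\cC,\cdot)$ on both sides concludes the proof.

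The main obstacle is verifying the affinity of $\Next$ cleanly. The dissection construction together with the product form $\init \combi \gamma$ must be tracked through the chain rule, and one has to be careful about the $\cC \leftrightarrow \cC'$ renaming so that restricting the output to the cutset really does correspond to matrix--vector multiplication by $\bP$ in the sense of Definition~\ref{def:cutsetMC}. Once that bookkeeping is in place, everything else is a routine consequence of linearity and of standard Ces\`aro-limit properties of finite Markov chains.
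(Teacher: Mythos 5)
Your proof is correct and follows essentially the same route as the paper: the paper justifies the theorem by the remark that $\Next$ restricted to the cutset acts as multiplication by $\bP$ (asserted via the Dirac decomposition $\Next(\cB,\cC,\gamma) = \sum_{c}\gamma(c)\cdot\Next(\cB,\cC,\Dirac(c))$ in the proof of the equivalence (a)$\Leftrightarrow$(d) of Lemma~\ref{lemma:convex_combi_BSCCs}), after which both Ces\`aro sequences coincide term by term. Your write-up is in fact somewhat more careful than the paper's, since you explicitly derive the linearity of $\gamma \mapsto \Next(\cB,\cC,\gamma)$ from the chain rule and the product form $\init\combi\gamma$, a step the paper leaves implicit.
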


We know that $\mathit{Lim}(\cB,\cC,\gamma_0)$ is not defined for all initial distributions $\gamma_0$.
However, the set of all limits that do exist contains exactly the distributions admitted by the Markov chain and limit average semantics.
\begin{lemma}
	\label{lemma:mc_lim_limavg}
	Let $\cB$ be a GBN.
	Then for any cutset $\cC$ of $\cB$, we have
	\[ 
		\semantics{\cB}_\CutC 
		\ = \ 
		\semantics{\cB}_\LimAvgC 
		\ = \
		\semantics{\cB}_\LimC .
	\]
\end{lemma}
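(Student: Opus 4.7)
The plan is to chain the three set equalities via results already established. By Theorem~\ref{thm:cutset_semantics_limAvg}, the pointwise equality $\mathit{MCS}(\cB,\cC,\gamma_0) = \mathit{LimAvg}(\cB,\cC,\gamma_0)$ holds on the entire domain $\Dist(\Asg(\cC))$ of initial cutset distributions. Since $\mathit{LimAvg}(\cB,\cC,\cdot)$ is defined for every $\gamma_0$ (as the Ces\`aro limit always exists on a finite state space), the images of the two functions coincide, yielding $\semantics{\cB}_\CutC = \semantics{\cB}_\LimAvgC$ directly from the definitions of these image sets.

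For the second equality, the easy inclusion $\semantics{\cB}_\LimC \subseteq \semantics{\cB}_\LimAvgC$ was already observed right after Definition~\ref{def:limit_semantics}: whenever the regular limit exists it coincides with the Ces\`aro limit, so any $\mathit{Lim}(\cB,\cC,\gamma_0)$ equals $\mathit{LimAvg}(\cB,\cC,\gamma_0)$. The main work is the reverse inclusion $\semantics{\cB}_\LimAvgC \subseteq \semantics{\cB}_\LimC$. Here I would exploit the fixed-point characterization provided by Lemma~\ref{lemma:convex_combi_BSCCs}. Given any $\mu \in \semantics{\cB}_\LimAvgC$, there exists some $\gamma_0$ with $\mu = \Extend(\cB,\cC,\gamma^\ast)$, where $\gamma^\ast$ is the Ces\`aro limit of the sequence $(\gamma_n)_{n \in \Nat}$ generated from $\gamma_0$. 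By Lemma~\ref{lemma:convex_combi_BSCCs}, condition (b) implies condition (d), so $\gamma^\ast = \Next(\cB,\cC,\gamma^\ast)|_\cC$, i.e., $\gamma^\ast$ is a fixed point of the iteration.

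Now the key observation: using $\gamma^\ast$ itself as the initial cutset distribution, the generated sequence $(\gamma_n')_{n \in \Nat}$ is constant, $\gamma_n' = \gamma^\ast$ for all $n$. Hence the regular limit trivially exists, $\lim_{n \to \infty} \gamma_n' = \gamma^\ast$, so $\mathit{Lim}(\cB,\cC,\gamma^\ast)$ is defined and equals $\Extend(\cB,\cC,\gamma^\ast) = \mu$. Therefore $\mu \in \semantics{\cB}_\LimC$, completing the reverse inclusion. I do not anticipate a serious obstacle: the conceptual step is recognizing that although a given initial distribution need not produce a convergent sequence, the Ces\`aro limit it produces \emph{is itself} a fixed point, and can thus be re-used as a convergent initial distribution. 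All heavy lifting (existence of the Ces\`aro limit, the characterization of stationary distributions via fixed points of $\Next$, and the correspondence between $\Next$ and the Markov chain) has been carried out in the preceding results.
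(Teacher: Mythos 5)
Your proposal is correct and follows essentially the same route as the paper: the first equality via Theorem~\ref{thm:cutset_semantics_limAvg}, the easy inclusion $\semantics{\cB}_\LimC \subseteq \semantics{\cB}_\LimAvgC$ from the remark after Definition~\ref{def:limit_semantics}, and the remaining inclusion by observing via Lemma~\ref{lemma:convex_combi_BSCCs} that the limiting cutset distribution is a fixed point of $\Next(\cB,\cC,\cdot)|_\cC$ and can therefore serve as its own initial distribution, producing a constant (hence convergent) sequence. The only cosmetic difference is that the paper starts from $\mu \in \semantics{\cB}_\CutC$ while you start from $\mu \in \semantics{\cB}_\LimAvgC$, which is immaterial given the already-established first equality.
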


\subsection{Properties}\label{sec:limit_semantics_properties}

By the equivalences established in Theorem~\ref{thm:cutset_semantics_limAvg} and Lemma~\ref{lemma:mc_lim_limavg},
we gain profound insights about the limit and limit average distributions 
by Markov chain analysis.
As every finite-state Markov chain has at least one stationary distribution,
it immediately follows that $\semantics{\cB}_\CutC$---and thus $\semantics{\cB}_\LimAvgC$ and $\semantics{\cB}_\LimC$---is always non-empty.
Further, if the cutset Markov chain is \emph{irreducible}, i.e., the graph is strongly connected, 
the stationary distribution is unique and $\semantics{\cB}_\CutC$ is a singleton.
The existence of the limit semantics for a given initial distribution $\gamma_0$ hinges on the \emph{periodicity} of the cutset Markov chain.
\begin{example}
	We return to Example~\ref{ex:no_limit} and construct the cutset Markov chain $\CutsetMC{\cB}{\cC} = \tuple{\Asg(\cC), \bP}$ for the (implicitly used) cutset $\cC = \{X,Y\}$:
 	\begin{center}
 	\begin{tikzpicture}
 		\node (AB)   at (0.0, 0.0) [MCstate] {$\substack{ X\!{=}\T \\ Y\!{=}\T }$};
 		\node (AnB)  at (1.5, 0.0) [MCstate] {$\substack{ X\!{=}\T \\ Y\!{=}\F }$};
 		\node (nAB)  at (0.0,-1.5) [MCstate] {$\substack{ X\!{=}\F \\ Y\!{=}\T }$};
 		\node (nAnB) at (1.5,-1.5) [MCstate] {$\substack{ X\!{=}\F \\ Y\!{=}\F }$};
 		\draw[myarrow] (AB)   -> node[label, above]{$1$} (AnB);
 		\draw[myarrow] (AnB)  -> node[label, right]{$1$} (nAnB);
 		\draw[myarrow] (nAnB) -> node[label, below]{$1$} (nAB);
 		\draw[myarrow] (nAB)  -> node[label, left]{$1$} (AB);
 	\end{tikzpicture}
 	\end{center}
 	The chain is strongly connected and has a period of length $4$, which explains the observed behavior that for any initial distribution $\gamma_0$, we got the sequence 
 	\[
 		\gamma_0, \gamma_1, \gamma_2, \gamma_3, \gamma_0, \gamma_1, \dots
 	\]
	This sequence obviously converges only for initial distributions that are stationary, i.e., if we have $\gamma_0 = \gamma_0 \cdot \bP$.
\end{example}

The following lemma summarizes the implications that can be drawn from close inspection of the cutset Markov chain.

\begin{lemma}[Cardinality]\label{lemma:cardinality}
	Let \cB be a GBN with cutset \cC and cutset Markov chain $\CutsetMC{\cB}{\cC} = \tuple{\Asg(\cC), \bP}$.
	Further, let $k > 0$ denote the number of bottom SCCs $\cD_1,\dots,\cD_k$ of $\CutsetMC{\cB}{\cC}$.
	Then
	\begin{enumerate}
	\item the cardinality of the cutset Markov chain semantics is given by
	\[
	  \bigl| \semantics{\cB}_\CutC \bigr|
	  \ = \ 
	  \left\{
	  \begin{array}{cl}
	  	1      & \ \text{ if} \ \ k = 1, \\
	   	\infty & \ \text{ if} \ \ k > 1;
	  \end{array}
	  \right.
	\]
	\item
		$\mathit{Lim}(\cB,\cC,\gamma_0)$ is defined for all $\gamma_0 \in \Dist(\Asg(\cC))$ if all $\cD_i$ are aperiodic;
	\item
		$\mathit{Lim}(\cB,\cC,\gamma)$ is only defined for stationary distributions $\gamma$ with $\gamma = \gamma \cdot \bP$ if $\cD_i$ is periodic for any $1 \leqslant i \leqslant k$.
	\end{enumerate}
\end{lemma}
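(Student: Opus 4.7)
All three parts follow from Lemma~\ref{lemma:convex_combi_BSCCs} combined with classical asymptotic theory of finite-state Markov chains. Throughout, write $\cM = \CutsetMC{\cB}{\cC}$ and let $\Sigma = \{\gamma \in \Dist(\Asg(\cC)) \ssep \gamma = \gamma \cdot \bP\}$ denote the set of stationary distributions of $\cM$.

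For part~(1), I would first argue that $\Extend(\cB,\cC,\cdot)$ is injective on $\Dist(\Asg(\cC))$: by construction of $\dissectedBN{\cB}{\cC}{\gamma}$ the cutset nodes are initial with marginal $\gamma$ under $\init \combi \gamma$, so $\Extend(\cB,\cC,\gamma)|_\cC = \gamma$. The equivalent characterization of $\semantics{\cB}_\CutC$ stated right after Lemma~\ref{lemma:convex_combi_BSCCs} then yields $|\semantics{\cB}_\CutC| = |\Sigma|$. By part~(c) of that lemma, $\Sigma$ is the convex hull of $\LongRunFreq_{\cD_1}, \ldots, \LongRunFreq_{\cD_k}$; since distinct BSCCs have disjoint supports these vectors are affinely independent, so the convex hull is a singleton when $k=1$ and uncountably infinite when $k \geq 2$.

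For part~(2), I would invoke the standard convergence theorem for finite Markov chains: if every BSCC is aperiodic, then $\bP^n$ converges entrywise as $n \to \infty$. Concretely, transient mass is absorbed into the BSCCs, and within each aperiodic $\cD_i$ the distribution settles on its unique stationary vector $\LongRunFreq_{\cD_i}$. Hence $\gamma_n = \gamma_0 \cdot \bP^n$ converges for every $\gamma_0$, so $\mathit{Lim}(\cB,\cC,\gamma_0)$ is defined on all of $\Dist(\Asg(\cC))$.

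For part~(3), stationary $\gamma \in \Sigma$ trivially make the sequence $\gamma_n$ constant, so $\mathit{Lim}(\cB,\cC,\gamma)$ is defined. For the converse direction, if some $\cD_i$ has period $p > 1$, I would invoke the cyclic-class decomposition: $\cD_i$ splits into classes $\cD_i^0, \ldots, \cD_i^{p-1}$ that $\bP$ permutes cyclically, and picking $s \in \cD_i^0$ and $\gamma_0 = \Dirac(s)$, the sequence $\gamma_0 \cdot \bP^n$ places all its $\cD_i$-mass on class $\cD_i^{n \bmod p}$, hence cannot converge. The main obstacle here is precisely this part: Lemma~\ref{lemma:convex_combi_BSCCs}(b) guarantees existence of the Cesàro limit unconditionally, so the honest (non-averaged) limit genuinely requires absence of periodicity, and one must carefully delimit which non-stationary initial distributions fail to converge using the cyclic structure of periodic BSCCs — the Perron–Frobenius picture for periodic stochastic matrices is the cleanest tool for this step, while parts~(1) and (2) are essentially bookkeeping on top of Lemma~\ref{lemma:convex_combi_BSCCs} and the classical limit theorem for aperiodic chains.
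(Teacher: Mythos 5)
Parts (1) and (2) of your proposal are correct and take essentially the paper's route. Part (1) reduces $\bigl|\semantics{\cB}_\CutC\bigr|$ to the cardinality of the set of stationary distributions via Lemma~\ref{lemma:convex_combi_BSCCs}(c) and the convex-hull structure of that set; your explicit remark that $\Extend(\cB,\cC,\gamma)|_\cC = \gamma$, hence $\Extend$ is injective, is a detail the paper leaves implicit but which is genuinely needed to conclude that distinct stationary $\gamma$ yield distinct elements of $\semantics{\cB}_\CutC$. Part (2) is the classical limit theorem for finite chains whose BSCCs are all aperiodic, transported to the $\Next$-recursion through the identity $\Next(\cB,\cC,\gamma)|_\cC = \gamma\cdot\bP$ established in Lemma~\ref{lemma:convex_combi_BSCCs}; this matches the paper.

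The gap is in part (3), and you have flagged it yourself without closing it. The lemma asserts that if some $\cD_i$ is periodic then $\mathit{Lim}(\cB,\cC,\gamma)$ is defined \emph{only} for stationary $\gamma$, i.e., the sequence $(\gamma_n)_n$ fails to converge for \emph{every} $\gamma_0$ with $\gamma_0 \neq \gamma_0\cdot\bP$. Your argument exhibits a single divergent instance ($\gamma_0$ a Dirac on one cyclic class of $\cD_i$), which only shows that $\mathit{Lim}(\cB,\cC,\cdot)$ is not total---a strictly weaker statement. The paper instead takes an arbitrary $\gamma_0$, distinguishes the cases $\gamma_0 = \gamma_0\cdot\bP$ and $\gamma_0 \neq \gamma_0\cdot\bP$, and in the latter case argues that the restriction $\nu_n$ of $\gamma_n$ to the periodic BSCC recurs periodically without being constant, so that $(\gamma_n)_n$ cannot converge. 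Whichever way one completes the argument, the universal quantifier over non-stationary $\gamma_0$ is the crux: the cyclic-class decomposition must be brought to bear on an arbitrary such $\gamma_0$, and one must handle (or exclude) initial distributions whose mass on $\cD_i$ is already balanced across the cyclic classes, or which reach such a balanced configuration after passing through transient states. These are exactly the case distinctions your sketch defers, so as written part (3) does not yet prove the stated claim.
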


A handy sufficient (albeit not necessary) criterion for both aperiodicity 
and the existence of a single bottom SCC in the cutset Markov chain
is the absence of zero and one entries in the CPTs and the initial distribution of a GBN.
\begin{definition}[Smooth GBNs]\label{def:smooth}
 	A GBN $\cB = \tuple{ \cG, \cP, \init }$ is called \emph{smooth} iff all CPT entries as given by \cP and all values in $\init$ are in the open interval $]0,1[$.
\end{definition}

\begin{lemma}\label{lemma:smooth_complete}
 	Let \cB be a smooth GBN and $\cC$ a cutset of $\cB$.
 	Then the graph of the cutset Markov chain $\CutsetMC{\cB}{\cC}$ is a complete digraph.
\end{lemma}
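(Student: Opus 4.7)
The plan is to show that for every pair of cutset assignments $b, c \in \Asg(\cC)$, the transition probability $\bP(b,c) = \Next(\cB, \cC, \Dirac(b))(c)$ is strictly positive, which directly gives completeness of the underlying graph (including self-loops).

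First I would unfold the definitions. By Definition~\ref{def:cutsetMC} and the convention for partial assignments,
\[
    \bP(b,c) \ = \ \Next(\cB,\cC,\Dirac(b))(c) \ = \ \sum_{\substack{f \in \Asg(\cV \cup \cC') \\ f_\cC = b,\ f_{\cC'} = c}} \, \distBN\bigl(\dissectedBN{\cB}{\cC}{\Dirac(b)}\bigr)(f).
\]
Every assignment $f$ on the variables of the dissected GBN with $f_\cC = b$ and $f_{\cC'} = c$ admits at least one completion on $\cV \setminus \cC$, so this sum ranges over a non-empty index set. It therefore suffices to show that at least one summand is strictly positive.

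Next I would apply the chain rule~\eqref{BN} to a single such $f$. Since $\dissectedBN{\cB}{\cC}{\Dirac(b)}$ is acyclic with initial nodes $\Init(\cG) \cup \cC$, initial distribution $\init \combi \Dirac(b)$, and CPTs $\cP_\cC$ inherited from $\cP$, the term factorises as
\[
    \init\bigl(f_{\Init(\cG)}\bigr) \, \cdot \, \Dirac(b)(f_\cC) \, \cdot \!\!\prod_{X \in (\cV \cup \cC') \setminus (\Init(\cG) \cup \cC)}\!\! \CPTable\bigl(f_X \mid f_{\Pre(X)}\bigr).
\]
Because $f_\cC = b$, the Dirac factor equals $1$. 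Smoothness of \cB (Definition~\ref{def:smooth}) forces every entry of $\init$ and every CPT value of \cP to lie in $]0,1[$; since $\cP_\cC$ just reuses those CPT entries, all remaining factors are strictly positive. Hence the whole product, and thus the whole sum, is strictly positive.

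The argument is essentially an unwinding exercise: the only subtlety is bookkeeping across the isomorphism between cutset nodes and their copies and making sure the index set of the sum is non-empty, which is immediate from the freedom to extend any partial assignment on $\cC \cup \cC'$ to the full node set of the dissected GBN. Once the factorisation is in place, positivity is direct from the smoothness hypothesis, so no quantitative estimate or deeper structural reasoning is required.
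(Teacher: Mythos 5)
Your proof is correct and follows essentially the same route as the paper: reduce $\bP(b,c)>0$ to the marginal $\distBN\bigl(\dissectedBN{\cB}{\cC}{\Dirac(b)}\bigr)(c')$, observe that the only surviving summands are those agreeing with $b$ on $\cC$ (where the Dirac factor is $1$), note the index set is non-empty because $\cC$ and $\cC'$ are disjoint, and conclude positivity of each surviving term from the chain-rule factorisation together with smoothness. The only cosmetic difference is that you restrict the sum to the surviving terms up front, whereas the paper first writes the full marginal and then argues the non-agreeing part vanishes.
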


\begin{corollary}\label{col:smooth_limit}
 	The limit semantics of a smooth GBN \cB is a singleton for every cutset \cC of \cB and $\mathit{Lim}(\cB,\cC,\gamma_0)$ is defined for all $\gamma_0 \in Dist(\Asg(\cC))$.
\end{corollary}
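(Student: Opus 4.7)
The plan is to chain together the structural facts already established in Lemmas~\ref{lemma:cardinality}, \ref{lemma:mc_lim_limavg}, and \ref{lemma:smooth_complete}, so the proof is essentially a book-keeping argument with no new combinatorial content.

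First, I would invoke Lemma~\ref{lemma:smooth_complete} to conclude that for a smooth GBN $\cB$ and any cutset $\cC$, the graph underlying the cutset Markov chain $\CutsetMC{\cB}{\cC}$ is a complete digraph on $\Asg(\cC)$. Since completeness includes self-loops at every state, every state is trivially reachable from every other state in a single step, so the whole state space forms one strongly connected component, and consequently a single bottom SCC $\cD_1 = \Asg(\cC)$; furthermore the self-loops yield a greatest common divisor of cycle lengths equal to $1$, so $\cD_1$ is aperiodic.

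With $k = 1$ bottom SCC, Lemma~\ref{lemma:cardinality}(1) gives $|\semantics{\cB}_\CutC| = 1$. By Lemma~\ref{lemma:mc_lim_limavg}, $\semantics{\cB}_\LimC = \semantics{\cB}_\CutC$, so $\semantics{\cB}_\LimC$ is a singleton as well. Since the unique BSCC is aperiodic, Lemma~\ref{lemma:cardinality}(2) yields that $\mathit{Lim}(\cB,\cC,\gamma_0)$ is defined for every initial cutset distribution $\gamma_0 \in \Dist(\Asg(\cC))$, which completes the proof.

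The only potential pitfall worth flagging is the periodicity step: one must explicitly note that the complete digraph from Lemma~\ref{lemma:smooth_complete} includes self-loops (equivalently, $\bP(b,b) > 0$ for each $b \in \Asg(\cC)$, which follows from all CPT entries lying in $(0,1)$), since aperiodicity would not be immediate from strong connectivity alone. Beyond this, the argument is a direct assembly of previously proved facts.
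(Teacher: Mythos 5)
Your proof is correct and follows essentially the same route as the paper, which likewise derives the result from Lemma~\ref{lemma:smooth_complete} and Lemma~\ref{lemma:cardinality} by observing that a complete digraph forms a single, necessarily aperiodic bottom SCC. Your explicit remark that completeness includes self-loops (since $\bP(b,b)>0$) is a worthwhile clarification of why aperiodicity holds, but it does not change the argument.
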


As noted in \cite{koller_probabilistic_2009}, one rarely needs to assign a probability of zero (or, conversely, of one) in real-world applications; and doing so in cases where some event is extremely unlikely but not impossible is a common modeling error.
This observation gives reason to expect that most GBNs encountered in practice are smooth, and their semantics is thus, in a sense, well-behaved.

\subsection{Relation to Constraints Semantics}
We take a closer look at how the cutset semantics relates to the CPT-consistency semantics defined 
in Section \ref{sec:constraints}. CPTs of nodes outside cutsets remain unaffected in 
the dissected BNs from which the Markov chain semantics is computed.
Since there are cyclic GBNs for which no CPT-consistent distribution exists 
(cf.\ Example~\ref{ex:cpt-semantics}) while Markov chain semantics always yields at least 
one solution due to Lemma~\ref{lemma:cardinality}, it cannot be expected that cutset nodes are necessarily CPT-consistent. %
However, they are always weakly CPT-consistent.

\begin{lemma}\label{lem:consistency-cutset}
	Let \cB be a GBN over nodes $\cV$, $\cC \subseteq \cV$ a cutset for $\cB$, and $\mu \in \CUTsem{\cB}{\cC}$.
	Then $\mu$ is strongly CPT-consistent for all nodes in $\cV{\setminus}\cC$ and weakly 
	CPT-consistent for the nodes in $\cC$.
\end{lemma}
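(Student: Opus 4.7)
The plan is to fix $\mu \in \CUTsem{\cB}{\cC}$ and, via Lemma~\ref{lemma:convex_combi_BSCCs}(a) and~(d), write $\mu = \Extend(\cB,\cC,\gamma)$ for some stationary cutset distribution $\gamma \in \Dist(\Asg(\cC))$ satisfying $\gamma = \Next(\cB,\cC,\gamma)|_\cC$. Everything then reduces to reasoning inside the acyclic dissected GBN $\dissectedBN{\cB}{\cC}{\gamma}$, whose standard BN semantics $\distBN\bigl(\dissectedBN{\cB}{\cC}{\gamma}\bigr)$ is automatically strongly CPT-consistent at every non-initial node by the chain rule~\eqref{BN}.

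For a non-cutset node $X \in \cV{\setminus}\cC$, the dissection leaves the parent set $\Pre(X)$ and the CPT $\cP(X,\cdot)$ unchanged, so $X$ remains non-initial in $\dissectedBN{\cB}{\cC}{\gamma}$ with the same conditional probabilities. Applying strong CPT-consistency of the standard BN semantics at $X$ yields $\mu(X{=}\T,c) = \mu(c)\cdot\CPTable(X{=}\T\mid c)$ for each $c \in \Asg(\Pre(X))$; because $\{X\}\cup\Pre(X) \subseteq \cV$, this identity survives the restriction from $\cV\cup\cC'$ to $\cV$ built into \Extend.

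For a cutset node $X \in \cC$, the node $X$ becomes initial in $\dissectedBN{\cB}{\cC}{\gamma}$, so strong CPT-consistency at $X$ generally fails; however, the fresh copy $X' \in \cC'$ inherits both the CPT and the parent set of $X$, with $\Pre(X') = \Pre(X) \subseteq \cV$. I apply strong CPT-consistency at $X'$ and sum over all parent assignments to get
\[
	\distBN\bigl(\dissectedBN{\cB}{\cC}{\gamma}\bigr)(X'{=}\T)
	\ = \
	\sum_{c \in \Asg(\Pre(X))}
	\mu(c)\cdot\CPTable(X{=}\T\mid c),
\]
using that $\Pre(X) \subseteq \cV$ implies $\distBN\bigl(\dissectedBN{\cB}{\cC}{\gamma}\bigr)(c) = \mu(c)$. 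It then remains to identify $\mu(X{=}\T) = \distBN\bigl(\dissectedBN{\cB}{\cC}{\gamma}\bigr)(X'{=}\T)$: by the definition of \Extend the left side equals the marginal of the initial node $X$ in the dissection, which is $\gamma(X{=}\T)$; by the definition of \Next the right side equals $\Next(\cB,\cC,\gamma)(X{=}\T)$, which equals $\gamma(X{=}\T)$ by the stationarity of $\gamma$. Combining these two identifications yields weak CPT-consistency for $X$.

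The only delicate point---and the main obstacle---is the bookkeeping between the original node $X$ and its dissection copy $X'$ in the cutset case. The argument hinges on the observation that stationarity $\gamma = \Next(\cB,\cC,\gamma)|_\cC$ is precisely what equalises the marginals of $X$ and $X'$ under $\distBN\bigl(\dissectedBN{\cB}{\cC}{\gamma}\bigr)$, thereby bridging the CPT-consistency identity that lives at $X'$ and the weak-CPT requirement that is phrased at $X$.
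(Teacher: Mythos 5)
Your proposal is correct and follows essentially the same route as the paper: strong CPT-consistency off the cutset is inherited from the standard BN semantics of the acyclic dissection, and weak CPT-consistency on the cutset comes from expanding the marginal of the copy $X'$ via its (unchanged) CPT and then using stationarity $\gamma = \Next(\cB,\cC,\gamma)|_\cC$ to identify the marginals of $X$ and $X'$. The paper's only cosmetic difference is that it computes $\delta(Y'{=}\T)$ by an explicit double sum over cutset assignments and complying parent assignments, whereas you sum the \eqref{CPT} identity at $X'$ directly over $\Asg(\Pre(X))$ --- the same calculation.
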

The lemma shows a way to find fully CPT consistent distributions:
Consider there is a distribution $\mu \in \semantics{\cB}_\CutC \cap \semantics{\cB}_{\Cut\text{-}\cD}$ for two disjoint cutsets $\cC$ and $\cD$.
Then by Lemma~\ref{lem:consistency-cutset} the nodes in $\cV\setminus\cC$ and $\cV\setminus\cD$ are CPT consistent, so in fact $\mu$ is CPT consistent.
In general, we get the following result.
\begin{lemma}\label{lemma:cutset_MC_sem_CPT}
	Let \cB be a GBN over nodes \cV and $\cC_1, \dots, \cC_k$ cutsets of \cB s.t.\ for each node $X\!\in\cV$ there is an $i \in \{1,\dots,k\}$ with $X\!\notin \cC_i$.
	Then 
	\[
	\bigcap_{0 \leqslant i \leqslant k} 
		\semantics{\cB}_{\Cut\text{-}\cC_i}
		\ \subseteq \
		\semantics{\cB}_\CPT.
	\]
\end{lemma}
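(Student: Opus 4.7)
The plan is to reduce the claim to a direct, node-by-node application of Lemma~\ref{lem:consistency-cutset}. Let $\mu \in \bigcap_{i=1}^{k} \semantics{\cB}_{\Cut\text{-}\cC_i}$ be arbitrary. According to Definition~\ref{def:cpt-semantics}, to show $\mu \in \semantics{\cB}_\CPT$ two conditions have to be verified: $\mu|_{\Init(\cG)} = \init$ and strong CPT-consistency of $\mu$ for every non-initial node $X \in \cV \setminus \Init(\cG)$.

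For the initial-distribution condition, I would use that $\mu \in \semantics{\cB}_{\Cut\text{-}\cC_1}$ already implies it, regardless of the other cutsets. Indeed, by Definition~\ref{def:cutset_semantics} we have $\mu = \Extend(\cB, \cC_1, \LongRunFreq^{\gamma_0})$ for some $\gamma_0$, and by definition of $\Extend$ this equals $\distBN(\dissectedBN{\cB}{\cC_1}{\LongRunFreq^{\gamma_0}})|_\cV$. Since initial nodes of $\cG$ are not contained in any cutset (they cannot lie on cycles), they remain initial in the dissected graph $\cG_{\cC_1}$, and the initial distribution prescribed by the dissected GBN is $\init \combi \LongRunFreq^{\gamma_0}$. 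By the chain rule for the (acyclic) dissected GBN, the marginal over $\Init(\cG)$ of $\distBN(\dissectedBN{\cB}{\cC_1}{\LongRunFreq^{\gamma_0}})$ is exactly $\init$, giving $\mu|_{\Init(\cG)} = \init$.

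For the CPT-consistency condition, I would proceed node-wise. Fix any $X \in \cV \setminus \Init(\cG)$. By hypothesis there exists $i \in \{1,\dots,k\}$ with $X \notin \cC_i$, so $X \in \cV \setminus \cC_i$. Since $\mu \in \semantics{\cB}_{\Cut\text{-}\cC_i}$, Lemma~\ref{lem:consistency-cutset} yields strong CPT-consistency of $\mu$ for every node in $\cV \setminus \cC_i$, in particular for $X$. As $X$ was arbitrary, $\mu$ is strongly CPT-consistent for all non-initial nodes. Combined with the previous paragraph, this shows $\mu \in \semantics{\cB}_\CPT$, completing the argument.

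There is no real obstacle: Lemma~\ref{lem:consistency-cutset} does all the work, and the hypothesis on the family $\cC_1, \dots, \cC_k$ is precisely what is needed to patch together \emph{strong} CPT-consistency (which the cutset semantics guarantees outside the cutset) for every node simultaneously. The only subtlety worth stating carefully is the initial-distribution part, since CPT-consistency alone would yield agreement only on the marginals of initial nodes, whereas $\semantics{\cB}_\CPT$ requires the full joint $\init$; the $\Extend$ construction supplies this directly.
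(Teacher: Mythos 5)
Your proof is correct and takes essentially the same route as the paper's: for each node, pick a cutset $\cC_i$ avoiding it and invoke Lemma~\ref{lem:consistency-cutset} to get strong CPT-consistency there. Your extra verification of the condition $\mu|_{\Init(\cG)} = \init$ via the $\Extend$ construction is a detail the paper's proof leaves implicit, but it does not change the argument.
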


We take a look at which independencies are necessarily observed by the distributions in $\CUTsem{\cB}{\cC}$.
Let $\gamma \in \Dist(\Asg(\cC))$ be the cutset distribution and let $\cG[\cC]$ denote the graph of $\dissectedBN{\cB}{\cC}{\gamma}$ restricted to the nodes in $\cV$ such that the cutset nodes in $\cC$ are initial.
Then by Lemma~\ref{lemma:d-sep_init}, the $d$-separation independencies of the closure of $\cG[\cC]$ hold in all distributions $\mu \in \CUTsem{\cB}{\cC}$, i.e., $\dsep\bigl(\initCl(\cG[\cC])\bigr) \subseteq \Indep(\mu)$.
The next lemma states that any \CPT-consistent distribution that satisfies these independence constraints for some cutset $\cC$ also belongs to $\CUTsem{\cB}{\cC}$.

\begin{lemma}\label{lem:cpt-in-cutset}
	Let $\cB$ be a GBN with cutset $\cC$ and $\cI_\cC = \initCl(\cG[\cC])$.
	Then we have 
	\[
		\semantics{\cB}_{\CPT\text{-}\cI_\cC}
		\ \subseteq \
		\semantics{\cB}_\CutC.
	\]
\end{lemma}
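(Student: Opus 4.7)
Let $\mu \in \semantics{\cB}_{\CPT\text{-}\cI_\cC}$ and set $\gamma \coloneqq \mu|_\cC$. By the equivalent characterization of $\semantics{\cB}_\CutC$ following Lemma~\ref{lemma:convex_combi_BSCCs}, namely $\semantics{\cB}_\CutC = \{\Extend(\cB,\cC,\gamma) : \gamma \cdot \bP = \gamma\}$, it suffices to prove that $\mu = \Extend(\cB,\cC,\gamma)$ and that $\gamma$ is stationary, equivalently $\gamma = \Next(\cB,\cC,\gamma)|_\cC$ by Lemma~\ref{lemma:convex_combi_BSCCs}(d).

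The plan for $\mu = \Extend(\cB,\cC,\gamma)$ is to factorize $\mu$ along a topological order of the acyclic graph $\cG[\cC]$ whose prefix is the set of initial nodes $\Init(\cG) \cup \cC$. Expanding $\mu$ via the probabilistic chain rule, I reduce to showing for every non-initial node $X$ of $\cG[\cC]$ that $\mu(b_X \mid b_{\text{earlier}}) = \CPTable(b_X \mid b_{\Pre(X)})$. CPT-consistency of $\mu$ gives the right-hand identity once the conditioning set has been collapsed to $\Pre(X)$, and the collapse is justified by the d-separation independencies in $\cI_\cC = \dsep(\initCl(\cG[\cC]))$: the edges added by $\initCl$ are incident only to initial nodes, so any simple undirected path from $X$ to an earlier non-parent must enter $X$ via one of its parents (a chain or fork blocked by conditioning on $\Pre(X)$) or pass through a descendant-collider of $X$, which by acyclicity of $\cG[\cC]$ cannot lie in $\Pre(X)$. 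Telescoping the chain rule yields the factorization $\mu(b) = \init(b_{\Init(\cG)}) \cdot \gamma(b_\cC) \cdot \prod_{X \in \cV \setminus (\Init(\cG)\cup\cC)} \CPTable(b_X \mid b_{\Pre(X)}) = \Extend(\cB,\cC,\gamma)(b)$.

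For stationarity, let $\mu' \coloneqq \distBN(\dissectedBN{\cB}{\cC}{\gamma})$ denote the full joint distribution of the dissected BN over $\cV \cup \cC'$. By the previous part $\mu'|_\cV = \mu$, while by construction $\mu'|_\cC = \gamma$ and $\mu'|_{\cC'} = \Next(\cB,\cC,\gamma)|_\cC$. Every copy $Y' \in \cC'$ inherits $\Pre(Y)$ as its parent set and $\cP(Y,\cdot)$ as its CPT in $\dissectedBN{\cB}{\cC}{\gamma}$, so the joint law of $(Y',\Pre(Y))$ under $\mu'$ coincides with that of $(Y,\Pre(Y))$ under $\mu$ by CPT-consistency of $\mu$ at the cutset node $Y$. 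The concluding step is to lift this per-node coincidence to the full joint identity $\mu'|_\cC = \mu'|_{\cC'}$, which delivers the stationarity $\gamma = \Next(\cB,\cC,\gamma)|_\cC$ and hence $\mu = \Extend(\cB,\cC,\gamma) \in \semantics{\cB}_\CutC$.

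I expect this joint-lifting step to be the main obstacle, as it requires combining the per-node CPT-consistency of cutset nodes with the independencies in $\cI_\cC$ to decompose $\mu(\cC = a)$ as $\sum_c \mu(\bigcup_{Y \in \cC}\Pre(Y) = c)\prod_{Y \in \cC}\CPTable(a_Y \mid c_{\Pre(Y)})$, matching the formula defining $\mu'|_{\cC'}(a) = \sum_b \mu(b)\prod_{Y \in \cC}\CPTable(a_Y \mid b_{\Pre(Y)})$. Once this joint factorization is in place, stationarity and the inclusion $\mu \in \semantics{\cB}_\CutC$ follow by a direct computation.
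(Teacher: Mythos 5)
Your overall strategy is the same as the paper's: reduce the claim to (i) $\mu = \Extend(\cB,\cC,\gamma)$ for $\gamma = \mu|_\cC$ and (ii) stationarity $\gamma = \Next(\cB,\cC,\gamma)|_\cC$. The paper packages both into one stroke by defining the explicit extension $\tilde{\mu}(b,c') = \mu(b)\cdot\prod_{Y\in\cC}\CPTable(Y{=}c'(Y')\mid b_{\Pre(Y)})$ and arguing $\tilde{\mu} = \distBN(\dissectedBN{\cB}{\cC}{\gamma})$ via the uniqueness of the BN semantics as the CPT-consistent distribution with the right initial marginal and independencies; your chain-rule factorization along a topological order of $\cG[\cC]$ is that same uniqueness argument unpacked, and is fine as far as it goes. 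However, what you submit is a plan, not a proof, and the step you explicitly defer is not a technicality: the identity $\mu(a) = \sum_{b}\mu(b)\prod_{Y\in\cC}\CPTable(a_Y\mid b_{\Pre(Y)})$ for $a\in\Asg(\cC)$ \emph{is} the statement $\gamma = \gamma\cdot\bP$. Indeed, once part (i) is established, the right-hand side equals $\tilde{\mu}(a') = \distBN(\dissectedBN{\cB}{\cC}{\gamma})(a') = (\gamma\cdot\bP)(a)$ while the left-hand side is $\gamma(a)$, so "once this joint factorization is in place, stationarity follows by a direct computation" is circular — the joint factorization is the entire remaining content of the lemma. Saying you expect it to be the main obstacle and stopping there leaves the lemma unproved.

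To close the gap you must show how strong CPT-consistency of $\mu$ at each cutset node $Y$ (a per-node identity relative to $\Pre(Y)$ in the \emph{original} graph) combines with the independencies in $\cI_\cC$ into the joint product over all of $\cC$: concretely, one peels off the cutset nodes one at a time, each step requiring an independence of the form $\bigl(Y \perp \{Y_1,\dots,Y_{i-1}\}\cup\Pre(\cC)\setminus\Pre(Y) \mid \Pre(Y)\bigr)$ under $\mu$. Be careful here: in $\initCl(\cG[\cC])$ the cutset nodes are sources and the closure turns $\Init(\cG)\cup\cC$ into a clique, so these independencies are not of the naive "node d-separated from non-descendants given its parents" form — they must be extracted from the structure of paths running forward through $\cG[\cC]$ from the cutset nodes back to their original parents, and this is exactly where the hypothesis $\cI_\cC\subseteq\Indep(\mu)$ has to do real work. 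A second, smaller gap: your factorization $\mu(b) = \init(b_{\Init(\cG)})\cdot\gamma(b_\cC)\cdot\prod(\cdots)$ presupposes that $\Init(\cG)$ and $\cC$ are \emph{independent} under $\mu$, since $\Extend$ is built from the product $\init\combi\gamma$; this does not follow from $\mu|_{\Init(\cG)}=\init$ and $\mu|_\cC=\gamma$ alone and needs its own justification.
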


Combining Lemma~\ref{lemma:cutset_MC_sem_CPT} and Lemma~\ref{lem:cpt-in-cutset}
yields the following equivalence.
\begin{corollary}\label{col:MC_CPTI}
	For a GBN \cB with cutsets $\cC_1,\dots,\cC_k$ as in Lemma~\ref{lemma:cutset_MC_sem_CPT}
	and the independence set $\cI = \bigcup_{0 \leqslant i \leqslant k} \initCl(\cG[\cC_i])$, we have
	\[
		\bigcap_{0 \leqslant i \leqslant k}
		\semantics{\cB}_{\Cut\text{-}\cC_i}
		\ = \
		\semantics{\cB}_{\CPTI} .
	\]
\end{corollary}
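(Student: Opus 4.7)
}
The plan is to prove the two inclusions separately, in each case leveraging one of the two preceding lemmas, after unpacking the intended meaning of the set $\cI = \bigcup_i \initCl(\cG[\cC_i])$ as $\cI = \bigcup_i \dsep(\initCl(\cG[\cC_i]))$, since $\initCl(\cdot)$ produces a graph and the $d$-separation independencies on that graph are what is compared against $\Indep(\mu)$ in Lemma~\ref{lem:cpt-in-cutset}.

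For the inclusion $\semantics{\cB}_{\CPTI} \subseteq \bigcap_i \semantics{\cB}_{\Cut\text{-}\cC_i}$, I would fix an arbitrary $\mu \in \semantics{\cB}_{\CPTI}$ and an index $i$. By construction of $\cI$ as a union, $\dsep(\initCl(\cG[\cC_i])) \subseteq \cI$, so $\cI$-consistency of $\mu$ implies consistency with the smaller independence set $\cI_{\cC_i} := \dsep(\initCl(\cG[\cC_i]))$, hence $\mu \in \semantics{\cB}_{\CPT\text{-}\cI_{\cC_i}}$. Applying Lemma~\ref{lem:cpt-in-cutset} to cutset $\cC_i$ then yields $\mu \in \semantics{\cB}_{\Cut\text{-}\cC_i}$. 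Since $i$ was arbitrary, $\mu$ lies in the intersection.

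For the reverse inclusion $\bigcap_i \semantics{\cB}_{\Cut\text{-}\cC_i} \subseteq \semantics{\cB}_\CPTI$, I would take any $\mu$ in the intersection and verify the two ingredients of $\CPTI$-membership. CPT-consistency follows directly from Lemma~\ref{lemma:cutset_MC_sem_CPT}, whose hypothesis on the family $\cC_1,\dots,\cC_k$ is exactly the one assumed in the corollary. For the independence requirement, I would invoke the observation stated just before Lemma~\ref{lem:cpt-in-cutset}: for each $i$, every $\mu \in \CUTsem{\cB}{\cC_i}$ satisfies $\dsep(\initCl(\cG[\cC_i])) \subseteq \Indep(\mu)$, because $\CUTsem{\cB}{\cC_i}$ arises from an acyclic dissected GBN in which the nodes of $\cC_i$ are initial, and Lemma~\ref{lemma:d-sep_init} then delivers exactly these $d$-separation independencies. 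Taking the union over $i$, we obtain $\cI \subseteq \Indep(\mu)$, so $\mu \in \semantics{\cB}_\CPTI$.

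There is no real technical obstacle: both directions are bookkeeping once one has Lemma~\ref{lemma:cutset_MC_sem_CPT} for the CPT part and Lemma~\ref{lem:cpt-in-cutset} together with the pre-lemma observation (which itself rests on Lemma~\ref{lemma:d-sep_init}) for the independence part. The only subtlety worth flagging explicitly in the write-up is the direction of set inclusion when passing between individual independence sets $\cI_{\cC_i}$ and their union $\cI$: enlarging $\cI$ shrinks the set of $\cI$-consistent distributions (used in the first inclusion), while the union is precisely what is needed to collect the independencies guaranteed by each cutset separately (used in the second inclusion).
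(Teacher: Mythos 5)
Your proposal is correct and matches the paper's intent exactly: the paper gives no separate proof but states that the corollary follows by combining Lemma~\ref{lemma:cutset_MC_sem_CPT} (for CPT-consistency of distributions in the intersection) with Lemma~\ref{lem:cpt-in-cutset} and the observation preceding it (for the independence direction), which is precisely the two-inclusion argument you spell out. Your explicit reading of $\cI$ as the union of the $d$-separation sets $\dsep\bigl(\initCl(\cG[\cC_i])\bigr)$ and your remark on the monotonicity of $\CPT$-$\cI$ in $\cI$ are the right clarifications of the paper's slightly terse notation.
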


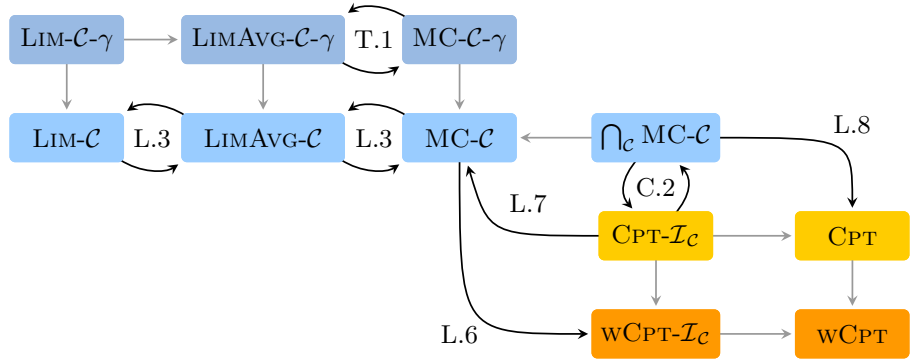
\begin{figure*}[t]
	\centering
	\resizebox{\textwidth}{!}{
	\begin{tikzpicture}[yscale=.8,xscale=1.2]

\node(LimCg)    at (0.0, 4.5) [mynode, fill=yourBlue] {$\LimC\text{-}\gamma$};
\node(LimAvgCg) at (2.0, 4.5) [mynode, fill=yourBlue, minimum width=20mm] {$\LimAvgC\text{-}\gamma$};
\node(MCCg)     at (4.0, 4.5) [mynode, fill=yourBlue] {$\CutC\text{-}\gamma$};
\node(LimC)     at (0.0, 3.0) [mynode, fill=myBlue] {\LimC};
\node(LimAvgC)  at (2.0, 3.0) [mynode, fill=myBlue, minimum width=20mm] {\LimAvgC};
\node(MCC)      at (4.0, 3.0) [mynode, fill=myBlue] {\CutC};
\node(CapMCC)   at (6.0, 3.0) [mynode, fill=myBlue] {$\bigcap_\cC\CutC$};
\node(CPTI)     at (6.0, 1.5) [mynode, fill=myYellow] {$\CPTI_\cC$};
\node(CPT)      at (8.0, 1.5) [mynode, fill=myYellow] {\CPT};
\node(wCPTI)    at (6.0, 0.0) [mynode, fill=myOrange] {$\wCPTI_\cC$};
\node(wCPT)     at (8.0, 0.0) [mynode, fill=myOrange] {\wCPT};

\draw[light arrow] (LimCg) -> (LimC);
\draw[light arrow] (LimAvgCg) -> (LimAvgC);
\draw[light arrow] (MCCg) -> (MCC);
\draw[light arrow] (CapMCC) -> (MCC);
\draw[light arrow] (CPTI) -> (CPT);
\draw[light arrow] (CPTI) -> (wCPTI);
\draw[light arrow] (wCPTI) -> (wCPT);
\draw[light arrow] (CPT) -> (wCPT);

\draw[light arrow] (LimCg) to (LimAvgCg);
\draw[long myarrow, bend right=45] (MCCg.north west) to (LimAvgCg.north east);
\draw[long myarrow, bend right=45] (LimAvgCg.south east) to (MCCg.south west);

\draw[long myarrow, bend right=45] (LimAvgC.north west) to (LimC.north east);
\draw[long myarrow, bend right=45] (LimC.south east) to (LimAvgC.south west);
\draw[long myarrow, bend right=45] (MCC.north west) to (LimAvgC.north east);
\draw[long myarrow, bend right=45] (LimAvgC.south east) to (MCC.south west);

\draw[long myarrow, bend right=30] (CapMCC) to (CPTI);
\draw[long myarrow, bend right=30] (CPTI) to (CapMCC);
\draw[long myarrow] (CapMCC) .. controls (8.0, 3.0) .. (CPT);
\draw[long myarrow] (CPTI)   .. controls (4.3, 1.5) .. (MCC);
\draw[long myarrow] (MCC)    .. controls (4.0, 0.0) .. (wCPTI);

\node at (6.0, 2.25) [] {C.\ref{col:MC_CPTI}};
\node at (4.0, 0.0) [] {L.\ref{lem:consistency-cutset}};
\node at (4.7, 2.0) [] {L.\ref{lemma:cutset_MC_sem_CPT}};
\node at (8.0, 3.2) [] {L.\ref{lem:cpt-in-cutset}};
\node at (0.87, 3.0) [] {L.\ref{lemma:mc_lim_limavg}};
\node at (3.13, 3.0) [] {L.\ref{lemma:mc_lim_limavg}};
\node at (3.13, 4.5) [] {T.\ref{thm:cutset_semantics_limAvg}};

\node(LimCg)    at (0.0, 4.5) [mynode, fill=yourBlue] {$\LimC\text{-}\gamma$};
\node(LimAvgCg) at (2.0, 4.5) [mynode, fill=yourBlue, minimum width=20mm] {$\LimAvgC\text{-}\gamma$};
\node(MCCg)     at (4.0, 4.5) [mynode, fill=yourBlue] {$\CutC\text{-}\gamma$};
\node(LimC)     at (0.0, 3.0) [mynode, fill=myBlue] {\LimC};
\node(LimAvgC)  at (2.0, 3.0) [mynode, fill=myBlue, minimum width=20mm] {\LimAvgC};
\node(MCC)      at (4.0, 3.0) [mynode, fill=myBlue] {\CutC};
\node(CapMCC)   at (6.0, 3.0) [mynode, fill=myBlue] {$\bigcap_\cC\CutC$};
\node(CPTI)     at (6.0, 1.5) [mynode, fill=myYellow] {$\CPTI_\cC$};
\node(CPT)      at (8.0, 1.5) [mynode, fill=myYellow] {\CPT};
\node(wCPTI)    at (6.0, 0.0) [mynode, fill=myOrange] {$\wCPTI_\cC$};
\node(wCPT)     at (8.0, 0.0) [mynode, fill=myOrange] {\wCPT};

\end{tikzpicture}}
	\caption[]{Relations between different variations of limit, limit average, and Markov chain semantics (blue) as well as strong and weak CPT-consistency semantics (yellow resp.~orange)}
	\label{fig:semantics_overview}
\end{figure*}

\subsection{Overview}

Fig.\,\ref{fig:semantics_overview} gives an overview of the relations between all proposed semantics.
Boxes represent the set of distributions induced by the respective semantics and arrows stand for set inclusion.
For the non-trivial inclusions the arrows are annotated with the respective lemma or theorem.
As an example, $\CPT{\to}\wCPT$ states that $\semantics{\cB}_\CPT \subseteq \semantics{\cB}_\wCPT$ holds for all GBNs $\cB$.
The three semantics in the top row parameterized with a cutset $\cC$ and a distribution $\gamma$ stand for the singleton set containing the respective function applied to $\gamma$, i.e., $\semantics{\cB}_{\LimC\text{-}\gamma} = \{\mathit{Lim}(\cB,\cC,\gamma)\}$.
$\bigcap_\cC\CutC$ stands for the intersection of the Markov chain semantics for various cutsets as in Lemma~\ref{lemma:cutset_MC_sem_CPT}, and the incoming arrow from $\CPT\text{-}\cI_\cC$ holds for the set of independencies $\cI_\cC$ as in Lemma~\ref{lem:cpt-in-cutset}.
\section{Related Work}
\label{sec:related_work}

That cycles in a BN might be unavoidable when learning its structure is well known for more 
than 30 years~\cite{LauSpi88a,Pea90a}. 
During the learning process of BNs, cycles might even be favorable as demonstrated in the 
context of \emph{gene regulatory networks} where cyclic structures induce monotonic 
scores~\cite{wiecek2020structure}. That work only discusses learning algorithms, 
but does not deal with evaluating the joint distribution of the resulting cyclic BNs.
In most applications, however, cycles have been seen as a phenomenon to be avoided to ease the computation of the joint distribution in BNs. 
By an example BN comprising a single isolated cycle, \cite{TulNik05a} showed that reversing or 
removing edges to avoid cycles may reduce the solution space from infinitely many joint 
distributions that are (weakly) consistent with the CPTs to a single one.
In this setting, our results on weak CPT-semantics also provide 
that \wCPT cannot express conditions on the relation of variables 
like implications or mutual exclusion.
This is rooted in the fact that the solution space of weak CPT-semantics always contains 
at least one full joint distribution with pairwise independent variables.
An example where reversing edges %
led to satisfactory results has been considered in \cite{CasGutHad98a}, investigating
the impact of reinforced defects by steel corrosion in concrete structures.

Unfolding cycles up to a bounded depth has been applied 
in the setting of a robotic sensor system by \cite{CasFerGar21a}.
In their use case, only cycles of length two may appear, and only the nodes appearing on 
the cycles are implicitly used as cutset for the unfolding.
In \cite{Klopotek2006}, the set of all nodes is used for unfolding (corresponding to a 
cutset $\cC = \cV$ in our setting) and subsequent limit construction, but restricted to 
cases where the limit exists.\\

There have been numerous variants of BNs that explicitly or implicitly address cyclic
dependencies.
\emph{Dynamic Bayesian networks} (DBNs)~\cite{Mur02a} extend BNs by an explicit notion
of discrete time steps that could break cycles through timed ordering of random variables.
Cycles in BNs could be translated to the DBN formalism
by introducing a notion of time, e.g., following \cite{Klopotek2006}.
Our cutset approach is orthogonal, choosing a time-abstract view on cycles
and treating them as stabilizing feedback loops.
Learning DBNs requires ``relatively large time-series data''~\cite{wiecek2020structure}
and thus, may be computationally demanding.  
In \cite{Motzek2017} \emph{activator random variables} break cycles in DBNs
to circumvent spurious results in DBN reasoning when infinitesimal small time 
steps would be required.

\emph{Causal BNs}~\cite{Pearl09} are BNs that impose a meaning on the direction of 
an edge in terms of causal dependency. 
Several approaches have been proposed to extend causal BNs for modeling feedback loops.
In \cite{poole_cyclic_2013}, an equilibrium semantics is sketched that is similar to our Markov chain semantics, albeit based on variable oderings rather than cutsets.
Determining independence relations, Markov properties, and
joint distributions are central problems addressed for cyclic causal 
BNs \cite{CasFerGar21a,forr2017markov,Nea00a,PeaDec96a,Spi95a}.
Markov properties and joint distributions for extended versions
of causal BNs have been considered recently, e.g., in
\emph{directed graphs with hyperedges} (HEDGes) \cite{forr2017markov} and
\emph{cyclic structural causal models} (SCMs) \cite{CasFerGar21a}.
Besides others, they show that in presence of cycles, there might be multiple solutions 
for a joint distribution or even no solution at all~\cite{Hal00a}.
While we consider all random variables to be observable, the latter approaches 
focus on models with latent variables. Further, 
while our focus in this paper is not on causality,
our approach is surely also applicable to causal BNs with cycles.

\emph{Recursive relational Bayesian networks} (RRBNs) \cite{jaeger2001} allow representing probabilistic relational models where the random variables are given by relations over varying domains. 
The resulting first-order dependencies can become quite complex and may contain cycles, though semantics are given only for the acyclic cases by the construction of corresponding standard BNs.

\emph{Bayesian attack graphs} (BAGs)~\cite{LiuMan05a} are popular to model and reason 
about security vulnerabilities in computer networks. 
Learned graphs and thus their BN semantics frequently contain cycles, 
e.g., when using the tool \textsc{MulVAL}~\cite{OuGovApp05a}. 
In \cite{Singhal2017}, ``handling cycles correctly'' is identified as ``a key challenge'' in 
security risk analysis.
Resolution methods for cyclic patterns in BAGs \cite{AguBetBla16a,DoyKot17a,matthews2020cyclic,WanIslLon08a}
are mainly based on context-specific security considerations, e.g., to break cycles by removing edges.
The semantic foundations for cyclic BNs laid in this paper do not require graph manipulations
and decouple the probability theoretic basis from context-specific properties.

\section{Conclusion}

This paper has developed a foundational perspective on the semantics of cycles in Bayesian networks.
Constraint-based semantics provide a conservative extension of the standard BN semantics to the cyclic setting. 
While conceptually important, 
their practical use is limited by the fact that for many GBNs, the induced constraint system is unsatisfiable.
On the other hand, the two introduced limit semantics echo in an abstract and formal way what practitioners have been devising across a 
manifold of domain-specific situations. 
In this abstract perspective, cutsets are the ingredients that 
enable a controlled decoupling of dependencies.
The appropriate choice of cutsets is where, in our view, domain-specific 
knowledge is confined to enter the picture.
Utilizing the constructively defined Markov chain semantics, we established key results relating and demarcating the different semantic notions and showed that for the ubiquitous class of smooth GBNs a unique full joint distribution always exists.

\bibliographystyle{splncs04}
\bibliography{main}
\clearpage
\appendix
\section{Appendix}

The appendix contains the proofs omitted from the body of the submission ``On the Foundations of Cycles in Bayesian Networks'' due to space constraints.

\addtocounter{lemma}{-8}
\addtocounter{corollary}{-2}
\begin{lemma}%
	Let $\cB_\nocycle = \tuple{\cG,\cP,\init}$ be an acyclic GBN.
	Then
	\[
		\dsep\bigl( \initCl(\cG) \bigr) 
		\ \subseteq \ 
		\Indep\bigl( \distBN(\cB_\nocycle) \bigr) .
	\]
\end{lemma}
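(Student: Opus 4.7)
The plan is to reduce the claim to the standard BN result mentioned earlier in the excerpt, namely that for an acyclic GBN with pairwise independent initial nodes, the graph $d$-separations are valid independencies in the joint distribution. I would construct an auxiliary acyclic GBN $\cB'$ that carries exactly the same joint distribution as $\cB_\nocycle$ but whose graph contains additional edges among the initial nodes so that the (trivially) independent-initials hypothesis applies, and then transfer the independencies back via a monotonicity argument for $d$-separation.

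Concretely, pick any linear ordering $A_1, \ldots, A_k$ of $\Init(\cG)$ and let $\cG'$ be $\cG$ augmented by the edges $\{(A_i, A_j) : 1 \leqslant i < j \leqslant k\}$. Because initial nodes have no incoming edges in $\cG$ and the new edges respect the ordering, $\cG'$ is still acyclic. Define $\cB' = \tuple{\cG', \cP', \init'}$ by keeping the CPTs of $\cP$ for every node outside $\Init(\cG)$, setting $\cP'(A_i, \cdot) = \init(A_i = \cdot \mid A_1, \ldots, A_{i-1})$ (arbitrary when the conditioning event has probability zero) for $i \geqslant 2$, and $\init' = \init|_{\{A_1\}}$. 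By the chain rule of probability, $\init(A_1)\cdot\prod_{i=2}^{k}\init(A_i\mid A_1,\ldots,A_{i-1}) = \init(A_1,\ldots,A_k)$, and applying the BN chain rule \eqref{BN} to $\cB'$ and $\cB_\nocycle$ yields $\distBN(\cB') = \distBN(\cB_\nocycle)$. Since the only initial node of $\cB'$ is $A_1$, the pairwise-independence-of-initials hypothesis holds vacuously, so by the standard BN result $\dsep(\cG') \subseteq \Indep(\distBN(\cB'))$.

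To close the argument I would prove the monotonicity statement: if $\cG_1$ and $\cG_2$ share the same node set and the edge set of $\cG_1$ is contained in that of $\cG_2$, then $\dsep(\cG_2) \subseteq \dsep(\cG_1)$. Intuitively, adding edges can only create new simple undirected paths (which must also be blocked) and can only add descendants (which only hurts the collider condition), so every separation surviving in $\cG_2$ already holds in $\cG_1$. In our situation the edge set of $\cG'$ is a subset of that of $\initCl(\cG)$, hence $\dsep(\initCl(\cG)) \subseteq \dsep(\cG')$. Combining the three inclusions gives
\[
  \dsep(\initCl(\cG)) \ \subseteq \ \dsep(\cG') \ \subseteq \ \Indep(\distBN(\cB')) \ = \ \Indep(\distBN(\cB_\nocycle)).
\]

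The main obstacle, and the one genuinely combinatorial part, is the monotonicity lemma for $d$-separation: one must carefully track that enlarging the edge set (i) only enlarges the set of simple undirected paths between two given node sets, and (ii) only enlarges the descendant sets relevant to the collider clause, so that any path blocked in the larger graph stays blocked in the smaller one. Once this is spelled out for chains, forks, and colliders along a path, the rest of the proof is a straightforward assembly of the standard chain-rule factorization and the acyclic BN result stated earlier in the paper.
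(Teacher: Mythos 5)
Your proof is correct and follows essentially the same route as the paper's: augment the graph with edges among the initial nodes to obtain an acyclic BN with the same joint distribution, apply the standard $d$-separation soundness result there, and transfer back via monotonicity of $d$-separation under edge addition. The only (immaterial) difference is that you use the complete tournament induced by a linear ordering of $\Init(\cG)$, whereas the paper allows an arbitrary DAG that is an I-map for $\init$ — your choice is a valid instance of that construction.
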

\begin{proof}
	The idea is to show that the dependencies of every possible BN structure for the initial distribution $\init$ are covered by the closure operation.
	Let the graph $\cG_\init^\star = \tuple{\Init(\cG), \cE^\star}$ be a DAG that is an I-map for $\init$, i.e., $\dsep(\cG_\init^\star) \subseteq \Indep(\init)$.
	Then $\init$ factorizes according to $\cG_\init^\star$, that is for every assignment $b \in \Asg(\Init(\cG))$, we have
	\[
		\init(b) \ = \
		\prod_{\mathclap{X \in \Init(\cG)}} \ 
		\init\bigl(b_X \mid b_{\Pre^{\cG_\init^\star}(X)}\bigr) .
	\]
	Now consider the BN $\cB_\nocycle^\star$ with graph $\cG^\star = \tuple{\cV, \cE \cup \cE^\star}$ where we add the edges of $\cG_\init^\star$ to $\cG$.
	The CPTs for the nodes in $\cV \setminus \Init(\cG)$ are given by $\cP$ whereas the new CPTs (according to the structure in $\cG_\init^\star$) for the nodes in $\Init(\cG)$ are derived from $\init$.
	Then for every assignment $c \in \Asg(\cV)$:
	\begin{align*}
		\distBN(\cB_\nocycle^\star)(c)
		\ &= \
		\prod_{\mathclap{ X \in \cV }} 
		\ \CPTable\bigl(c_X \mid c_{\Pre(X)}\bigr) \\
		\ &= \
		\prod_{\mathclap{X \in \Init(\cG)}} \ 
		\init\bigl(c_X \mid c_{\Pre^{\cG_\init^\star}(X)}\bigr)
		\ \cdot \ 
		\prod_{\mathclap{ X \in \cV \setminus \Init(\cG) }} 
		\ \CPTable\bigl(c_X \mid c_{\Pre(X)}\bigr) \\
		\ &= \
		\init\bigl(c_{\Init(\cG)}\bigr)
		\ \cdot \ 
		\prod_{\mathclap{ X \in \cV \setminus \Init(\cG) }} 
		\ \CPTable\bigl(c_X \mid c_{\Pre(X)}\bigr) \\
		\ &= \
		\distBN(\cB_\nocycle)(c).
	\end{align*}
	As $\cB_\nocycle^\star$ is a regular BN without an initial distribution, we have $\dsep(\cG^\star) \subseteq \Indep(\distBN(\cB_\nocycle^\star))$.

	We proceed to show $\dsep(\initCl(\cG)) \subseteq \dsep(\cG^\star)$.
	Let $(X \perp Y \mid \cZ) \in \dsep(\initCl(\cG))$.
	Then each path from $X$ to $Y$ in $\initCl(\cG)$ is blocked by the nodes in $\cZ$. 
	As $\initCl(\cG)$ contains all possible edges between the nodes $\Init(\cG)$ but $\cG^\star$ only a subset thereof, it is clear that each path in $\cG^\star$ also exists in $\initCl(\cG)$.
	Thus, there cannot be an unblocked path from $X$ to $Y$ given $\cZ$ in $\cG^\star$ either, so $(X \perp Y \mid \cZ) \in \dsep(\cG^\star)$.
	Altogether, we have
	\[
		\dsep\bigl( \initCl(\cG) \bigr) 
		\subseteq
		\dsep(\cG^\star)
		\subseteq
		\Indep\bigl( \distBN(\cB_\nocycle^\star) \bigr)
		=
		\Indep\bigl( \distBN(\cB_\nocycle) \bigr). 
	\]\qed
\end{proof}

\begin{lemma}%
	Let $\cB$ be a GBN with cutset $\cC$, cutset distribution $\gamma \in \Dist(\Asg(\cC))$, and $\cM = \tuple{\Asg(\cC), \bP}$ the cutset Markov chain $\CutsetMC{\cB}{\cC}$.
	Then the following statements are equivalent:
 	\begin{enumerate}[label=(\alph*),leftmargin=*] %
 	\item 
 		$\gamma =\gamma \cdot \bP$. %
	\item
		There exists $\gamma_0\in \Dist(\Asg(\cC))$ such that for $\gamma_{i+1} =\gamma_i \cdot \bP$, we have
		\[
			\gamma\ = \ \lim\limits_{n\to \infty}\  
			\frac{1}{n{+}1}\ \sum_{i=0}^n \gamma_i .
		\]
	\item\label{item:convex} 
		$\gamma$ belongs to the convex hull of the long-run
		frequency distributions $\LongRunFreq_\cD$ of the bottom SCCs $\cD$ of $\cM$.
 	\item 
 		$\gamma = \Next(\cB, \cC, \gamma)|_\cC$.
  \end{enumerate}
\end{lemma}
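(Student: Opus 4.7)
The plan is to prove the four-way equivalence by establishing the chain $(a) \Leftrightarrow (d)$ via a linearity argument specific to the dissected GBN construction, and then $(a) \Leftrightarrow (b) \Leftrightarrow (c)$ via standard finite Markov chain theory applied to $\cM$.

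First I would prove $(a) \Leftrightarrow (d)$. The key observation is that $\distBN$ of an acyclic GBN is linear in its initial distribution: inspecting the chain rule \eqref{BN}, the factor $\init(b_{\Init(\cG)})$ appears linearly while the remaining CPT-product depends only on the structure and on $b$. Applied to $\dissectedBN{\cB}{\cC}{\gamma}$, whose initial distribution is $\init \combi \gamma$, and using that $\gamma = \sum_{b \in \Asg(\cC)} \gamma(b)\,\Dirac(b)$, this linearity propagates through the definition of $\Next$ and through the restriction $(\cdot)|_\cC$. Hence for every $c \in \Asg(\cC)$,
\[
	\Next(\cB,\cC,\gamma)|_\cC(c)
	\ = \
	\sum_{b \in \Asg(\cC)} \gamma(b) \cdot \Next(\cB,\cC,\Dirac(b))|_\cC(c)
	\ = \
	\sum_{b \in \Asg(\cC)} \gamma(b) \cdot \bP(b,c)
	\ = \ (\gamma \cdot \bP)(c),
\]
which is precisely the matrix-vector identity defining stationarity. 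Thus $(a)$ and $(d)$ are literally the same equation in different clothing.

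Next, I would establish $(a) \Leftrightarrow (b)$. The direction $(a) \Rightarrow (b)$ is immediate: choose $\gamma_0 \coloneqq \gamma$, then $\gamma_i = \gamma$ for all $i$ by induction using $(a)$, so the Cesàro average equals $\gamma$. For $(b) \Rightarrow (a)$, multiply the Cesàro average by $\bP$ from the right; since matrix multiplication is continuous and $\gamma_{i+1} = \gamma_i \cdot \bP$, a telescoping argument shows that $(\gamma \cdot \bP) - \gamma = \lim_{n \to \infty} \frac{1}{n+1}(\gamma_{n+1} - \gamma_0) = 0$, giving $(a)$.

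Finally, I would prove $(a) \Leftrightarrow (c)$ by invoking the classical structure theorem for stationary distributions of a finite DTMC (e.g., from \cite{KemSne1969}, referenced for \eqref{eq:lrf} earlier): the set of stationary distributions of $\cM$ forms a convex polytope whose extreme points are exactly the distributions $\LongRunFreq_\cD$, each supported on one bottom SCC $\cD$ and zero elsewhere. Any convex combination of the $\LongRunFreq_\cD$ is stationary (giving $(c) \Rightarrow (a)$), and conversely any stationary distribution must assign zero mass to transient states and decomposes uniquely as a convex combination of the per-BSCC stationary distributions (giving $(a) \Rightarrow (c)$).

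The main obstacle is making the linearity argument in the first step airtight, because $\Next$ is defined through a chain of constructions (dissected GBN, standard BN semantics, restriction, identification of $\cC'$ with $\cC$); one must check that each step preserves affine dependence on $\gamma$. Once this is done, the remaining equivalences are routine finite-state Markov chain theory and introduce no BN-specific subtleties.
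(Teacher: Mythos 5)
Your proof is correct, and the two places where it genuinely matters---the linearity of $\Next(\cB,\cC,\cdot)$ in $\gamma$ obtained by decomposing $\gamma$ into Dirac distributions (for (a)$\Leftrightarrow$(d)), and the appeal to standard finite-DTMC theory for the rest---are exactly the ingredients the paper uses. The difference is organizational: the paper closes the cycle (a)$\Rightarrow$(b)$\Rightarrow$(c)$\Rightarrow$(a), deriving (c) from (b) via the fact that the Ces\`aro limit equals $\sum_{\cD}\lambda(\cD)\cdot\LongRunFreq_{\cD}$ with $\lambda(\cD)$ the reachability probability of the BSCC $\cD$ under $\gamma_0$, whereas you prove a star of equivalences centered on (a). Your telescoping argument for (b)$\Rightarrow$(a) --- $(\gamma\cdot\bP)-\gamma=\lim_n\frac{1}{n+1}(\gamma_{n+1}-\gamma_0)=0$ by boundedness and continuity --- is more elementary and self-contained than the paper's detour through the BSCC decomposition; in exchange, your (a)$\Rightarrow$(c) has to invoke the structure theorem for the stationary polytope (zero mass on transient states, extreme points $\LongRunFreq_{\cD}$) as a black box, which the paper effectively gets for free from its (b)$\Rightarrow$(c) step. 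Your extra care in justifying linearity via the chain rule applied to the dissected GBN's initial distribution $\init\combi\gamma$ is a welcome tightening of a step the paper merely asserts.
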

\begin{proof}
	(a) $\Longrightarrow$ (b):
	If we have $\gamma =\gamma \cdot \bP$, then statement (b) is obtained by considering
	$\gamma_0 = \gamma$, as then $\gamma_i = \gamma$ for all $i$.
	
	(b) $\Longrightarrow$ (c): 
	The proof of the implication relies on the following standard facts about finite-state Markov chains.
	Given a BSCC $\cD$ and 
	an arbitrary distribution $\nu_0 \in \Dist(\Asg(\cD))$, 
	the distribution $\LongRunFreq_\cD$ agrees with the Cesàro limit
	of the sequence $(\nu_i)_{i \geqslant 0}$ where
	$\nu_{i+1} = \nu_i \cdot \bP_{\cD}$ and 
	$\bP_{\cD}$ denotes the restriction of $\bP$ to assignments on $\cD$.
	That is, \vspace{-0.2cm}
	\[
		\LongRunFreq_\cD = \lim\limits_{n\to \infty} 
		\frac{1}{n{+}1}\sum\limits_{i=0}^n \nu_i .
	\]
	
	Vice versa, for $\gamma_0 \in \Dist(\Asg(\cC))$ and
	$\gamma_{i+1} = \gamma_i \cdot \bP$, then
	the Cesàro limit $\gamma$
	of the sequence $(\gamma_i)_{i \geqslant 0}$ has the form
	\[
		\gamma = \sum_{\cD} \lambda(\cD) \cdot \LongRunFreq_\cD
	\]
	where $\cD$ ranges over all BSCCs of $\cM$,
	$\lambda(\cD)$ is the probability for reaching $\cD$ in
	$\cM$ with the initial distribution $\gamma_0$,
	and all vectors $\LongRunFreq_\cD$ are padded with zero entries to range over the whole state space. 
	In particular, $\gamma$ is a convex combination of the distributions
	$\LongRunFreq_\cD$ as $0 \leqslant \lambda(\cD) \leqslant 1$ and
	$\sum_{\cD} \lambda(\cD) = 1$ 
	(because every finite-state Markov chain almost surely reaches a BSCC).

	(c) $\Longrightarrow$ (a): Suppose
	$\gamma = \sum_{\cD} \lambda(\cD) \cdot \LongRunFreq_\cD$ where
	$0\leqslant \lambda(\cD) \leqslant 1$, $\sum_{\cD} \lambda(\cD) = 1$, and each $\LongRunFreq_\cD$ is padded appropriately as before. 
	Then:
	\[
		\gamma \cdot \bP  
		\ = \ 
		\sum_{\cD} \lambda(\cD) \cdot \LongRunFreq_\cD \cdot \bP
		\ = \ 
		\sum_{\cD} \lambda(\cD) \cdot \LongRunFreq_\cD 
		\ = \ \gamma
	\]
	where we use the fact that $\LongRunFreq_\cD = \LongRunFreq_\cD \cdot \bP$.

	(a) $\Longleftrightarrow$ (d):
	Because $\gamma$ can be represented as convex combination of Dirac distributions as $\gamma = \sum_{c \in \Asg(\cC)} \gamma(c) \cdot \Dirac(c)$, we know:
	\[
		\Next(\cB, \cC, \gamma)
		\ = \ 
		\sum_{\mathclap{ c \in \Asg(\cC) }} \
		\gamma(c) 
		\cdot 
		\Next\bigr(\cB, \cC, \Dirac(c)\bigl).
	\]
	As $\bP(c,b) = \Next\bigl(\cB, \cC, \Dirac(c)\bigr)(b)$ for any assignment $b \in \Asg(\cC)$, and assuming $\gamma = \gamma \cdot \bP$, we get 
	\[
		\Next(\cB, \cC, \gamma)(b)
		\ = \ 
		\sum_{\mathclap{ c \in \Asg(\cC) }} \
		\gamma(c) 
		\cdot 
		\bP(c,b)
		\ = \
		(\gamma \cdot \bP)(b)
		\ = \
		\gamma(b).
	\]
	Conversely, assuming $\Next(\cB,\cC,\gamma)|_\cC = \gamma$, we yield $\gamma = \gamma \cdot \bP$.
	\qed
\end{proof}

\begin{lemma}%
	Let $\cB$ be a GBN.
	Then for any cutset $\cC$ of $\cB$, we have
	\[ 
		\semantics{\cB}_\CutC 
		\ = \ 
		\semantics{\cB}_\LimAvgC 
		\ = \
		\semantics{\cB}_\LimC .
	\]
\end{lemma}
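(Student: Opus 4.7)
The plan is to show both chains of inclusions. Two of the three inclusions come essentially for free: Theorem~\ref{thm:cutset_semantics_limAvg} already gives pointwise equality of $\mathit{MCS}(\cB,\cC,\gamma_0)$ and $\mathit{LimAvg}(\cB,\cC,\gamma_0)$ on every initial cutset distribution, so taking images yields $\semantics{\cB}_\CutC = \semantics{\cB}_\LimAvgC$; and the standard fact that the Cesàro limit of a convergent sequence coincides with the ordinary limit, already noted right after Definition~\ref{def:limit_semantics}, gives $\semantics{\cB}_\LimC \subseteq \semantics{\cB}_\LimAvgC$. The real work is the remaining inclusion $\semantics{\cB}_\CutC \subseteq \semantics{\cB}_\LimC$.

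For this, I would first record the elementary identity $\Next(\cB,\cC,\gamma)|_\cC = \gamma \cdot \bP$, which follows from writing $\gamma = \sum_{b \in \Asg(\cC)} \gamma(b)\cdot\Dirac(b)$ and using linearity of the chain rule~\eqref{BN} in the initial distribution, together with Definition~\ref{def:cutsetMC}. This identifies the sequence $\gamma_{n+1} = \Next(\cB,\cC,\gamma_n)|_\cC$ from Definition~\ref{def:limit_semantics} with the Markov-chain iterates $\gamma_n = \gamma_0 \cdot \bP^n$.

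Now take any $\mu \in \semantics{\cB}_\CutC$, so $\mu = \Extend(\cB,\cC,\LongRunFreq^{\gamma_0})$ for some $\gamma_0 \in \Dist(\Asg(\cC))$. By Lemma~\ref{lemma:convex_combi_BSCCs}(a,b), $\gamma^\star := \LongRunFreq^{\gamma_0}$ satisfies $\gamma^\star = \gamma^\star \cdot \bP$. Pick $\gamma^\star$ itself as the initial cutset distribution for the limit semantics: the iteration $\gamma_{n+1} = \gamma_n \cdot \bP$ is then constant with value $\gamma^\star$, so $\lim_{n \to \infty} \gamma_n = \gamma^\star$ exists, meaning $\mathit{Lim}(\cB,\cC,\gamma^\star)$ is defined and equals $\Extend(\cB,\cC,\gamma^\star) = \mu$. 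Hence $\mu \in \semantics{\cB}_\LimC$.

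I do not anticipate a real obstacle: once the identity $\Next(\cB,\cC,\gamma)|_\cC = \gamma\cdot\bP$ is in place, everything reduces to re-initializing the iteration at an already-stationary distribution so that the ordinary limit trivially exists. The only subtle point to double-check is that $\semantics{\cB}_\LimC$ is genuinely defined as the image over those $\gamma_0$ where the limit exists (as in Definition~\ref{def:limit_semantics}), not required to exist for all $\gamma_0$; otherwise the argument would run aground on periodic cutset Markov chains like the one in Example~\ref{ex:no_limit}.
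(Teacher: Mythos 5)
Your proposal is correct and follows essentially the same route as the paper: both reduce the claim to $\semantics{\cB}_\CutC \subseteq \semantics{\cB}_\LimC$ via Theorem~\ref{thm:cutset_semantics_limAvg} and the Ces\`aro-limit fact, and then re-initialize the iteration at the stationary cutset distribution $\gamma$ itself, using Lemma~\ref{lemma:convex_combi_BSCCs} to conclude that the sequence is constant and hence trivially convergent. Your explicit appeal to the identity $\Next(\cB,\cC,\gamma)|_\cC = \gamma \cdot \bP$ is exactly the (a)$\Leftrightarrow$(d) equivalence of that lemma, so there is no substantive difference.
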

\begin{proof}
	We have $\semantics{\cB}_\CutC = \semantics{\cB}_\LimAvgC$ by Theorem~\ref{thm:cutset_semantics_limAvg} and know $\semantics{\cB}_\LimC \subseteq \semantics{\cB}_\LimAvgC$, so it remains to show $\semantics{\cB}_\CutC \subseteq \semantics{\cB}_\LimC$.
	Let $\mu \in \semantics{\cB}_\CutC$.
	Then there exists a cutset distribution $\gamma$ s.t.\ $\mu = \Extend(\cB,\cC,\gamma)$.
	We need to show there exists an initial distribution $\gamma_0 \in \Dist(Asg(\cC))$ such that $\gamma = \lim_{n \to \infty} \gamma_i$ 
	where $\gamma_{i+1} = \Next(\cB, \cC, \gamma_i)|_\cC$.
	Let us choose $\gamma_0 = \gamma$. 
	Then we know $\gamma_0 = \Next(\cB,\cC,\gamma_0)|_\cC$
	by Lemma~\ref{lemma:convex_combi_BSCCs}, so
	$\gamma_i = \gamma_0$ for all $i\in\Nat$.
	Thus, $\gamma = \lim_{i \to \infty} \gamma_i$ and therefore $\mu \in \semantics{\cB}_\LimC$.
	\qed
\end{proof}

\begin{lemma}[Cardinality]%
	Let \cB be a GBN with cutset \cC and cutset Markov chain $\CutsetMC{\cB}{\cC} = \tuple{\Asg(\cC), \bP}$.
	Further, let $k > 0$ denote the number of bottom SCCs $\cD_1,\dots,\cD_k$ of $\CutsetMC{\cB}{\cC}$.
	Then
	\begin{enumerate}
	\item the cardinality of the cutset Markov chain semantics is given by
	\[
	  \bigl| \semantics{\cB}_\CutC \bigr|
	  \ = \ 
	  \left\{
	  \begin{array}{cl}
	  	1      & \ \text{ if} \ \ k = 1, \\
	   	\infty & \ \text{ if} \ \ k > 1;
	  \end{array}
	  \right.
	\]
	\item
		$\mathit{Lim}(\cB,\cC,\gamma_0)$ is defined for all $\gamma_0 \in \Dist(\Asg(\cC))$ if all $\cD_i$ are aperiodic;
	\item
		$\mathit{Lim}(\cB,\cC,\gamma)$ is only defined for stationary distributions $\gamma$ with $\gamma = \gamma \cdot \bP$ if $\cD_i$ is periodic for any $1 \leqslant i \leqslant k$.
	\end{enumerate}
\end{lemma}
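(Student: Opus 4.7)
My plan is to translate each claim from the cutset Markov chain $\cM = \CutsetMC{\cB}{\cC}$ to the semantics $\semantics{\cB}_\CutC$ using Lemma~\ref{lemma:convex_combi_BSCCs} together with standard facts about finite-state DTMCs. A common ingredient I would use throughout is the injectivity of $\Extend(\cB,\cC,\cdot)$: since the cutset nodes are initial in $\dissectedBN{\cB}{\cC}{\gamma}$ with marginal exactly $\gamma$, we have $\Extend(\cB,\cC,\gamma)|_{\cC} = \gamma$, so distinct cutset distributions always produce distinct full extensions.

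For part~(1), Lemma~\ref{lemma:convex_combi_BSCCs} identifies the stationary distributions of $\bP$ with the convex hull of $\{\LongRunFreq_{\cD_1},\dots,\LongRunFreq_{\cD_k}\}$, and the equivalent characterization of $\semantics{\cB}_\CutC$ stated right after the lemma exhibits the semantics as the image of this hull under $\Extend(\cB,\cC,\cdot)$. The vectors $\LongRunFreq_{\cD_i}$ have pairwise disjoint supports (each contained in its BSCC), hence are affinely independent: for $k=1$ the convex hull is a single point, giving $|\semantics{\cB}_\CutC| = 1$; for $k>1$ it is an infinite $(k{-}1)$-simplex, and injectivity of $\Extend$ transports this infinity to $\semantics{\cB}_\CutC$. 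Part~(2) is then the classical convergence theorem for aperiodic finite-state DTMCs: when every BSCC $\cD_i$ is aperiodic, absorption into some $\cD_i$ occurs almost surely and the distribution inside each BSCC converges to $\LongRunFreq_{\cD_i}$, so $\gamma_n = \gamma_0 \cdot \bP^n$ converges to $\sum_i \lambda_i(\gamma_0) \cdot \LongRunFreq_{\cD_i}$ where $\lambda_i(\gamma_0)$ is the absorption probability into $\cD_i$ under $\gamma_0$; hence $\mathit{Lim}(\cB,\cC,\gamma_0)$ is defined for every $\gamma_0$.

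Part~(3) is the main obstacle. The easy direction is immediate: if $\gamma_0 = \gamma_0 \cdot \bP$ then $\gamma_n = \gamma_0$ for all $n$ and the limit trivially exists. For the converse, I would use the cyclic decomposition of a periodic BSCC $\cD_j$ of period $p>1$ into classes $C_0,\dots,C_{p-1}$; each application of $\bP$ permutes mass cyclically through these classes, so the $\cD_j$-restriction of $\gamma_n$ traces a non-trivial period-$p$ orbit in the simplex unless it is already a scalar multiple of $\LongRunFreq_{\cD_j}$. Combining this with the continuity observation that any limit of $(\gamma_n)$ must satisfy $\gamma = \gamma \cdot \bP$, and tracking how mass flowing in from transient states enters the periodic BSCCs, one deduces that convergence of $(\gamma_n)$ forces $\gamma_0$ itself to be stationary. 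The delicate step is precisely this last one---reasoning about the phase of transient mass being absorbed into a periodic BSCC---where a careful spectral analysis of the restriction of $\bP^p$ to the cyclic classes, together with the fact that $\bP$ has no other eigenvalues of modulus~$1$ than the $p$-th roots of unity supported by the periodic BSCCs, is what I expect to be the technical heart of the argument.
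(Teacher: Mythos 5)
Parts (1) and (2) of your proposal are correct and follow essentially the same route as the paper: Lemma~\ref{lemma:convex_combi_BSCCs} reduces $\semantics{\cB}_\CutC$ to the image under $\Extend(\cB,\cC,\cdot)$ of the convex hull of the per-BSCC long-run frequency vectors, and aperiodicity of all BSCCs yields convergence of $\gamma_0\cdot\bP^n$ by the standard limit theorem for finite DTMCs. Your explicit observation that $\Extend(\cB,\cC,\gamma)|_\cC=\gamma$, hence $\Extend$ is injective, is a worthwhile addition: the paper counts stationary distributions and silently identifies this count with $\bigl|\semantics{\cB}_\CutC\bigr|$, which needs exactly this injectivity.

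The gap is in the converse direction of part (3). You correctly isolate it as the hard step, but you do not carry it out: ``a careful spectral analysis \dots is what I expect to be the technical heart of the argument'' is a promise, not a proof. Concretely, what is missing is an argument that convergence of the sequence $(\gamma_0\cdot\bP^n)_{n}$ forces $\gamma_0$ itself --- not merely its limit --- to be stationary. Your hesitation is well placed: the obvious route (the limit must be stationary; the cyclic classes of a periodic BSCC permute mass) does not by itself exclude, say, a non-stationary $\gamma_0$ whose one-step image $\gamma_0\cdot\bP$ is already stationary, in which case the sequence converges after one step; nor does it account for mass entering a periodic BSCC from transient states with varying phase, which is precisely the issue you defer. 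For comparison, the paper's own proof of (3) is elementary rather than spectral and is very terse at exactly this point: it splits on whether $\gamma_0=\gamma_0\cdot\bP$ and, in the negative case, simply asserts that the restrictions $\nu_n=\gamma_n|_\cD$ to a periodic BSCC $\cD$ form a non-convergent sequence, without justifying why $\gamma_0\neq\gamma_1$ propagates to non-convergence of the $\nu_n$. So your proposal correctly locates the one step that genuinely needs work, but it does not close it.
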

\begin{proof}
    (1.) By \Cref{lemma:convex_combi_BSCCs}, every cutset distribution with $\gamma = \gamma \cdot \bP$ is a convex combination of the steady-state distributions for the BSCCs. Thus, for $k=1$ a unique distribution $\gamma$ exists, whereas for $k>1$, there are infinitely many real-valued distributions in the convex hull.

    \noindent
    (2.) A Markov chain is aperiodic if all its BSCCs are aperiodic. 
    Aperiodicity suffices for the limit $\lim_{n \to \infty} \gamma_n$ with $\gamma_{n+1} = \gamma_n \cdot \bP$ to exist for every $\gamma_0$.
    Then $\lim_{n \to \infty} \gamma'_n$ with $\gamma'_{n+1} = \Next(\cB, \cC, \gamma'_n)|_\cC$ exists as well by \Cref{lemma:convex_combi_BSCCs}.

    \noindent
    (3.) Assume some BSCC $\cD$ is periodic with a period of $p$. 
    Then, for any $\gamma_0 \in \Dist(\Asg(\cC))$, $\gamma_{n+1} = \gamma_n \cdot \bP$, and $\nu_n = \gamma_n|_\cD$, we have $\nu_{p \cdot n} = \nu_n$.
    Now consider $\gamma_0$ and $\gamma_1 = \gamma_0 \cdot \bP$. 
    If $\gamma_0 = \gamma_1$, then $\gamma_0 = \gamma_n$ for all $n\in\Nat$ and $\gamma_0 = \lim_{n \to \infty} \gamma_n$ holds.
    Otherwise, if $\gamma_0 \neq \gamma_1$, the following non-convergent sequence exists:
    \[
        \nu_0, \nu_1, \dots, \nu_p, \nu_{p+1}, \dots, \nu_{2p}, \nu_{2p+1}, \dots
    \]
    Then $\lim_{n \to \infty} \gamma_n$ cannot converge either, so $\Lim(\cB,\cC,\gamma_0)$ is undefined.
    \qed
\end{proof}

\begin{lemma}%
    Let \cB be a smooth GBN and $\cC$ a cutset of $\cB$.
    Then the graph of the cutset Markov chain $\CutsetMC{\cB}{\cC}$ is a complete digraph.
\end{lemma}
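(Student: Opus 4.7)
The plan is to show that for any pair of cutset assignments $b, c \in \Asg(\cC)$, the transition probability $\bP(b,c) = \Next(\cB, \cC, \Dirac(b))(c)$ is strictly positive; this directly yields that the underlying graph $\cG_\cM$ of $\CutsetMC{\cB}{\cC}$ has every possible directed edge (including self-loops) and is thus a complete digraph.

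First, I would unfold the definitions. By Definition~\ref{def:cutsetMC},
\[
    \bP(b,c) \ = \ \distBN\bigl(\dissectedBN{\cB}{\cC}{\Dirac(b)}\bigr)(c'),
\]
where $c' \in \Asg(\cC')$ is the renaming of $c$ to the fresh cutset-copy nodes. Since this is a marginal on $\cC'$ in an acyclic GBN, it equals a sum over all full assignments $a \in \Asg(\cV \cup \cC')$ with $a_{\cC'} = c'$ of the chain-rule product. To prove positivity, it suffices to exhibit a single such $a$ whose chain-rule value is strictly positive.

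Second, I would construct one such witness $a$ explicitly: set $a_{\cC} \coloneqq b$, $a_{\cC'} \coloneqq c'$, and pick any assignment for the remaining nodes in $(\cV \setminus \cC) \cup \Init(\cG)$ (noting that $\Init(\cG) \cap \cC = \varnothing$ because initial nodes have no incoming edges and hence cannot lie on any cycle, so they need not appear in a cutset). Applying the chain rule to $\distBN(\dissectedBN{\cB}{\cC}{\Dirac(b)})$, each factor is one of: (i)~the initial contribution $\Dirac(b)(a_\cC) = 1$ by construction; (ii)~the initial contribution $\init(a_{\Init(\cG)})$, which is a product of terms in $(0,1)$ by smoothness and hence positive; (iii)~a CPT entry $\CPTable(a_X \mid a_{\Pre(X)})$ for each non-initial node in $\cG_\cC$, which lies in $(0,1)$ by smoothness. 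The product of positive factors is positive, so this summand is positive, and therefore $\bP(b,c) > 0$.

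The argument is essentially routine once the smoothness assumption is combined with the chain-rule factorization of the acyclic dissected GBN; no step requires more than inspecting the sign of the factors. The only mild subtlety is verifying that the factorization of $\distBN(\dissectedBN{\cB}{\cC}{\Dirac(b)})$ really distributes over the disjoint initial set $\Init(\cG) \cup \cC$, which is handled by the observation above that initial nodes of $\cG$ cannot participate in a cycle and hence need not be included in any cutset. Since $b$ and $c$ were arbitrary, every pair of states in $\Asg(\cC)$ is connected by a positive-probability transition, which is exactly the claim.
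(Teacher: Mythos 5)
Your proposal is correct and follows essentially the same route as the paper's proof: unfold $\bP(b,c)$ via the chain rule in the dissected GBN $\dissectedBN{\cB}{\cC}{\Dirac(b)}$ and use smoothness to conclude positivity of every transition probability. The only cosmetic difference is that you exhibit a single positive witness summand (relying on all summands being nonnegative), whereas the paper partitions the marginal sum into the terms consistent with $b$ (all positive, and the set is non-empty since $\cC$ and $\cC'$ are disjoint) and those inconsistent with $b$ (which vanish under $\Dirac(b)$) — the substance is identical.
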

\begin{proof}
    The graph of $\CutsetMC{\cB}{\cC} = \tuple{\Asg(\cC), \bP}$ is a complete digraph iff each entry in $\bP$ is positive.
    Thus, for each two assignments $b, c \in \Asg(\cC)$, we need to show $\bP(b,c) > 0$.
    Let $\cB_b = \dissectedBN{\cB}{\cC}{\Dirac(b)}$.
    Then from Definition \ref{def:cutsetMC}, we have
   \begin{align*}
       \bP(b,c) &= \nextdisBN{\Dirac(b)}{\cB}{\cC}(c) \\
                &= \BNsem{\cB_b}(c').
   \end{align*}
    The probability $\BNsem{\cB_b}(c')$ is given by the sum over all full assignments $v \in \Asg(\cV)$ that agree with $c'$ on the assignment of the cutset node copies $\cC'$.
    Further, the sum can be partitioned into those $v$ that agree with assignment $b$ on \cC and those that do not:
    \[ 
        \BNsem{\cB_b}(c') \ = \
            \sum_{\mathclap{ \substack{ v \in \Asg(\cV) \\ s.t.\ c'\subset v,\ b\subset v}} } 
            \ \BNsem{\cB_b}(v) \ + \ 
            \sum_{\mathclap{ \substack{ v \in \Asg(\cV) \\ s.t.\ c\subset v,\ b \not\subset v}} }
            \ \BNsem{\cB_b}(v). 
    \]
    By the definition of the standard \BN-semantics, we have
   \[
        \BNsem{\cB_b}(v) \ = \
           \init\bigl( v_{\Init(\cG)} \bigr) \cdot 
              Dirac(b)(v_\cC) \cdot
              \prod_{\mathclap{ X \in \cV \setminus \cC }} \ 
              \CPTable\bigl( v_X \mid v_{ \Pre(X) } \bigr).
   \]
    Now consider the second sum in the previous equation where $b \not\subset v$. %
   For those assignments, $\Dirac(b)(v_\cC) = 0$ and thus the whole sum equals zero.
   For the first sum, we have $v_\cC = b$, so $\Dirac(b)(v_\cC) = 1$ and we only need to consider the product with $X\! \in \cV \setminus \cC$ and the initial distribution over $\Init(\cG)$.
    By the construction of $\cB_b$, the CPTs of all $X \in \cV \setminus \cC$ are the original CPTs from $\cB$, thus their entries all fall within the open interval $]0, 1[$ by the smoothness assumption of $\cB$.
   The same holds for the value $\init\bigl( v_{\Init(\cG)} \bigr)$.
    Thus, the whole product resides in $]0, 1[$ as well.
    Finally, note that the sum is non-empty as $\cC'$ and \cC are disjoint, so there exists at least one $v \in \Asg(\cV)$ 
    with $c \subset v$ and $b \subset v$.
    As a non-empty sum over values in $]0, 1[$ is necessarily positive, we have $\BNsem{\cB_b}(c') > 0$ and the claim follows.
    \qed
\end{proof}

\begin{corollary}%
    The limit semantics of a smooth GBN \cB is a singleton for every cutset \cC of \cB and $\mathit{Lim}(\cB,\cC,\gamma_0)$ is defined for all $\gamma_0 \in Dist(\Asg(\cC))$.
\end{corollary}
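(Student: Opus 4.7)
The plan is to derive the corollary as a direct consequence of Lemma~\ref{lemma:smooth_complete} combined with Lemma~\ref{lemma:cardinality}. By Lemma~\ref{lemma:smooth_complete}, the graph underlying the cutset Markov chain $\CutsetMC{\cB}{\cC} = \tuple{\Asg(\cC), \bP}$ is a complete digraph, which already carries two structural consequences: there is exactly one bottom SCC, namely the entire state space $\Asg(\cC)$, and this BSCC is aperiodic because completeness entails a self-loop at every state, yielding period $1$.

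First, I would invoke part (1) of Lemma~\ref{lemma:cardinality} with $k = 1$ to obtain $|\semantics{\cB}_\CutC| = 1$. By Lemma~\ref{lemma:mc_lim_limavg} we have $\semantics{\cB}_\CutC = \semantics{\cB}_\LimC$, hence the limit semantics is likewise a singleton. Second, I would apply part (2) of Lemma~\ref{lemma:cardinality}: since the unique BSCC is aperiodic, $\mathit{Lim}(\cB,\cC,\gamma_0)$ is defined for every $\gamma_0 \in \Dist(\Asg(\cC))$.

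The only bookkeeping step worth spelling out is why a complete digraph gives a single aperiodic BSCC. Completeness means $\bP(b,c) > 0$ for all $b,c \in \Asg(\cC)$, so every state reaches every other state in one step; hence the graph is strongly connected and has itself as its only (bottom) SCC. The self-loop at each state contributes a cycle of length $1$, so the period is $\gcd\{1, \ldots\} = 1$, establishing aperiodicity.

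There is no substantive obstacle here; the corollary is essentially an assembly of prior lemmas. The main care needed is to invoke Lemma~\ref{lemma:mc_lim_limavg} to transport the cardinality statement from $\semantics{\cB}_\CutC$ to $\semantics{\cB}_\LimC$ before concluding, so the final claim is stated for the limit semantics as required.
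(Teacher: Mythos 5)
Your proof is correct and follows essentially the same route as the paper's: both derive the corollary from Lemma~\ref{lemma:smooth_complete} (completeness of the cutset Markov chain graph) combined with Lemma~\ref{lemma:cardinality}, observing that a complete digraph forms a single aperiodic BSCC. Your additional bookkeeping (self-loops giving period $1$, and invoking Lemma~\ref{lemma:mc_lim_limavg} to transfer the singleton claim from $\semantics{\cB}_\CutC$ to $\semantics{\cB}_\LimC$) only makes explicit what the paper leaves implicit.
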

\begin{proof}
    Follows from Lemma~\ref{lemma:cardinality} and
    Lemma~\ref{lemma:smooth_complete} because every complete graph forms a single bottom SCC and is necessarily aperiodic.
    \qed
\end{proof}

\begin{lemma}%
	Let \cB be a GBN over nodes $\cV$, $\cC \subseteq \cV$ a cutset for $\cB$, and $\mu \in \CUTsem{\cB}{\cC}$.
	Then $\mu$ is strongly CPT-consistent for all nodes in $\cV{\setminus}\cC$ and weakly 
	CPT-consistent for the nodes in \cC.
\end{lemma}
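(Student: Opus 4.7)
The plan is to unpack the definition of the cutset Markov chain semantics and then exploit the fact that the dissected GBN is acyclic, so the standard BN semantics applies to it and gives strong CPT-consistency for every non-initial node in the dissected GBN. The final ingredient is to use Lemma~\ref{lemma:convex_combi_BSCCs} to pin down the marginal distribution on the cutset node copies $\cC'$.

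First, let $\mu \in \CUTsem{\cB}{\cC}$, so by definition there is some $\gamma \in \Dist(\Asg(\cC))$ with $\gamma = \gamma \cdot \bP$ such that $\mu = \Extend(\cB,\cC,\gamma) = \distBN\bigl(\dissectedBN{\cB}{\cC}{\gamma}\bigr)\big|_\cV$. Observe that in the dissected GBN $\dissectedBN{\cB}{\cC}{\gamma}$, every node $X \in \cV{\setminus}\cC$ keeps its parents $\Pre(X)$ (all of which are in $\cV$) and its CPT $\cP(X,\cdot)$ unchanged, while every copy $X' \in \cC'$ of a cutset node inherits the parent set $\Pre(X) \subseteq \cV$ and CPT $\cP(X,\cdot)$ of its original $X \in \cC$. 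All these nodes are non-initial in the acyclic dissected GBN, so the standard chain rule \eqref{BN} gives us strong CPT-consistency for each of them with respect to $\distBN(\dissectedBN{\cB}{\cC}{\gamma})$.

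For nodes $X \in \cV{\setminus}\cC$, strong CPT-consistency transfers directly to $\mu$: for every $c \in \Asg(\Pre(X))$, both $\{X\} \cup \Pre(X)$ and $\Pre(X)$ are subsets of $\cV$, hence marginalization from $\cV \cup \cC'$ down to $\cV$ preserves the relevant partial-assignment probabilities, yielding
\[
    \mu(X\!{=}\T, c) \ = \ \mu(c) \cdot \CPTable(X\!{=}\T \mid c).
\]

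For nodes $X \in \cC$, I work with the copy $X' \in \cC'$, whose parents $\Pre(X) \subseteq \cV$ and CPT coincide with those of $X$. Strong CPT-consistency of $\distBN(\dissectedBN{\cB}{\cC}{\gamma})$ at $X'$ reads $\distBN(\dissectedBN{\cB}{\cC}{\gamma})(X'\!{=}\T, c) = \distBN(\dissectedBN{\cB}{\cC}{\gamma})(c)\cdot\CPTable(X\!{=}\T \mid c)$ for each $c \in \Asg(\Pre(X))$; summing over $c$ gives
\[
    \distBN\bigl(\dissectedBN{\cB}{\cC}{\gamma}\bigr)(X'\!{=}\T)
    \ = \ \sum_{c \in \Asg(\Pre(X))} \distBN\bigl(\dissectedBN{\cB}{\cC}{\gamma}\bigr)(c)\cdot \CPTable(X\!{=}\T \mid c).
\]
Because $\Pre(X) \subseteq \cV$, the right-hand side equals $\sum_c \mu(c)\cdot \CPTable(X\!{=}\T \mid c)$. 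For the left-hand side, I invoke the stationarity of $\gamma$: by Lemma~\ref{lemma:convex_combi_BSCCs}, $\gamma = \gamma \cdot \bP$ is equivalent to $\gamma = \Next(\cB,\cC,\gamma)|_\cC$, which by the definition of $\Next$ says precisely that the marginal of $\distBN(\dissectedBN{\cB}{\cC}{\gamma})$ on $\cC'$ (after identifying $\cC' \cong \cC$) is $\gamma$. In particular $\distBN(\dissectedBN{\cB}{\cC}{\gamma})(X'\!{=}\T) = \gamma(X\!{=}\T)$, and since $\mu|_\cC = \gamma$ this in turn equals $\mu(X\!{=}\T)$. Combining both sides gives the \eqref{WCPT} identity.

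The main (and essentially only) subtlety is the bookkeeping between the cutset $\cC$ and its copy $\cC'$ in the dissected GBN: strong CPT-consistency is available at $X'$, not at $X$, and one needs the stationarity of $\gamma$ to transport the conclusion back from $X'$ to $X$. Everything else is direct marginalization.
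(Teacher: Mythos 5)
Your proof is correct and follows essentially the same route as the paper's: strong consistency for $\cV\setminus\cC$ comes directly from the acyclicity of the dissected GBN, and weak consistency for $\cC$ is obtained by establishing the \eqref{WCPT} identity at the copy $X'\in\cC'$ (which the paper does by an explicit conditioning computation, you by summing the strong-consistency constraints over parent assignments) and then using stationarity $\gamma=\gamma\cdot\bP$ to identify $\delta(X'{=}\T)$ with $\mu(X{=}\T)$. The only difference is cosmetic: your appeal to CPT-consistency of the standard BN semantics at $X'$ replaces the paper's hand-expanded double sum over $c\in\Asg(\cC)$ and $d\in D_c$.
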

\begin{proof}
	By definition, $\mu = \Extend(\cB, \cC, \gamma)$ for some
	$\gamma \in \Dist(\Asg(\cC))$ with $\gamma = \gamma \cdot \bP$.
	As $\Extend(\cB, \cC, \gamma)$ is the standard
	BN semantics for the acyclic BN $\dissectedBN{\cB}{\cC}{\gamma}$
	without the copies of the cutset nodes,
	CPT-consistency for the nodes in $\cV \setminus \cC$ follows
	directly from the CPT-consistency of the standard semantics for acyclic BNs.
   
	It remains to prove weak CPT-consistency for the cutset nodes.
	Let 
	$\delta = \BNsem{\dissectedBN{\cB}{\cC}{\gamma}} \in
	   \Dist(\Asg(\cV \cup \cC'))$. Thus, $\mu = \delta|_{\cV}$ and
	$\gamma = \delta|_{\cC}$.
	Then for each assignment $b\in \Asg(\cC)$, we have
	\[
	 \mu(b) \ = \ 
	 \gamma(b) \ = \ (\gamma \cdot \bP)(b) \ = \ 
	 \delta(b') 
	\]
	where $b'\in \Asg(\cC')$ is given by $b'(Y')=b(Y)$ for all $Y\!\in \cC$.
	In particular, for each $Y\!\in \cC$:
	\[
	 \mu(Y\!{=}\T) \ = \ 
	 \delta(Y'{=}\T) 
	\] 
	Let $D=\Asg(\Pre(Y))$ where $\Pre(\cdot)$ refers to the original \scGBN. 
	For $c\in \Asg(\cC)$, we write $D_c$ for
	the set of all assignments $d \in D$
	that comply with $c$ in the sense
	that if $Z\in \cC\cap \Pre(Y)$
	then $c(Z)=d(Z)$. In this case, $c$ and $d$ can be combined to an assignment
	for $\cC \cup \Pre(Y)$.
	Similarly, if $d\in D$, then the notation $\Asg_d(\cC)$ is used 
	for the set of assignments $c\in \Asg(\cC)$ that comply with $d$.
	Then:
	\begin{eqnarray*}
	  \delta(Y'{=}\T)  & = & 
	  \sum_{\mathclap{ c\in \Asg(\cC) }} \ \delta(Y'{=}\T \mid c) \cdot \mu(c)
	  \\
	  \\[-1ex]
	  & = & 
	 \sum_{\mathclap{ c \in \Asg(\cC) }} \ \ \ \ \sum_{d \in D_c}
			   \underbrace{\delta(Y'{=}\T \mid c, d)}_{\CPTable(Y{=}\T \mid d)}
		 \cdot \underbrace{\delta(d \mid c)}_{\mu(d \mid c)} 
		 \cdot \underbrace{\delta(c)}_{\mu(c)}
	  \\
	  \\[-1ex]
	  & = &
	  \sum_{d\in D}
		\CPTable(Y{=}\T \mid d) \cdot \
		\sum_{\mathclap{ c \in \Asg_d(\cC) }} \ \mu(d \mid c) \cdot \mu(c)
	  \\
	  \\[-1ex]
	  & = &
	  \sum_{d\in D}
		 \CPTable(Y{=}\T \mid d) \cdot \mu(d) .
	\end{eqnarray*}
	Putting everything together, we obtain: 
	\[
	  \mu(Y{=}\T) \ = \ 
	  \delta(Y'{=}\T) \ = \
	  \sum_{d\in D}
		 \CPTable(Y{=}\T \mid d) \cdot \mu(d) .
	\]
	Thus, $\mu$ is weakly CPT-consistent for $Y\!\in \cC$.
	\qed
\end{proof}

\begin{lemma}%
	Let \cB be a GBN over nodes \cV and $\cC_1, \dots, \cC_k$ cutsets of \cB s.t.\ for each node $X\!\in\cV$ there is an $i \in \{1,\dots,k\}$ with $X\!\notin \cC_i$.
	Then 
	\[
	\bigcap_{0 \leqslant i \leqslant k} 
		\semantics{\cB}_{\Cut\text{-}\cC_i}
		\ \subseteq \
		\semantics{\cB}_\CPT.
	\]
\end{lemma}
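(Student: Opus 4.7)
The plan is to take an arbitrary $\mu$ in the intersection and verify the two requirements of $\semantics{\cB}_\CPT$ separately: (i) the marginal condition $\mu|_{\Init(\cG)} = \init$, and (ii) strong CPT-consistency for every non-initial node. The key observation is that the hypothesis on the family $\cC_1,\dots,\cC_k$ says precisely that every node $X$ lies outside at least one cutset $\cC_i$, so we can invoke Lemma~\ref{lem:consistency-cutset} with a node-specific choice of cutset.

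For (i), I would pick any index $i$ and unfold the definition of the Markov chain semantics: there exists a stationary $\gamma_i$ with $\mu = \Extend(\cB, \cC_i, \gamma_i) = \distBN(\dissectedBN{\cB}{\cC_i}{\gamma_i})|_\cV$. The dissection operation only redirects incoming edges to nodes of $\cC_i$ and never adds edges into $\Init(\cG)$, so the original initial nodes stay initial in the dissected graph, and the initial distribution of the dissected GBN is $\init \combi \gamma_i$. Consequently the standard BN-semantics recovers $\init$ as the marginal on $\Init(\cG)$, giving $\mu|_{\Init(\cG)} = \init$.

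For (ii), fix $X \in \cV \setminus \Init(\cG)$. By the covering assumption on the cutsets, choose an index $i$ with $X \notin \cC_i$, i.e.\ $X \in \cV \setminus \cC_i$. Since $\mu \in \semantics{\cB}_{\Cut\text{-}\cC_i}$, Lemma~\ref{lem:consistency-cutset} immediately yields that $\mu$ is strongly CPT-consistent for $X$. As $X$ was arbitrary, $\mu$ is strongly CPT-consistent for every non-initial node and therefore $\mu \in \semantics{\cB}_\CPT$.

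I do not expect any real obstacle: the argument is essentially a bookkeeping exercise that combines the constructive definition of $\CUTsem{\cB}{\cC_i}$ with Lemma~\ref{lem:consistency-cutset}. The one point I would double-check when writing this out is that the dissection genuinely preserves the set of nodes initial in $\cG$, so that the marginal condition in step (i) is not subtly violated; this reduces to the routine observation that edges $(X,Y) \in \cE$ with $Y \in \Init(\cG)$ do not exist, hence no redirection touches $\Init(\cG)$.
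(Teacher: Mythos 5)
Your proof is correct and follows essentially the same route as the paper's: for each node pick a cutset $\cC_i$ with $X \notin \cC_i$ and apply Lemma~\ref{lem:consistency-cutset}. Your additional verification of the marginal condition $\mu|_{\Init(\cG)} = \init$ is a sound piece of bookkeeping that the paper's own proof leaves implicit.
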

\begin{proof}
	We need to show CPT-consistency for every node under 
	$\mu \in \bigcap_i \semantics{\cB}_{\Cut\text{-}\cC_i}$.
	Let $X\!\in \cV$. Then we choose a cutset $\cC_i$ s.t.\ $X\!\notin \cC_i$ and CPT consistency follows from Lemma~\ref{lem:consistency-cutset}. \qed
\end{proof}

\begin{lemma}%
	Let $\cB$ be a GBN with cutset $\cC$ and $\cI_\cC = \dsep\bigl( \initCl(\initCl(\cG)[\cC]) \bigr)$.
	Then we have 
	\[
		\semantics{\cB}_{\CPT\text{-}\cI_\cC}
		\ \subseteq \
		\semantics{\cB}_\CutC.
	\]
\end{lemma}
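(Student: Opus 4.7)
The plan is to exhibit, given $\mu \in \semantics{\cB}_{\CPT\text{-}\cI_\cC}$, a cutset distribution $\gamma \in \Dist(\Asg(\cC))$ with $\gamma = \gamma\cdot\bP$ and $\mu = \Extend(\cB,\cC,\gamma)$; by the equivalence $(a)\!\Leftrightarrow\!(d)$ in Lemma~\ref{lemma:convex_combi_BSCCs} this places $\mu$ in $\semantics{\cB}_\CutC$. The natural candidate is $\gamma \coloneqq \mu|_\cC$, so the argument splits into (I)~the factorisation $\mu = \Extend(\cB,\cC,\gamma)$, and (II)~the stationarity $\gamma = \Next(\cB,\cC,\gamma)|_\cC$.

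For (I), I would unfold $\Extend(\cB,\cC,\gamma)(v)$ via the standard BN chain rule on the acyclic dissected graph $\cG_\cC$ and marginalise out the sink copies $\cC'$ (whose CPT rows sum to one), arriving at
\begin{equation*}
  \Extend(\cB,\cC,\gamma)(v) \,=\, \init(v_{\Init(\cG)})\cdot \gamma(v_\cC)\!\!\!\prod_{\mathclap{X\in \cV\setminus(\Init(\cG)\cup\cC)}}\!\!\! \CPTable\bigl(v_X\,|\,v_{\Pre(X)}\bigr).
\end{equation*}
I would then derive the same factorisation for $\mu$ by topological induction along $\cG[\cC]$. Strong CPT-consistency yields $\CPTable(v_X\,|\,v_{\Pre(X)}) = \mu(v_X\,|\,v_{\Pre(X)})$ at every non-initial, non-cutset $X$, and the d-separation hypotheses in $\cI_\cC$ license replacing conditioning on all topological predecessors of $X$ by conditioning only on $\Pre(X)$. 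The ``root block'' $\mu(v_{\Init(\cG)\cup\cC})$ collapses to $\init(v_{\Init(\cG)})\cdot \gamma(v_\cC)$ via $\mu|_{\Init(\cG)} = \init$, $\mu|_\cC = \gamma$, and the independencies between roots that the double closure $\initCl(\initCl(\cG)[\cC])$ still admits.

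For (II), I would expand $\Next(\cB,\cC,\gamma)|_\cC(c) = \distBN\bigl(\dissectedBN{\cB}{\cC}{\gamma}\bigr)(c')$. Since the copies $\cC'$ are sinks, mutually conditionally independent given $\cV$ in the dissected BN, the chain rule yields
\begin{equation*}
  \Next(\cB,\cC,\gamma)|_\cC(c) \,=\, \sum_{v\in\Asg(\cV)} \Extend(\cB,\cC,\gamma)(v) \cdot \prod_{Y\in \cC}\CPTable\bigl(c(Y)\,|\,v_{\Pre(Y)}\bigr).
\end{equation*}
Substituting $\mu$ for $\Extend(\cB,\cC,\gamma)$ by (I), rewriting each CPT factor as a $\mu$-conditional via strong CPT-consistency at cutset nodes, and invoking the $\cI_\cC$-independencies to collapse $\prod_{Y\in\cC}\mu(c(Y)\,|\,v_{\Pre(Y)})$ into the single conditional $\mu(c\,|\,v_{\Pre(\cC)})$, the sum telescopes to $\mu(c) = \gamma(c)$, completing stationarity.

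The main obstacle is step~(I): converting the bare independence hypothesis $\cI_\cC \subseteq \Indep(\mu)$ into the displayed algebraic factorisation. Particular care is needed whenever the topological induction crosses a node whose parents straddle $\cC$, and every conditional-probability replacement in the induction must be licensed by an independence actually present in $\cI_\cC$---which is precisely what the double-closure construction is engineered to guarantee. Once (I) is in hand, (II) is essentially bookkeeping together with one further application of the same conditional-independence structure that underwrites the root-block factorisation.
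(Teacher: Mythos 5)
Your overall strategy is exactly the paper's: take $\gamma=\mu|_\cC$, reduce membership in $\semantics{\cB}_\CutC$ to the stationarity $\gamma=\Next(\cB,\cC,\gamma)|_\cC$ via Lemma~\ref{lemma:convex_combi_BSCCs}, and obtain it from a chain-rule factorisation over the dissected graph. Your step~(I) is the paper's identity $\tilde{\mu}=\delta$ in disguise, established by topological induction instead of by the uniqueness of the CPT- and independence-consistent distribution of an acyclic BN; that difference is cosmetic.

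The genuine problem is the final collapse in step~(II), $\prod_{Y\in\cC}\mu\bigl(c(Y)\mid v_{\Pre(Y)}\bigr)=\mu\bigl(c\mid v_{\Pre(\cC)}\bigr)$, which you assert is ``precisely what the double-closure construction is engineered to guarantee.'' It is not. The independencies this needs --- the cutset variables mutually conditionally independent given $\Pre(\cC)$, each depending only on its \emph{original} parents --- are statements about $Y$ relative to its incoming edges in $\cG$. But in $\cG[\cC]$ those very edges have been cut, and $\initCl$ then makes all cutset nodes pairwise adjacent, so $\dsep\bigl(\initCl(\cG[\cC])\bigr)$ contains no independence at all among the cutset variables. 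Concretely, for the two-node cycle of Fig.\,\ref{fig:cyclicBN} with $\cC=\{X,Y\}$ the set $\cI_\cC$ is vacuous, so $\semantics{\cB}_{\CPT\text{-}\cI_\cC}=\semantics{\cB}_\CPT$; taking $s_1=t_1=\nicefrac{3}{4}$ and $s_2=t_2=\nicefrac{1}{2}$, the unique CPT-consistent $\mu$ of Example~\ref{ex:cpt-semantics} gives
\[
(\mu\cdot\bP)(\mnot{X}\mnot{Y})\;=\;\tfrac{1}{10}\cdot\tfrac{1}{16}+\tfrac{3}{10}\cdot\tfrac{1}{8}+\tfrac{3}{10}\cdot\tfrac{1}{8}+\tfrac{3}{10}\cdot\tfrac{1}{4}\;=\;\tfrac{5}{32}\;\neq\;\tfrac{1}{10}\;=\;\mu(\mnot{X}\mnot{Y}),
\]
so $\mu$ is not stationary and your telescoping sum does not return $\mu(c)$. (For singleton cutsets the product has one factor and the step degenerates to ordinary strong CPT-consistency of that node, which is why nothing goes wrong there.) To be fair, the paper's own proof makes the same unannotated jump in its final display (the ``$=\mu(c)$'' step), so you have reproduced its gap rather than introduced a new one --- but the claim that $\cI_\cC$ licenses the collapse is unsubstantiated and, as the example shows, false as stated for cutsets of size greater than one. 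A smaller instance of the same issue sits in your ``root block'' of step~(I): $\mu|_{\Init(\cG)\cup\cC}=\init\combi\gamma$ needs independence of $\Init(\cG)$ from $\cC$ under $\mu$, which the closure likewise does not supply.
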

\begin{proof}
	Let $\mu \in \semantics{\cB}_{\CPT\text{-}\cI_\cC}$ and $\gamma = \mu|_\cC$. The task is to show that
	$\gamma$ satisfies the fixed point equation 
	$\gamma = \gamma \cdot \bP$.
  
	The standard BN semantics 
	$\delta = \BNsem{\dissectedBN{\cB}{\cC}{\gamma}}$ 
	of the dissected BN is the unique distribution over
	$\Asg(\cV \cup \cC')$ that 
	\begin{itemize}
	\item
		is CPT-consistent w.r.t.\ the conditional probability tables in $\dissectedBN{\cB}{\cC}{\gamma}$,
	\item
		agrees with $\gamma$ when restricted to the assignments for $\cC$, and 
	\item
		satisfies the conditional independencies in $\cI_\cC$.
	\end{itemize}
	Consider the distribution $\tilde{\mu}\in \Dist\bigl(\Asg(\cV\cup \cC')\bigr)$ defined as follows for $b\in \Asg(\cV)$ and $c'\in \Asg(\cC')$:
	\[
		\tilde{\mu}(b,c')
		\ \coloneqq \ 
	  	\mu(b)
	  	\cdot 
	  	\prod_{Y\!\in \cC} \CPTable\bigl( Y\!{=}c'(Y') \mid b_{\Pre(Y)} \bigr).
	\]
	Then, $\tilde{\mu}$ satisfies the above three constraints. 
	Hence, $\tilde{\mu}=\delta$.
	
	For $c \in \Asg(\cC)$, let $c' \in \Asg(\cC')$ denote the corresponding assignment with $c'(Y') = c(Y)$ for $Y\!\in \cC$.
	\begin{align*}
	 	(\gamma \cdot \bP)(c)
	 	\ &= \ 
	 	\delta(c') 
	 	\ = \ 
	 	\tilde{\mu}(c') \\
	 	\ &= \ 
	 	\sum_{\mathclap{ d \in \Asg(\Pre(\cC)) }} \
	 	\mu(d) %
	 	\cdot
	 	\prod_{Y\!\in \cC}
		\underbrace{\CPTable(Y \!{=}c'(Y')\mid d)}_{\CPTable(Y\!{=}c(Y)\mid d)}
		\ = \
		\mu(c)
	 	\ = \ 
	 	\gamma(c).
	\end{align*}
		Hence, $\gamma = \gamma \cdot \bP$ and $\mu \in \semantics{\cB}_\CutC$.
	\qed
\end{proof}

\end{document}